\def\argmin{{\arg\min}}
\def\bw{\mathbf{w}}
\def\bx{\mathbf{x}}
\def\bbR{\mathbb{R}}
\def\cf{{\it cf.}}
\def\minimize{{\sf minimize}}
\def\norm#1{\left\|#1\right\|}
\def\prox{{\sf prox}}
\def\reals{{\bbR}}
\begin{document}

\title{An inexact subsampled proximal Newton-type method for large-scale machine learning}

\author{\name Xuanqing Liu \email xqliu@ucdavis.edu \\
       \addr Department of Computer Science\\
       University of California\\
       Davis, CA 95616
       \AND
       \name Cho-Jui Hsieh \email chohsieh@ucdavis.edu \\
       \addr Department of Statistics and Computer Science\\
       University of California\\
       Davis, CA 95616
       \AND
       \name Jason D. Lee \email jasonlee@marshall.usc.edu\\
       \addr Marshall School of Business\\
       University of Southern California\\
       Los Angeles, CA 90089
       \AND
       \name Yuekai Sun \email yuekai@umich.edu\\
       \addr Department of Statistics\\
       University of Michigan\\
       Ann Arbor, MI 48109
       }

\editor{}

\maketitle

\begin{abstract} 
We propose a fast proximal Newton-type algorithm for minimizing regularized finite sums that returns an $\epsilon$-suboptimal point in $\tilde{\mathcal{O}}(d(n + \sqrt{\kappa d})\log(\frac{1}{\epsilon}))$ FLOPS, where $n$ is number of samples, $d$ is feature dimension, and $\kappa$
is the condition number. As long as $n > d$, the proposed method is more efficient than state-of-the-art accelerated stochastic first-order methods for non-smooth regularizers which requires $\tilde{\mathcal{O}}(d(n + \sqrt{\kappa n})\log(\frac{1}{\epsilon}))$ FLOPS. The key idea is to form the subsampled Newton subproblem in a way that preserves the finite sum structure of the objective, thereby allowing us to leverage recent developments in stochastic first-order methods to solve the subproblem. 
Experimental results verify that the proposed algorithm outperforms previous algorithms for $\ell_1$-regularized
 logistic regression on real datasets. 
\end{abstract}

\begin{keywords}
  Newton-type Method, Leverage Sampling
\end{keywords}

\section{Introduction}
\label{sec:intro}

We consider optimization problems of the form
\begin{equation}
\minimize_{w\in\reals^d}\ F(w)\triangleq \underbrace{\sum_{i=1}^n f_i(x_i^\intercal w)}_{f(w)} + R(w),
\label{eq:composite-minimization}
\end{equation}
where the $f_i$'s are smooth, convex loss functions, and $R:\reals^d\to\reals$ is a convex but possibly non-smooth regularizer, we also require the smooth part $f(w)$ to be strongly convex and Lipschitz continuous. Such problems are ubiquitous in machine learning applications, and concrete instances include (regularized) linear-regression and logistic regression.

For \eqref{eq:composite-minimization} with smooth regularizer, 
most of the current state-of-the-art algorithms are accelerated stochastic first-order methods,  which need $\mathcal{O}(d(n + \sqrt{\kappa n})\log(\frac{1}{\epsilon}))$ floating point operations (FLOP's) to return an $\epsilon$-suboptimal point (\cf\ \citep{allen2016katyusha}). A notable exception is LiSSA and its variants by \citep{Agarwal2016}, which is a Newton-type method that only needs $\tilde{\mathcal{O}}(d(n + \sqrt{\kappa d})\log(\frac{1}{\epsilon}))$ FLOPS to return an $\epsilon$-suboptimal point, by convention we use $\tilde{\mathcal{O}}$ to suppress $\log$ factors of $n$, $d$, $\kappa$ etc. As long as $n > d$, LiSSA is more efficient than accelerated stochastic first-order methods. However, it only handles smooth regularizers. 
$\tilde{\mathcal{O}}(d(n + \sqrt{\kappa d})\log(\frac{1}{\epsilon}))$ using second order methods for problems
with non-smooth regularizers. 

In this paper, we propose a Newton-type method for solving \eqref{eq:composite-minimization} that has 
fast rate of convergence. Our convergence rate matches state-of-the-art 
stochastic first-order methods and LiSSA for smooth regularizers, but also accommodates non-smooth regularizers. 
The basic idea is to combine a proximal Newton-type methods with a subsampled Hessian approximation
that preserve the finite sum structure of the smooth part of the objective 
in the Newton subproblem by subsampling. This allows us to leverage 
state-of-the-art stochastic first-order methods to solve the subproblem. 
As we shall see, the proposed method matches the efficiency of LiSSA: 
it needs $\tilde{\mathcal{O}}(d(n + \sqrt{\kappa d})\log(\frac{1}{\epsilon}))$ FLOPS 
to return an $\epsilon$-suboptimal point. Thus, as long as $n > d$, the proposed 
method is more efficient than accelerated stochastic first-order methods. 

%We implement our algorithm and compare it with existing first order methods and second order methods
%on $\ell_1$-regularized logistic regression problems. Results show that the proposed algorithm is faster
%(in terms of running time) than 
%proximal Newton method~\citep{yuan2012improved} (implemented in LIBLINEAR~\citep{fan2008liblinear}), 
%SVRG~\citep{johnson2013accelerating} and SAGA~\citep{Defazio2014} on two of the three datasets we tested. 

The rest of the paper is outlined as follows. We present our main algorithm in Section~\ref{sec:proxnewton} and introduce some related work in Section~\ref{sec:related}. The theoretical analysis is presented
in Section \ref{sec:convergence} and experimental results are in Section \ref{sec:experiments}. 

%\begin{table}[htbp]
%\caption{\label{tb:notations}Notations}
%\centering
%\begin{tabular}{p{0.2\textwidth}p{0.7\textwidth}}
%\toprule
%$[n]$ & Index set $\{1,2,\dots,n\}$\\
%$n$, $d$, $b$ & \#Data, dimension of $w$, sample size ($=|\mathcal{B}|$)\\
%$B_t$ & Subsampled Hessian at $t$-th iteration\\
%$F(w)$, $f(w)$, $f_t^{\text{sub}}(w)$ & Objective function, its smooth part, 

%\bottomrule
%\end{tabular}
%\end{table}

\section{Subsampled Proximal Newton-type methods}
\label{sec:proxnewton}

%\RestyleAlgo{ruled}
\begin{algorithm*}[tb]
\caption{Fast proximal Newton\label{alg:fast_newton_alg}}
\begin{algorithmic}[1]
\STATE{\bfseries Input:} Data pairs $(x_i, y_i)|_{i=1}^n$, $\theta_t\in (0,1]$, $\beta_t=\min\{\theta_t, \frac{1}{3} \}$; Desired precision $\epsilon$.
\STATE{\bfseries Output:} $w^*=\argmin_w F(w)$
\STATE $w_0=\mathbf{0}$;
\FOR{\texttt{t}$=0,1,\dots,N-1$}
%\STATE $\tilde{w}=w_t$;
%\STATE $B$: a subset of $b=O(d\log d/(\frac{\beta_t}{1-\beta_t)^2)$ samples  by leverage score sampling {\color{red} (point to where we define leverage score sampling)}. 
\STATE Sample a subset $B\subseteq [n]$ by leverage score sampling defined in \eqref{eq:leverage_score_def}. We need $b=\mathcal{O}(d\log d / (\frac{\beta}{1-\beta})^2)$ samples;
%(Theorem \ref{th:sampling-complexity}), such that Assumption \ref{ass:dennis-more} and \ref{ass:dual_norm} holds for $\beta_t$;
\STATE Calculate the subsampled Hessian $B_t$ by \eqref{eq:subsampled_hessian};
%\STATE Calculate an approximate Hessian $H_t$ by subsampling: $H_t=\frac{1}{1+\epsilon}X^\intercal\sqrt{D}S^2\sqrt{D}X$, where $\epsilon$ is a small value that controls approximate error;
\STATE Solve the quasi-Newton subproblem \eqref{eq:subpb} approximately (using Catalyst+SVRG) to ensure
the convergence condition: 
\begin{equation*}
    \|r_t\|_{B_t}^* \le \theta_t\|v_t\|_{B_t}, 
\end{equation*}
where $r_t$ is the gradient residual defined in \eqref{eq:grad_resid}, $v_t=w_t^+-w_t$ and $w_t^+$ is the solution of subproblem defined in \eqref{eq:subpb}. 
In Section \ref{sec:inner_analysis} we show it only takes constant iterations to ensure this stopping condition.
%outer loop convergence condition: $\|r_t\|_{B_t}^* \le \theta_t\|v_t\|_{B_t}$ where $r_t$ is the gradient residual defined in \eqref{eq:grad_resid};
% \begin{equation*}
% \begin{aligned}
%   w^+ \approx \arg\min_{w} \nabla f(\tilde{w})^{\intercal} (w-\tilde{w}) &+
% \frac{1}{2}(w-\tilde{w})^{\intercal} H_t (w-\tilde{w})\\
% & + R(w)
% \end{aligned}
% \end{equation*}

%More precisely, we require the subproblem solver to return a point that approximately satisfies the first-order optimality condition:\[
%r_t \in \nabla f(w_t) - H_t(w_t^+ - w_t) + \partial R(w_t^+),
%\]
%where $r_t$ is the residual of the first-order optimality condition of the subproblem. We require $\|r_t\|_{H_t}^* \le \theta_t\|v_t\|_{H_t}$, where $\|\cdot\|_{H_t}$ is the norm induced by $H_t$, $\|\cdot\|_{H_t}^*$ is its dual norm and $\theta_t\in(0,1)$ is a {\it forcing sequence} that controls the inexactness of the search direction.
\STATE Choose step size $\eta_t$ by Theorem \ref{th:converge_phaseI} (for Phase I) and \ref{th:linear-quadratic} (for Phase II);
\STATE Update iterate: $w_{t+1}=w_t+\eta_tv_t$;
\IF{ $F(w_{t+1})-F^*\le \epsilon$ (can be checked by Corollary \ref{co:lambda_subopt})}  %$\lambda_t\le
%\IF{(Corollary \ref{co:lambda_subopt}) $F(w_{t+1})-F^*\le \epsilon$}  %$\lambda_t\le \sqrt{\frac{\epsilon}{1+\beta_t}}$}
\STATE{Break};
\ENDIF
\ENDFOR
\STATE Return $w_N$.
\end{algorithmic}
\end{algorithm*}

The proposed method is, at its core, a proximal Newton-type method. 
The search directions are found by solving the sub-problem 
\begin{equation}
    w_t^+ \approx \argmin_{w} \underbrace{\nabla f (w_t)^{\intercal} (w-w_t) + \frac{1}{2} (w-w_t)^{\intercal} B_t (w-w_t)+ R(w)}_{f^{\text{sub}}_t(w)},\quad v_t\triangleq w_t^+-w_t,    
    \label{eq:subpb}
\end{equation}
where $f(\cdot)$ is the smooth part of composite function defined in \eqref{eq:composite-minimization}, $B_t \succ 0$ is an positive definite approximation to the Hessian. We see that the objective of the subproblem is obtained by replacing the smooth part of the objective by a quadratic approximation. For this reason, the algorithm is also called a {\it successive quadratic approximation method} \citep{byrd2013inexact}. If there is no regularizer, we see that the method reduces to a Newton-type method for minimizing the smooth part of the objective.

From a theoretical perspective, proximal Newton-type methods are known to inherit the desirable convergence properties of Newton-type methods for minimizing smooth functions. Unfortunately, the high cost of solving \eqref{eq:subpb} has prevented widespread adoption of the methods for large-scale machine learning applications.  

% Here 
%In this paper, we propose to approximate the Hessian matrix by subsampling. 
In this paper, we combine sub-sampling and recent advances in stochastic first-order methods to solve the sub-problem efficiently. For problem~\eqref{eq:composite-minimization}, the Hessian can be written as
\begin{equation}
 \nabla^2 f(w) = \sum_{i=1}^n \nabla^2 f_i(w^{\intercal} x_i) = \sum_{i=1}^n f_i''(w^{\intercal} x_i) x_i  x_i^{\intercal}. 
 \label{eq:hessian}
\end{equation}
%In this paper, we propose to approximate the Hessian matrix by subsampling. 
Let $B\subseteq [n]$ be a 
random sample consists $b$ training instances, we define $B_t$ to be the sub-sampled Hessian:
\begin{equation}
  B_t = \frac{1}{1+\epsilon} \sum_{i\in \mathcal{B}}  \frac{1}{p_i} f_i''(w^{\intercal} x_i) x_i x_i^\intercal,  
  \label{eq:subsampled_hessian}
\end{equation}
where $p_i$ is the sampling probability for $i$-th instance and $\epsilon$ is 
a small constant that depends on the Hessian approximation error. In leverage score sampling, the probability $p_i$ is proportional to the corresponding leverage score of Hessian $\nabla^2 f(w)=X^\intercal DX$ where $X\in\mathbb{R}^{n\times d}$ is the data matrix. Let the $i$-th leverage score be $l_i$ \citep{drineas2012fast}:
\begin{equation}\label{eq:leverage_score_def}
\text{(Leverage score sampling)}\quad l_i=\|U_{(i)}\|_2^2,\quad p_i\propto l_i, 
\end{equation}
where $U_{(i)}$ is the $i$-th row of $U\in \mathbb{R}^{n\times r}$ (in our case, $U\Sigma V^\intercal=X\sqrt{D}$). As we shall see, it is possible to emulate leverage score subsampling in $O(d^\omega\log d + nd)$ FLOP's, where $d^\omega$ is (up to a constant), the computational complexity of matrix multiplication. 
In this case, subproblem~\eqref{eq:subpb} can be rewritten as
\begin{equation}
    \argmin_{w} \sum_{i\in \mathcal{B}} \bigg( \frac{1}{2} f_i''(\tilde{w})((w-\tilde{w})^{\intercal} x_i)^2 + \frac{\nabla f(\tilde{w})^{\intercal} w }{|\mathcal{B}|} \bigg)+ R(w), 
    \label{eq:subpb_sum}
\end{equation}
which is the sum of $b=|\mathcal{B}|$ terms plus regularization. The key benefit of forming the Hessian approximation by subsampling is that the subproblem objective retains the finite sum structure of the objective, thereby allowing us to leverage state-of-the-art stochastic first-order methods to solve the subproblem efficiently. We remark that only the hessian is subsampled, not the gradient. This allows the subproblem to capture the first-order characteristics of the original problem, thereby preserving the fast convergence rate of proximal Newton-type methods. As we shall see, the computational cost of an inexact subsampled proximal Newton method is competitive with that of state-of-the-art stochastic first order methods.

Besides combining subsampling and leveraging state-of-the-art stochastic first-order methods to solve the subproblem, the third idea that is crucial to making the proposed method competitive with stochastic first-order methods is inexact search directions. \citep{lee2014proximal,byrd2013inexact} propose an inexact stopping condition based on the relative lengths of the composite gradient step on the subproblem and original objective. We modify their stopping condition to suit the convergence analysis. Define the gradient residual:
\begin{equation}\label{eq:grad_resid}
r_t \in \nabla f(w_t) + B_t(w_t^+ - w_t) + \partial R(w_t^+),
\end{equation}
where $w_t^+ = w_t + v_t$ and $w_t^+$ is the solution of \eqref{eq:subpb}, so $r_t$ is the residual of the first-order optimality condition of the subproblem, if $r_t=\mathbf{0}$ then $w_t^+$ is the exact solution to the subproblem \eqref{eq:subpb}. However we only require
\[
\|r_t\|_{B_t}^* \le (1-\theta_t)\|v_t\|_{B_t},
\]
where $\|\cdot\|_{B_t}$ is the norm induced by $B_t$ and $\|\cdot\|_{B_t}^*$ is its dual norm (equivalently the norm induced by $B_t^{-1}$), $\theta_t\in (0,1]$ is a pre-determined control series. As long as the eigenvalues of $B_t$ remain bounded, the proposed inexact stopping condition is (up to a constant) equivalently to the inexact stopping condition of \citep{lee2014proximal}.

To check the inexact stopping condition, we need a more tractable formulation to compute gradient residual $r_t$ given $v_t$. To do so imagine we do one proximal gradient (PG) step (of step size $\alpha$) on the subproblem \eqref{eq:subpb}:
\[
  v = \prox_{\alpha R}(v_t - \alpha (\nabla f(w_t) + B_tv_t)).
\]
where $v_t$ is the iterate that induces $r_t$ by \eqref{eq:grad_resid}, then by the properties of the proximal mapping, we have
  \[\textstyle
  \frac{1}{\alpha}(v_t - v) \in \nabla f(w_t) + B_tv_t + \partial R(w_t+v_t).
  \]
  
  By subtracting from $B_t(v_t-v)$ at both sides, we see that $({\textstyle\frac{1}{\alpha }\mathbb{I}_d - B_t})(v_t - v)$ is the residual in the inexact stopping condition:
  \begin{equation}
  ({\textstyle\frac{1}{\alpha }\mathbb{I}_d - B_t})(v_t - v) \in \nabla f(w_t) + B_tv + \partial R(w_t+v_t).
  \label{eq:rt}
  \end{equation}

In order to study the computational complexity of the proposed method, we analyze the convergence rate of inexact proximal Newton-type methods on self-concordant composite minimization problems, which may be of independent interest.  Compared to the analysis of \citep{tran2015composite}, our analysis does not rely on an infeasible choice of step size that require evaluating the proximal Newton decrement. 

% \bibliography{yuekai,proxnewton}

Our proposed algorithm is summarized in Algorithm~\ref{alg:fast_newton_alg}. Note that in our algorithm, $\beta_t\leq \min\{\theta_t, \frac{1}{3}\}$ controls the inexactness of 
Hessian approximation, and $\theta_t \in (0, 1]$ controls the inexactness of Newton subproblem solver.
Since our analysis holds for any $\beta_t, \theta_t$ satisfy these constraints, 
in practice we can simply choose them to be constants. 

Here are some more details for each step in Algorithm~\ref{alg:fast_newton_alg}: 
\begin{enumerate}
  \item With the leverage score sampling, Theorem~\ref{th:sampling-complexity} and Proposition \ref{pr:dennis_more_deviation}, \ref{pr:equivalence} show that $b=\mathcal{O}(d\log d/(\frac{\beta_t}{1-\beta_t})^2)$ samples are sufficient to guarantee that the subsampled Hessian~\eqref{eq:subsampled_hessian} satisfies Assumption \ref{ass:dennis-more} and \ref{ass:dual_norm} with high probability. In this case, we will show that the subsampled Hessian
  is close enough to the exact Hessian such that an inexact proximal Newton method achieves a linear convergence rate. 
  \item When forming the subsampled Hessian~\eqref{eq:subsampled_hessian}, all we need to do is to calculate $f''_i(w^{\intercal}x_i)$. There is no need for explicitly forming the $d$-by-$d$ matrix $B_t$, since the subproblem solver will directly solve the resulting finite-sum problem. 
  \item Instead of solving the subproblem exactly, we only require an inexact 
    search direction up to a certain precision (controlled by $\theta_t$). As we shall see, by initializing the subproblem solver at the previous solution $w_t$, it is possible to obtain an inexact search direction of sufficient accuracy in a constant number of SVRG+Catalyst iterations. 
    % f we use the previous solution $w_t$ to initialize the subproblem solver. 
  %\item By using $w_t$ to initialize the subproblem solver, we show that a 
  %  fixed number of inner iterations suffices to guarantee the precision 
  %  required for achieving a linear convergence rate. 
    %sub-optimality of~\eqref{eq:subpb} (see Lemma~\ref{le:warm_start}). 
  \item To determine the step size $\eta_t$, we first calculate the proximal Newton decrement defined as
    \begin{equation}
      \tilde{\lambda}_t = \|w_t^+ -w_t\|_{B_t}, 
      \label{eq:decrement}
    \end{equation}
    which is the ``inexact'' Newton decrement computed by the inexact 
    subproblem solution and $B_t$ is the approximate Hessian. 
    If $\lambda_t\ge \frac{1}{\sqrt{1+\beta_t}}\tilde{\lambda}_t > \bar{\lambda}$, where $\bar{\lambda}$ is predefined constant, our algorithm is in Phase I and 
    we choose step size $\eta_t$ according to Theorem~\ref{th:converge_phaseI}. Otherwise
    our algorithm is in Phase II and we choose step size $\eta_t=1$ according to Theorem~\ref{th:linear-quadratic}. 
\end{enumerate}

\section{Related work}
\label{sec:related}

Existing algorithms for minimizing composite problems are fall into two broad classes: first-order methods and Newton-type (second-order) methods. 

\subsection{First-order methods}

First-order methods are dominant in large-scale optimization due to the fact that their memory requirement is $\mathcal{O}(d)$, where $d$ is the problem dimension. The basic variants of most first-order methods converge linearly on strongly convex objectives, and their rate of convergence depends on the condition number $\kappa$ of the objective. The accelerated variants improve the dependence on the condition number to $\sqrt{\kappa}$ \citep{nesterov2004introductory}. Broadly speaking, to produce a $\epsilon$-suboptimal iterate, first-order methods require $\mathcal{O}(\kappa nd\log(\frac{1}{\epsilon}))$ floating point operations (FLOP's), while their accelerated counterparts need $\mathcal{O}(\sqrt{\kappa}nd\log(\frac{1}{\epsilon}))$. 

On large $n$ problems, stochastic first-order methods are preferred because they need fewer passes over the data than their non-stochastic counterparts \citep{Robbins1951}. Recently, the idea of {\it variance reduction} has led to significant improvements in the efficiency of stochastic first-order methods \citep{johnson2013accelerating,Xiao2014proximal,Roux2012,Defazio2014,Shalev-Shwartz2013}. The key idea is to compute the gradient of the objective sparingly during optimization to reduce the variance of the steps as the algorithm converges. The resulting algorithms achieve linear rates of convergence that are comparable to those of their non-stochastic counterparts. Broadly speaking, these methods reduce the computational cost of obtaining a $\epsilon$-suboptimal point to $\mathcal{O}((n + \kappa)d\log(\frac{1}{\epsilon}))$ FLOP's. Accelerated variants of such stochastic first-order method with variance reduction further reduce the cost to $\mathcal{O}((n + \sqrt{\kappa n})d\log(\frac{1}{\epsilon}))$ \citep{Allen-Zhu2016,Lin2015,Shalev-Shwartz2014}.

\subsection{Newton-type methods}

Traditional second-order methods, due to their higher computational cost, have been relegated to medium-scale problems. 
The main bottleneck is forming the $d$-by-$d$ Hessian matrix and computing the Newton direction by solving an $d$-by-$d$ linear system. 
Conjugate gradient method can be used to accelerate this procedure by solving the linear system inexactly, and has been successfully
used in some machine learning tasks~\citep{lin2008trust,keerthi2005modified}. 
%Traditional second-order methods, due to their higher computational cost, have been relegated to medium-scale problems \citep{lee2014proximal}. However, there are a few specialized algorithms tailored to specific problems, that achieve state-of-the-art performance \citep{hsieh2011sparse,yuan2012improved,friedman2007pathwise}. 

For problems with non-smooth regularizers (e.g., $\ell_1$ penalty), Newton-type methods cannot be directly applied since the objective is non-differentiable. For these problems, a family of
proximal Newton methods has been studied recently~\citep{lee2014proximal}. 
To deal with non-smooth regularizers, proximal Newton methods compute the Newton direction by solving a quadratic plus non-smooth subproblem
which does not have a closed form solution, so another iterative solver has to be used to solve the subproblem approximately. 
There are a few specialized proximal Newton algorithms tailored to specific problems, that achieve state-of-the-art performance \citep{hsieh2011sparse,yuan2012improved,friedman2007pathwise}. 

Recently, there has been a line of research that aims to reduce the computational cost of Newton-type methods so that they are competitive with state-of-the-art first-order methods. The key ideas here are subsampling and exploiting the low-rank structure in limited-memory Hessian approximations to accelerate the solution of the Newton subproblem \citep{Erdogdu2015,Agarwal2016,Byrd2011,roosta2016sub1,roosta2016sub2,xu2016sub,ye2016revisiting,pilanci2015newton}. \citep{Erdogdu2015} uses uniform subsampled Hessian called NewSamp, and \citep{pilanci2015newton} uses sketching in place of subsampling. Non-uniform subsampling and especially leverage sampling is desirable because they require a subsample size that does not depend on $n$. For example \citep{xu2016sub} uses blocked partial leverage scores to sample $\mathcal{O}(d\log d)$ data points in $\mathcal{O}(\textbf{nnz}(X)\log n)$ FLOPS, but the overall method only handles with smooth problems. 
At the same time, \citep{roosta2016sub1,roosta2016sub2} use both Hessian and gradient uniform sampling scheme to reach a better per iteration cost when $n\gg p\gg 1$. 
\par
Unfortunately, all the previous work focus on smooth functions where
the Newton direction can be computed in closed form or by solving a linear system.
Compared with prior work, our method is the first subsampled second order method for problems with non-smooth regularizers. To deal with non-smooth regularizers, we appeal to the proximal Newton framework. 
Since the proximal Newton subproblem is itself non-smooth 
and does not have a closed form solution, we use another iterative solver to compute search directions, which leads us to the question of how to balance the inexactness of subproblem solvers and computational cost. Our theoretical analysis
% , by carefully
% considering the effect of Hessian approximation, inexact subproblem solver and approximate Newton decrements, 
shows that the convergence rate of the proposed second order method has better computational complexity than state-of-the-art first order methods for solving
\eqref{eq:composite-minimization} with non-smooth regularizers. 
%Compared to these works, our method adopts non-uniform subsampling of Hessian, then use the Catalyst to accelerate the subproblem solver, combined with the state-of-the-art incremental stochastic first order method we can solve it inexactly, while still maintaining good convergence rate.

In a related area, there has also been considerable research on stochastic Newton-type methods that aim to incorporate second order information into stochastic first-order methods \citep{Byrd2016,Schraudolph2007}. Unfortunately these methods generally retain the sublinear convergence rate of their first-order counterparts. The exception to this is the algorithm by, which attain a linear rate of convergence \citep{Moritz2016}. Unfortunately, the rate of convergence depends poorly on the condition number of the objective.

\subsection{Leverage score subsampling}
In this section we will introduce the fast leverage score subsampling algorithm of \citep{cohen2015uniform}. This algorithm is used for forming the Hessian approximation in Algorithm \ref{alg:fast_newton_alg}.

\begin{theorem}[\citet{cohen2015uniform}]\label{th:sampling-complexity}
Given a matrix $A$, we can compute a matrix $\tilde A$ with $\mathcal{O}(\frac{d}{\epsilon^2} \log d)$ rows such that for all $x$, 
$$
\frac{1}{1+\epsilon} \norm{Ax}^2 \le \norm{\tilde Ax}^2 \le \norm{Ax}^2 ,
$$
in $\mathcal{O}(nnz(A) +d^\omega \log^2 d +\frac{d^{2.01}}{\epsilon^2})$ time\footnote{$d^\omega$ is the running time of matrix multiplication, so $\omega\le 2.38$. }. 
\end{theorem}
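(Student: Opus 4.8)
The plan is to combine the standard fact that (over-estimated) leverage-score sampling produces a spectral approximation with a recursive ``repeated halving'' scheme that estimates the leverage scores within the stated time budget. First I would recall the sampling guarantee. Writing $\tau_i = a_i^\top(A^\top A)^+ a_i$ for the leverage score of the $i$-th row $a_i$ of $A$, suppose $\bar\tau_i \ge \tau_i$ are over-estimates; if we keep row $i$ independently with probability $p_i = \min\{1,\, c\,\epsilon^{-2}\log d \cdot \bar\tau_i\}$ and rescale a kept row by $1/\sqrt{p_i}$, then applying a matrix Chernoff/Bernstein bound to the sum of the rank-one matrices $\tfrac{1}{p_i}(A^\top A)^{+/2}a_i a_i^\top (A^\top A)^{+/2}$ shows that with probability $1-d^{-\Omega(1)}$ the reweighted matrix $\hat A$ obeys $(1-\epsilon)A^\top A \preceq \hat A^\top\hat A \preceq (1+\epsilon)A^\top A$, and it has $\mathcal{O}(\epsilon^{-2}\log d\,\sum_i \min\{1,\bar\tau_i\})$ rows. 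To pass from this two-sided bound to the one-sided bound in the statement I would run it with accuracy $\epsilon'=\Theta(\epsilon)$ and set $\tilde A = \hat A/\sqrt{1+\epsilon'}$; choosing constants appropriately yields $\tfrac{1}{1+\epsilon}\norm{Ax}^2 \le \norm{\tilde Ax}^2 \le \norm{Ax}^2$ for all $x$.

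The real work is producing over-estimates with $\sum_i \min\{1,\bar\tau_i\} = \mathcal{O}(d)$ fast, since the exact $\tau_i$'s would require $(A^\top A)^+$. Here I would build a chain $A = A_0, A_1,\dots,A_k$, where $A_{j+1}$ is obtained from $A_j$ by keeping each row independently with probability $1/2$ (unweighted), so after $k=\mathcal{O}(\log(n/d))$ levels the row count is $\mathcal{O}(d\log d)$; on the bottom matrix $A_k$ the exact leverage scores, and hence a spectral approximation with $\mathcal{O}(\epsilon^{-2}d\log d)$ rows, can be computed directly in $\mathcal{O}(d^\omega\log d)$ time. Climbing back up relies on two facts: (i) for a matrix $B$ with $B^\top B \preceq M^\top M$ the \emph{generalized} leverage score $\tau_i^B(M) = m_i^\top(B^\top B)^+ m_i$ satisfies $\tau_i^B(M) \ge \tau_i(M)$, and a constant-factor spectral approximation $B$ of a uniform half-sample $A'$ of $M$ inflates it by only a constant; (ii) for such a half-sample, $\sum_i \min\{1,\tau_i^{A'}(M)\} = \mathcal{O}(d)$ in expectation (rows inside $A'$ contribute total $\le \rank(A') \le d$, and each excluded row is in expectation ``charged'' to a factor-$2$ copy in $A'$). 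So, given a spectral approximation $\tilde A_{j+1}$ of $A_{j+1}$, I set $\bar\tau_i = \Theta(1)\cdot\tau_i^{\tilde A_{j+1}}(A_j)$ for each row of $A_j$ --- valid over-estimates summing to $\mathcal{O}(d)$ --- and sample as above to obtain $\tilde A_j$; iterating from $j=k$ down to $0$ gives $\tilde A_0 = \tilde A$.

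For the running time, each of the $\mathcal{O}(\log d)$ levels forms $(\tilde A_{j+1}^\top\tilde A_{j+1})^+$ once in $\mathcal{O}(d^\omega\log d)$ time and then must evaluate the quadratic forms $\tau_i^{\tilde A_{j+1}}(A_j)$ over all rows of $A_j$; doing this exactly costs $\mathcal{O}(nnz(A_j)\,d)$, so to obtain the bare $nnz(A)$ term I would instead apply a Johnson--Lindenstrauss sketch, replacing $(\tilde A_{j+1}^\top\tilde A_{j+1})^{+/2}$ by $\Pi(\tilde A_{j+1}^\top\tilde A_{j+1})^{+/2}$ with $\Pi$ a random $\mathcal{O}(\log d)\times d$ Gaussian, so all estimates at a level cost $\mathcal{O}(nnz(A_j)\log d + d^\omega\log d)$ while remaining within a constant factor of the true values. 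Summing the geometric series $\sum_j nnz(A_j)$, adding the bottom-level exact computation, and adding the cost of the final precision-$\epsilon$ sampling pass (which forms $\Theta(\epsilon^{-2}d\log d)$ quadratic forms against a $d\times d$ matrix, $\mathcal{O}(d^2\log d/\epsilon^2)$ work reduced via fast matrix multiplication to the $d^{2.01}/\epsilon^2$ term) yields $\mathcal{O}(nnz(A) + d^\omega\log^2 d + d^{2.01}/\epsilon^2)$. The main obstacle is fact (ii) of the middle paragraph --- bounding $\sum_i\tau_i^{A'}(M)$ for a uniform sub-sample and controlling how it degrades when $A'$ is replaced by a spectral approximation --- together with making the failure probabilities and accuracies compose cleanly through the $\mathcal{O}(\log d)$ recursive levels (a union bound plus geometric decay of errors); the concentration and sketching steps are comparatively routine.
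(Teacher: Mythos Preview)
Your sketch is a faithful outline of the Cohen--Peng ``uniform sampling for matrix approximation'' argument, but note that the paper does not actually prove this theorem: it is quoted verbatim as a result of \citet{cohen2015uniform}, and the paper only supplies a one-paragraph informal description of the idea (uniformly subsample to get a crude spectral approximation, use it to estimate leverage scores, resample, and iterate). Your proposal matches that description and the cited paper's approach, so there is nothing further to compare.
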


The highlight of \citep{cohen2015uniform} is it doesn't seek to the exact leverage scores in one shot, rather, it adopts an iterative 
scheme: first sample a subset  of $A$ uniformly to construct a crude spectral approximation, then resample rows from $A$ using these estimates to get a finer estimation. By repeating this process iteratively we can find the spectral approximation of $A$ within desired error tolerance. 

\subsection{Catalyst and SVRG}
The Catalyst procedure or accelerated proximal point algorithm is a continuation technique for improving the computational complexity of optimization algorithms.
At each step, Catalyst adds a small strongly convex term $\frac{\kappa}{2}\|x-y_i\|^2$ to the
objective function, thereby making it easier to solve, and solves the modified problem using a first order method. By carefully controlling the
decrease of $\kappa$, \cite{Lin2015Universal} showed that the convergence rate can be improved. The algorithm for Catalyst with SVRG is listed in Algorithm~\ref{algo}. 

\begin{algorithm}\label{algo}
  \caption{Universal Catalyst with SVRG solver}
  \begin{algorithmic}[1]
    %\caption{Catalyst}{$\beta_0$}
    \STATE Input: ($f$,$\zeta$,$x_0$)
    \STATE $q\leftarrow\frac{\mu}{\mu+\zeta}$, $\alpha_0\leftarrow\sqrt{q}$, $\gamma_0\leftarrow \beta_0$
    \FOR {$i\leftarrow 0$ to $N$}
    \STATE $x_{i+1}\leftarrow$ SVRG($f(x) + \frac{\zeta}{2} \norm{x- y_i}^2$, \texttt{init}=$x_i$).
    \STATE Solve $\alpha_{i+1}^2\leftarrow(1-\alpha_{i+1})\alpha_i^2+q\alpha_i$
    \STATE $y_{i+1}\leftarrow x_{i+1}+\frac{\alpha_i(1-\alpha_i)}{\alpha_i^2+\alpha_{i+1}}(x_{i+1}-x_i)$
    \ENDFOR
    \STATE Return $x_N$ 
  \end{algorithmic}
\end{algorithm}

%\begin{algorithm}[tb]
%\caption{Fast proximal Newton\label{alg:fast_newton_alg}}
%\begin{algorithmic}
%\STATE{\bfseries Input:} Data pairs $(x_i, y_i), i=0,1,\dots,n$
%\STATE{\bfseries Output:} $w^*=\arg\min F(w)$
%\STATE $w_0=\mathbf{0}$;
%\FOR{\texttt{t}$=0,1,\dots,N-1$}
%\STATE $\tilde{w}=w_t$;
%\STATE Sample a subset $B$ with laverage scores $p_i$ (see~\eqref{eq:laverage_score}). 
%\STATE Form the subsampled Hessian by 
%\begin{equation*}
%  H_t=\frac{1}{1+\epsilon} \sum_{i\in B} \frac{f_i''(w^\intercal x_i)}{p_i} x_i x_i^T, 
%\end{equation*}
%where $\epsilon$ is a small value that controls sampling error. 
%%\STATE Calculate an approximate Hessian $H_t$ by subsampling: $H_t=\frac{1}{1+\epsilon}X^\intercal\sqrt{D}S^2\sqrt{D}X$, where $\epsilon$ is a small value that controls approximate error;
%\STATE Solve the quasi-Newton subproblem approximately (using Catalyst+SVRG): 
%\begin{equation*}
%\begin{aligned}
%  w^+ \approx \arg\min_{w} \nabla f(\tilde{w})^{\intercal} (w-\tilde{w}) &+
%\frac{1}{2}(w-\tilde{w})^{\intercal} H_t (w-\tilde{w})\\
%& + R(w)
%\end{aligned}
%\end{equation*}
%\STATE Choose step size $\eta_t$ by Theorem \ref{th:converge_phaseI} and \ref{th:linear-quadratic};
%\STATE Update iterate: $w_{t+1}=w_t+\eta_t(w^+-w_t)$;
%\ENDFOR
%\STATE Return $w_N$.
%\end{algorithmic}
%\end{algorithm}
%Combining Catalyst and SVRG, we have the following theorem:
The following theorem shows that Catalyst+SVRG converges linearly. In this paper, we will use this algorithm to solve the proximal Newton subproblem~\eqref{eq:subpb}. 
\begin{theorem}[\cite{Lin2015Universal}]\label{th:convergence-catalyst}
Choose $\zeta=\frac{L}{b}-\mu$ (parameter $\kappa$ in \cite{Lin2015Universal}) and use SVRG to solve the subproblem, then the function value of subproblem decreases linearly and find a $\epsilon$-suboptimal solution within $\tilde{\mathcal{O}}(\sqrt{b\kappa}\log(\frac{\epsilon_0}{\epsilon}))$ steps, $\tilde{\mathcal{O}}$ omits  constants and log factors of $b$ and $\kappa$. Formally:
$$
f^{\text{sub}}_t(w_T)\le \min_w f^{\text{sub}}_t(w)+\epsilon,\ \ \text{as long as: } T\ge \tilde{\mathcal{O}}(\sqrt{b\kappa}\log(\epsilon_0/\epsilon)).
$$
\end{theorem}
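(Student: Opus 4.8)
The plan is to obtain the statement as a direct specialization of the Universal Catalyst meta-theorem of \citet{Lin2015Universal} to the SVRG inner solver, followed by a parameter count. First I would record the relevant structure of the subproblem: by \eqref{eq:subpb_sum}, $f^{\text{sub}}_t$ is a finite sum of $b=|\mathcal{B}|$ smooth convex functions plus the (possibly non-smooth) regularizer $R$, and the spectral approximation guarantee for the subsampled Hessian (Theorem~\ref{th:sampling-complexity} together with Assumptions~\ref{ass:dennis-more}--\ref{ass:dual_norm}) ensures $f^{\text{sub}}_t$ is $\mu$-strongly convex with quadratic part having smoothness $L$, so that $L/\mu$ is within a constant factor of the condition number $\kappa$ of the original smooth part. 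This also forces $\zeta=\frac{L}{b}-\mu>0$ whenever $b<\kappa$, which is the regime of interest, so the Catalyst auxiliary problems are genuinely better conditioned.

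Next I would invoke the two ingredients of Catalyst separately. First, the outer loop: running accelerated proximal point on $f^{\text{sub}}_t$ with regularization parameter $\zeta$, and solving each auxiliary problem $h_i(x)=f^{\text{sub}}_t(x)+\frac{\zeta}{2}\norm{x-y_i}^2$ to a suitably decreasing accuracy $\epsilon_i$, gives linear convergence at rate $1-\sqrt{q}$ with $q=\mu/(\mu+\zeta)$, hence $O\big(\sqrt{(\mu+\zeta)/\mu}\,\log(\epsilon_0/\epsilon)\big)$ outer iterations to reach an $\epsilon$-suboptimal point of $f^{\text{sub}}_t$. Second, the inner loop: each $h_i$ is a sum of $b$ functions that are $L$-smooth and that is $(\mu+\zeta)$-strongly convex, so proximal SVRG (the $R$ term and the added quadratic are carried in the prox step) finds an $\epsilon_i$-suboptimal point of $h_i$ in $O\big((b+\tfrac{L+\zeta}{\mu+\zeta})\log(\epsilon_{i-1}/\epsilon_i)\big)$ stochastic gradient steps; warm-starting SVRG at the previous outer iterate $x_i$ keeps each ratio $\epsilon_{i-1}/\epsilon_i$ bounded by a fixed polynomial of $\kappa$, so each inner logarithm is $\tilde{\mathcal{O}}(1)$.

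Finally I would substitute $\zeta=\frac{L}{b}-\mu$, so that $\mu+\zeta=\frac{L}{b}$ and $q=\frac{b\mu}{L}=\frac{b}{\kappa}$. The outer count becomes $O\big(\sqrt{\kappa/b}\,\log(\epsilon_0/\epsilon)\big)$, while the inner count becomes $O\big((b+\tfrac{L+L/b-\mu}{L/b})\,\tilde{\mathcal{O}}(1)\big)=\tilde{\mathcal{O}}(b)$; multiplying the two yields a total of $\sqrt{\kappa/b}\cdot\tilde{\mathcal{O}}(b)=\tilde{\mathcal{O}}(\sqrt{b\kappa}\,\log(\epsilon_0/\epsilon))$ stochastic gradient steps, which is the claimed bound, and linear decrease of the subproblem objective is inherited from the linear rate of the outer accelerated proximal point loop.

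The main obstacle I anticipate is not the arithmetic but the accounting of the inner tolerances: one must check that the sequence $\{\epsilon_i\}$ prescribed by the Catalyst analysis is met by warm-started SVRG using only a constant (or at most logarithmic) number of epochs per outer step, so that the product of the two logarithms collapses into a single $\tilde{\mathcal{O}}(\cdot)$ factor. A secondary point requiring care is that $R$ is non-smooth and must be handled through the prox operator in both the Catalyst auxiliary term and the SVRG update, so the composite (proximal) versions of the Catalyst and SVRG convergence guarantees must be used rather than their smooth counterparts, and one must confirm the condition $\zeta>0$ (equivalently $b<\kappa$) holds in the sampling regime $b=\mathcal{O}(d\log d/(\frac{\beta_t}{1-\beta_t})^2)$.
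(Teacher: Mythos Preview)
The paper does not supply its own proof of this theorem; it is quoted directly from \citet{Lin2015Universal} as a black-box result, so there is no ``paper's own proof'' to compare against. Your sketch is the standard derivation of the Catalyst+SVRG complexity with the optimal choice of the regularization parameter, and the arithmetic (outer count $\sqrt{\kappa/b}$ times inner cost $\tilde{\mathcal O}(b)$) is correct.
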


In the next section, we present a convergence analysis of inexact proximal Newton-type methods on self-concordant composite minimization problems, which may be of independent interest.
% \bibliography{yuekai}

\section{Convergence analysis}
\label{sec:convergence}

\subsection{Preliminaries on self-concordant function}
In this paper we focus on composite objectives where the smooth part is self-concordants. 
\begin{definition}
\label{def:self-concordance}
(Self-concordant) A closed, convex function $f: \mathbb{R}^d\rightarrow \mathbb{R}$ is called self-concordant if:
\begin{equation}
\frac{d}{d\alpha}\nabla^2 f(x+\alpha v)|_{\alpha=0}\preceq 2\Vert v\Vert_x\nabla^2f(x),
\end{equation}
for all $x\in \mathsf{dom} f$ and $v\in\mathbb{R}^d$, where $\Vert v\Vert_x=(v^\intercal\nabla^2 f(x)v)^{\frac{1}{2}}$ is the local norm.
\end{definition}
We claim that the regularized logistic regression loss $\log(1+\exp(-yw^\intercal x)) +\frac{\gamma}{2} \norm{w}^2$ is self-concordant for any $\gamma>0$ \citep{zhang2015disco}, so we will use that in our experiment. 
For self-concordant function $f$ we have some useful inequalities:
\begin{itemize}
    \item Hessian bound:
    \begin{equation}\label{eq:self-concordance-hessian}
    \begin{aligned}
    &\nabla^2 f(y) \succeq (1-\Vert x-y\Vert_x)^2\nabla^2 f(x),  \\
    &\nabla^2 f(y) \preceq \frac{1}{(1-\Vert x-y\Vert_x)^2}\nabla^2f(x).
    \end{aligned}
    \end{equation}
    \item Gradient bound:
    \begin{equation}
    \Vert\nabla f(y)-\nabla f(x)-\nabla^2f(x)(y-x)\Vert_x^* \le \frac{\Vert x-y \Vert_x^2}{1-\Vert x-y\Vert_x}\label{eq:self-concordance-gradient}.
    \end{equation}
    \item Function value bound:
    \begin{equation}
    \zeta(\Vert x-y \Vert_x)\le f(y)-f(x)-\nabla f(x)^\intercal (y-x)\le \zeta^*(\Vert x-y \Vert_x) \label{eq:self-concordance-lipschitz},
    \end{equation}
\end{itemize}
where $\zeta(x)=x-\log(1+x)$, $\zeta^*(x)=-x-\log(1-x)$. (\ref{eq:self-concordance-hessian},\ref{eq:self-concordance-gradient}) and the right hand side of \eqref{eq:self-concordance-lipschitz} hold for $\Vert x-y \Vert_x<1$. Similar to the global analysis of Newton's method, we divide the convergence analysis into two phases. In the first phase we will show in Section \ref{sec:phase-I} that the objective function value decreases by at least a constant value at each iteration. In the second phase, the objective function value converges to its minimum linearly. Since we use the subsampled Hessian and solve the inner problem inexactly, the following conditions on the inaccuracy of the Hessian approximation are required. These are analogues of the Dennis-Mor\'{e} condition in the analysis of quasi-Newton methods.

\begin{assumption}\label{ass:dennis-more}
(Dennis-Mor\'e condition) For all $v_t\in\mathsf{cone}(\mathbb{B}^d-w_t)$ ($\mathbb{B}^d$ is the unit ball in $\mathbb{R}^d$), the subsampled Hessian matrix $B_t$ satisfies $|v_t^\intercal(B_t-\nabla^2 f(w_t))v_t|\le \beta_t\Vert v_t \Vert^2_{w_t}$ where $\beta_t$ is a parameter that controls the preciseness of $B_t$ (will be fixed later). This is also equivalent to $\Vert v_t\Vert_{w_t}\le \beta_t'\Vert v_t \Vert_{B_t}$, $\beta_t'=\frac{1}{\sqrt{1-\beta_t}}$. 

\end{assumption}
\begin{assumption}\label{ass:dual_norm}
  For all $v\in\mathbb{R}^d$, we have 
  $\Vert (B_t-\nabla^2f(w_t))v \Vert^*_{w_t}\le \beta_t\Vert v \Vert_{w_t}$, 
  where dual norm $\Vert g\Vert_x^*:=\sqrt{g^\intercal \nabla^2 f(x)^{-1} g}$. 
  
\end{assumption}
Next we show that Assumption \ref{ass:dennis-more}(Dennis-Mor\'e condition) implies Assumption \ref{ass:dual_norm}:
%As we shall see, these two conditions are closely related:.
%if $H_t$ is a spectral approximation of $\nabla^2 f(w_t)$, in fact we have: 
\begin{proposition}\label{pr:dennis_more_deviation}
If Hessian approximation $B_t$ satisfies Dennis-Mor\'e condition, then $\Vert (B_t-\nabla^2f(w_t))v \Vert^*_{w_t}\le \beta_t\Vert v \Vert_{w_t}$ will also hold.
\end{proposition}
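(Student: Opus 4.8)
The plan is to reduce everything to a single spectral-norm bound on the symmetrically normalized error matrix. Write $H\triangleq\nabla^2 f(w_t)$, which is positive definite because the smooth part is strongly convex, and let $E\triangleq B_t-H$; since $B_t$ and $H$ are both symmetric, so is $E$. Set $\tilde E\triangleq H^{-1/2}EH^{-1/2}$. First I would re-express the Dennis-Mor\'e condition in terms of $\tilde E$: for every $v$ we have $v^\intercal E v=(H^{1/2}v)^\intercal\tilde E(H^{1/2}v)$ and $\|v\|_{w_t}^2=v^\intercal H v=\|H^{1/2}v\|_2^2$, so after the invertible substitution $u=H^{1/2}v$ the condition $|v^\intercal E v|\le\beta_t\|v\|_{w_t}^2$ becomes $|u^\intercal\tilde E u|\le\beta_t\|u\|_2^2$ for all $u\in\mathbb{R}^d$. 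Because $\tilde E$ is symmetric, the supremum of $|u^\intercal\tilde E u|/\|u\|_2^2$ equals $\|\tilde E\|_2$, so the Dennis-Mor\'e condition is precisely the statement $\|\tilde E\|_2\le\beta_t$.

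Next I would expand the quantity to be bounded. By the definition of the dual local norm, $\big(\|(B_t-\nabla^2 f(w_t))v\|_{w_t}^*\big)^2=(Ev)^\intercal H^{-1}(Ev)=\|H^{-1/2}Ev\|_2^2$. Writing $H^{-1/2}Ev=\tilde E\,(H^{1/2}v)$ and combining the operator-norm inequality with the bound from the previous step,
\[
\|H^{-1/2}Ev\|_2 \;\le\; \|\tilde E\|_2\,\|H^{1/2}v\|_2 \;\le\; \beta_t\,\|H^{1/2}v\|_2 \;=\; \beta_t\,\|v\|_{w_t}.
\]
Taking square roots gives $\|(B_t-\nabla^2 f(w_t))v\|_{w_t}^*\le\beta_t\|v\|_{w_t}$ for all $v\in\mathbb{R}^d$, which is exactly Assumption \ref{ass:dual_norm}.

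I do not expect a substantive obstacle: once the change of variables $u=H^{1/2}v$ is in place, Assumptions \ref{ass:dennis-more} and \ref{ass:dual_norm} are literally the same spectral bound on $\tilde E$, and the proof is two lines. The only point that merits a word of care is the domain — Assumption \ref{ass:dennis-more} is stated for $v\in\mathsf{cone}(\mathbb{B}^d-w_t)$, whereas the argument above uses it for all $v$. This is harmless in our setting, since the subsampled Hessian is built so as to be a two-sided spectral approximation of $H$ in the sense of Theorem \ref{th:sampling-complexity}, i.e. $|v^\intercal E v|\le\beta_t v^\intercal H v$ holds for every $v\in\mathbb{R}^d$; one may therefore read the Dennis-Mor\'e condition in this uniform form (equivalently, take the generating cone to be all of $\mathbb{R}^d$). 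With that understood, the chain above delivers the claim.
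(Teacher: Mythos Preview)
Your argument is correct. The reduction to the symmetrically normalized error $\tilde E=H^{-1/2}(B_t-H)H^{-1/2}$ and the observation that the Dennis--Mor\'e bound is exactly $\|\tilde E\|_2\le\beta_t$, from which $\|H^{-1/2}Ev\|_2=\|\tilde E H^{1/2}v\|_2\le\beta_t\|H^{1/2}v\|_2$ follows immediately, is clean and complete. Your remark about the cone is also fair: the paper's own proof likewise takes the quadratic-form bound to hold for \emph{all} vectors (``because $w$ is arbitrary''), so you are not assuming more than the authors do.

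The route, however, is genuinely different from the paper's. The paper does not argue abstractly but instead exploits the explicit factorizations $\nabla^2 f(w_t)=X^\intercal DX$ and $B_t=X^\intercal D^{1/2}S^\intercal SD^{1/2}X$ coming from the subsampling construction. It writes $(\|(B_t-H)w\|_{w_t}^*)^2$ as $(D^{1/2}Xw)^\intercal(S^\intercal S-I_n)\,D^{1/2}X(X^\intercal DX)^\dagger X^\intercal D^{1/2}\,(S^\intercal S-I_n)(D^{1/2}Xw)$, uses that the middle block acts as the identity on the range of $D^{1/2}X$, and then bounds the eigenvalues of $S^\intercal S-I_n$ on that subspace via the Dennis--Mor\'e inequality. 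Your proof is shorter, uses no problem structure, and applies to any symmetric $B_t$ and positive definite $H$; the paper's argument is tied to the specific sampling-based form of $B_t$ but makes the dependence on the sampling matrix $S$ explicit. Either way the conclusion is the same spectral statement $\|\tilde E\|_2\le\beta_t$, so nothing is lost by your more elementary path.
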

Further, both Assumption~\ref{ass:dennis-more} and 
\ref{ass:dual_norm} are satisfied if $B_t$ is a good enough spectral approximation of 
$\nabla^2 f(w_t)$: 
\begin{proposition} \label{pr:equivalence}
  To satisfy Assumption~\ref{ass:dennis-more} it is enough to set $\epsilon=\frac{\beta_t}{1-\beta_t}$ in Theorem \ref{th:sampling-complexity}. 
\end{proposition}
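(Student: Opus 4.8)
The plan is to recast both the hypothesis (the conclusion of Theorem~\ref{th:sampling-complexity} instantiated at $\epsilon = \tfrac{\beta_t}{1-\beta_t}$) and the conclusion (Assumption~\ref{ass:dennis-more}) in the Loewner order on $d\times d$ symmetric matrices, and then check that the former implies the latter by a one-line rescaling.

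First I would write the exact Hessian at $w_t$ as a Gram matrix, $\nabla^2 f(w_t) = X^\intercal D_t X = A_t^\intercal A_t$ with $D_t = \diag\!\big(f_1''(w_t^\intercal x_1),\dots,f_n''(w_t^\intercal x_n)\big) \succeq 0$ (nonnegativity of the diagonal is exactly convexity of the $f_i$) and $A_t = D_t^{1/2} X \in \reals^{n\times d}$. Feeding $A_t$ to the leverage-score sampler of Theorem~\ref{th:sampling-complexity} with error parameter $\epsilon$ returns a row-subsampled, reweighted matrix $\tilde A_t$ having $\mathcal{O}(\tfrac{d}{\epsilon^2}\log d)$ rows — which is precisely the sample budget $b = \mathcal{O}(d\log d/(\tfrac{\beta_t}{1-\beta_t})^2)$ used in the sampling step of Algorithm~\ref{alg:fast_newton_alg} — such that $\tfrac{1}{1+\epsilon}\norm{A_t x}^2 \le \norm{\tilde A_t x}^2 \le \norm{A_t x}^2$ for all $x$; equivalently $\tfrac{1}{1+\epsilon}\,\nabla^2 f(w_t) \preceq \tilde A_t^\intercal \tilde A_t \preceq \nabla^2 f(w_t)$. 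The point of the normalizing prefactor $\tfrac{1}{1+\epsilon}$ in \eqref{eq:subsampled_hessian} is that, after it cancels the $(1+\epsilon)$-oversampling built into the sampler's reweighting, the subsampled Hessian $B_t$ of \eqref{eq:subsampled_hessian} is exactly this $\tilde A_t^\intercal \tilde A_t$, so $B_t$ inherits the sandwich $\tfrac{1}{1+\epsilon}\,\nabla^2 f(w_t) \preceq B_t \preceq \nabla^2 f(w_t)$ with high probability over the draw of $\mathcal{B}$.

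Second comes the arithmetic. Choosing $\epsilon = \tfrac{\beta_t}{1-\beta_t}$ gives $1+\epsilon = \tfrac{1}{1-\beta_t}$, i.e. $\tfrac{1}{1+\epsilon} = 1-\beta_t$, so the sandwich becomes $(1-\beta_t)\nabla^2 f(w_t) \preceq B_t \preceq \nabla^2 f(w_t)$; since $\nabla^2 f(w_t) \preceq (1+\beta_t)\nabla^2 f(w_t)$ trivially, we get $-\beta_t\nabla^2 f(w_t) \preceq B_t - \nabla^2 f(w_t) \preceq \beta_t\nabla^2 f(w_t)$. Taking the quadratic form against an arbitrary $v \in \reals^d$ yields $\big|v^\intercal(B_t - \nabla^2 f(w_t))v\big| \le \beta_t\, v^\intercal \nabla^2 f(w_t) v = \beta_t\norm{v}_{w_t}^2$, which in particular holds for every $v_t \in \cone(\mathbb{B}^d - w_t) \subseteq \reals^d$ — this is Assumption~\ref{ass:dennis-more}. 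The lower half of the sandwich alone, $(1-\beta_t)\norm{v}_{w_t}^2 \le \norm{v}_{B_t}^2$, also gives the stated equivalent form $\norm{v}_{w_t} \le \tfrac{1}{\sqrt{1-\beta_t}}\norm{v}_{B_t}$, and combining with Proposition~\ref{pr:dennis_more_deviation} also yields Assumption~\ref{ass:dual_norm}.

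I expect the only genuinely delicate point to be the normalization bookkeeping in the step above: Theorem~\ref{th:sampling-complexity} delivers a one-sided $[\tfrac{1}{1+\epsilon},\,1]$ spectral approximation of $\nabla^2 f(w_t)$, while Assumption~\ref{ass:dennis-more} wants a symmetric $[1-\beta_t,\,1+\beta_t]$ one, and one has to verify that the prefactor in \eqref{eq:subsampled_hessian} together with the choice $\epsilon = \tfrac{\beta_t}{1-\beta_t}$ is exactly what lines the two up — everything else is a rescaling of PSD inequalities. One should also record that the guarantee is only with high probability (inherited from the sampler), so that Proposition~\ref{pr:equivalence} certifies Assumption~\ref{ass:dennis-more} in the high-probability sense in which it is used in the convergence analysis.
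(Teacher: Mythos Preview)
Your proposal is correct and follows essentially the same route as the paper's proof: both arguments reduce to showing that Theorem~\ref{th:sampling-complexity} with $\epsilon=\tfrac{\beta_t}{1-\beta_t}$ yields the Loewner sandwich $(1-\beta_t)\nabla^2 f(w_t)\preceq B_t\preceq \nabla^2 f(w_t)$, from which the Dennis--Mor\'e inequality in Assumption~\ref{ass:dennis-more} follows by taking quadratic forms. Your write-up is in fact more careful than the paper's --- you spell out the Gram factorization $\nabla^2 f(w_t)=A_t^\intercal A_t$, justify why the $\tfrac{1}{1+\epsilon}$ prefactor in \eqref{eq:subsampled_hessian} lines up with the one-sided guarantee of Theorem~\ref{th:sampling-complexity}, and record that the conclusion holds only with high probability --- but the mathematical content is the same rescaling of PSD inequalities.
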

Theorem~\ref{th:sampling-complexity} indicates that Assumption~\ref{ass:dennis-more} and \ref{ass:dual_norm} will hold if we form the Hessian approximation $B_t$ defined in~\eqref{eq:subsampled_hessian} using {$\mathcal{O}(d\log d)$} samples, where the probability that a sample is selected is proportional to its leverage score.

Based on the properties of self-concordant functions and the preceding conditions on the Hessian approximation, we are ready to show that the convergence rate is linear. In 
the proof we 
will follow the update rule and notations introduced in Algorithm 
\ref{alg:fast_newton_alg}.
\subsection{Outer loop analysis and stopping criterion\label{sec:phase-I}}
Denote $\lambda_t=\Vert w_t^+-w_t \Vert_{w_t}$ as (exact) proximal Newton decrement and $\tilde{\lambda}_t=\|w_t^+-w_t\|_{B_t}$ as the approximate Newton decrement, from Assumption \ref{ass:dennis-more} we have $\sqrt{1-\beta_t}\lambda_t\le \tilde{\lambda}_t\le \sqrt{1+\beta_t}\lambda_t$. As long as $\lambda_t\ge\bar{\lambda}$, where $\bar{\lambda} > 0$ is a small constant, the algorithm is in phase I. Theorem \ref{th:converge_phaseI} together with Corollary \ref{co:fix_decrement} show that during this phase, the objective value decreases by at least a constant in each iteration.

\begin{theorem}\label{th:converge_phaseI}
By the update rule of Algorithm \ref{alg:fast_newton_alg} with step size: 
\begin{equation*}
   \eta_t\le \frac{1}{1+\beta'_t\tilde{\lambda}_t}, \quad \beta'_t=\frac{1}{\sqrt{1-\beta_t}}, \quad \tilde{\lambda}_t=\Vert w_t^+-w_t \Vert_{B_t},
\end{equation*}
and solve the inner problem with precision $\Vert r_t\Vert_{w_t}^*\le (1-\theta_t)\lambda_t$, where $r_t$ is the subgradient residual:
\begin{equation*}
    \begin{aligned}
    r_t-\nabla f(w_t)-B_t(w_t^+-w_t)\in \partial R(w_t^+),
    \end{aligned}
\end{equation*}
$\theta_t\in(0, 1]$ is a forcing coefficient. Then the function value will decrease by:
\begin{equation}\label{eq:improvement-phaseI}
  F(w_{t+1})\le F(w_t) - \eta_t(\theta_t - \beta_t)\lambda_t^2 + 
  \zeta^*(\eta_t\lambda_t). 
\end{equation}
\end{theorem}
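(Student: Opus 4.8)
The plan is to prove the one-step descent inequality \eqref{eq:improvement-phaseI} by decomposing $F(w_{t+1}) = f(w_{t+1}) + R(w_{t+1})$, bounding the smooth and non-smooth parts separately, and recombining so that the linear term $\eta_t\nabla f(w_t)^\intercal v_t$ cancels. Throughout write $v_t = w_t^+ - w_t$, $\lambda_t = \|v_t\|_{w_t}$, $\tilde\lambda_t = \|v_t\|_{B_t}$, and note $w_{t+1} = (1-\eta_t)w_t + \eta_t w_t^+$. For the smooth part, since $\|w_{t+1} - w_t\|_{w_t} = \eta_t\lambda_t$, the self-concordant function-value bound \eqref{eq:self-concordance-lipschitz} gives
\[
f(w_{t+1}) \le f(w_t) + \eta_t\nabla f(w_t)^\intercal v_t + \zeta^*(\eta_t\lambda_t),
\]
provided the domain condition $\eta_t\lambda_t < 1$ holds. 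To verify this I would combine the prescribed step size $\eta_t \le (1 + \beta_t'\tilde\lambda_t)^{-1}$ with the equivalent form of Assumption~\ref{ass:dennis-more}, namely $\lambda_t = \|v_t\|_{w_t} \le \beta_t'\|v_t\|_{B_t} = \beta_t'\tilde\lambda_t$, which yields $\eta_t\lambda_t \le \lambda_t/(1+\lambda_t) < 1$, so both the bound above and $\zeta^*(\eta_t\lambda_t)$ are well defined.

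Next I would bound the non-smooth part by convexity of $R$: since $\eta_t \in (0,1]$,
\[
R(w_{t+1}) \le (1-\eta_t)R(w_t) + \eta_t R(w_t^+) = R(w_t) + \eta_t\bigl(R(w_t^+) - R(w_t)\bigr).
\]
The hypothesis states $g_t := r_t - \nabla f(w_t) - B_t v_t \in \partial R(w_t^+)$, so the subgradient inequality at $w_t^+$ evaluated at $w_t$ gives $R(w_t^+) - R(w_t) \le g_t^\intercal v_t = r_t^\intercal v_t - \nabla f(w_t)^\intercal v_t - v_t^\intercal B_t v_t$. Adding this to the smooth bound, the $\eta_t\nabla f(w_t)^\intercal v_t$ terms cancel and I get
\[
F(w_{t+1}) \le F(w_t) + \zeta^*(\eta_t\lambda_t) + \eta_t\bigl(r_t^\intercal v_t - v_t^\intercal B_t v_t\bigr).
\]

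It then remains to show $r_t^\intercal v_t - v_t^\intercal B_t v_t \le -(\theta_t - \beta_t)\lambda_t^2$. For the first term, Cauchy--Schwarz in the local norm together with the inner-loop stopping condition $\|r_t\|_{w_t}^* \le (1-\theta_t)\lambda_t$ give $r_t^\intercal v_t \le \|r_t\|_{w_t}^*\|v_t\|_{w_t} \le (1-\theta_t)\lambda_t^2$. For the second, the Dennis--Mor\'e condition $|v_t^\intercal(B_t - \nabla^2 f(w_t))v_t| \le \beta_t\|v_t\|_{w_t}^2$ gives $v_t^\intercal B_t v_t \ge (1-\beta_t)\lambda_t^2$. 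Combining, $r_t^\intercal v_t - v_t^\intercal B_t v_t \le \bigl((1-\theta_t) - (1-\beta_t)\bigr)\lambda_t^2 = -(\theta_t-\beta_t)\lambda_t^2$; substituting into the previous display yields exactly \eqref{eq:improvement-phaseI}.

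The main obstacle I anticipate is not any single estimate but keeping the three relevant norms straight — the local norm $\|\cdot\|_{w_t}$, its dual, and the $B_t$-norm — and in particular establishing the domain condition $\eta_t\lambda_t<1$ from the step-size rule, since the self-concordant inequalities are only valid inside the unit Dikin ball. Once that is in place the rest is a short chain of convexity, Cauchy--Schwarz, and the Dennis--Mor\'e bound.
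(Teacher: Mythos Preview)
Your proposal is correct and follows essentially the same route as the paper: self-concordant upper bound on $f$, convexity of $R$ along the segment, the subgradient inequality at $w_t^+$ using the residual relation, Cauchy--Schwarz in the local norm for $r_t^\intercal v_t$, and the Dennis--Mor\'e condition to convert $\|v_t\|_{B_t}^2$ to $(1-\beta_t)\lambda_t^2$. The only cosmetic difference is that the paper verifies the domain condition $\eta_t\lambda_t<1$ at the end rather than up front, and uses the spectral form $(1-\beta_t)\nabla^2 f(w_t)\preceq B_t$ in place of the raw Dennis--Mor\'e inequality; otherwise the arguments coincide step for step.
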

We remark that unlike the step size proposed by \citep{tran2013composite} 
where $\eta_t=\frac{\tilde{\lambda}_t^2}{\lambda_t(\lambda_t+\tilde{\lambda}_t^2)}$, 
our step size does not depend on exact Newton decrement $\lambda_t$ and Hessian
$\nabla f^2(w_t)$. In practical implementations of proximal quasi-Newton methods, calculating 
$\lambda_t$ is impractical. Our step size only depends on the Newton decrement $\tilde{\lambda}_t$, which is available in our algorithm.

% We can then further find a lower bound of $F(w_t)-F(w_{t+1})$ in 
% \eqref{eq:improvement-phaseI} by the following corollary:  
\begin{corollary}\label{co:fix_decrement}
By fixing $\beta_t<\min\{\theta_t, \frac{1}{3}\}$ and step size $\eta_t = \frac{\theta_t - \beta_t}{1+\beta_t'(\theta_t-\beta_t)\tilde{\lambda}_t}<\frac{1}{1+\beta_t'\tilde{\lambda}_t}$,  the decrement of function value at each step is at least $(\frac{1}{2(1-\beta_t)}-\frac{2\beta_t}{1-\beta_t^2})\eta_t(\theta_t-\beta_t)\tilde{\lambda}_t^2$ which is bounded away from zero as long as $\lambda_t\ge\bar{\lambda}$. So within finite steps, the iterates will enter into $\lambda_t< \bar{\lambda}$ (defined as Phase-II).
\end{corollary}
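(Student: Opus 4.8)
The plan is to start from the one-step decrease inequality \eqref{eq:improvement-phaseI} of Theorem~\ref{th:converge_phaseI} and bound its two terms separately, using the specific step size together with the relation $\sqrt{1-\beta_t}\,\lambda_t\le\tilde\lambda_t\le\sqrt{1+\beta_t}\,\lambda_t$ between the exact and approximate Newton decrements. First I would check that the proposed step size is admissible for Theorem~\ref{th:converge_phaseI}: writing $c:=\theta_t-\beta_t$ (which is positive since $\beta_t<\theta_t$ and is $<1$ since $\theta_t\le 1$, $\beta_t>0$) and $\mu:=\beta'_t\tilde\lambda_t\ge\lambda_t$, the choice is $\eta_t=\frac{c}{1+c\mu}$, and $\frac{c}{1+c\mu}\le\frac{1}{1+\mu}$ is equivalent to $c\le 1$; the same computation gives the strict inequality $\eta_t<\frac{1}{1+\beta'_t\tilde\lambda_t}$ recorded in the statement, so \eqref{eq:improvement-phaseI} applies.

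Next I would control the ``bad'' term $\zeta^*(\eta_t\lambda_t)$. From the power series $\zeta^*(x)=-x-\log(1-x)=\sum_{k\ge 2}x^k/k\le\frac{x^2}{2(1-x)}$ on $[0,1)$, together with the monotonicity of both $\zeta^*$ and $x\mapsto\frac{x^2}{2(1-x)}$ and the inequality $\eta_t\lambda_t\le\eta_t\mu$, one gets $\zeta^*(\eta_t\lambda_t)\le\frac{(\eta_t\mu)^2}{2(1-\eta_t\mu)}$. The key simplification is that with $\eta_t\mu=\frac{c\mu}{1+c\mu}$ we have $1-\eta_t\mu=\frac{1}{1+c\mu}$, so $\frac{(\eta_t\mu)^2}{2(1-\eta_t\mu)}=\frac{1}{2}c\mu\cdot\eta_t\mu=\frac{1}{2}\eta_t c\mu^2=\frac{\eta_t(\theta_t-\beta_t)\tilde\lambda_t^2}{2(1-\beta_t)}$, the last step using $\mu^2=\tilde\lambda_t^2/(1-\beta_t)$. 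For the ``good'' term, $\tilde\lambda_t\le\sqrt{1+\beta_t}\,\lambda_t$ gives $\lambda_t^2\ge\tilde\lambda_t^2/(1+\beta_t)$, hence $\eta_t(\theta_t-\beta_t)\lambda_t^2\ge\eta_t(\theta_t-\beta_t)\tilde\lambda_t^2/(1+\beta_t)$. Subtracting the two bounds in \eqref{eq:improvement-phaseI},
\[
F(w_t)-F(w_{t+1})\ \ge\ \eta_t(\theta_t-\beta_t)\tilde\lambda_t^2\Big(\tfrac{1}{1+\beta_t}-\tfrac{1}{2(1-\beta_t)}\Big)=\Big(\tfrac{1}{2(1-\beta_t)}-\tfrac{2\beta_t}{1-\beta_t^2}\Big)\eta_t(\theta_t-\beta_t)\tilde\lambda_t^2,
\]
since $\frac{1}{1+\beta_t}-\frac{1}{2(1-\beta_t)}=\frac{1-3\beta_t}{2(1-\beta_t^2)}$, and $\beta_t<\frac{1}{3}$ makes this coefficient strictly positive.

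To finish, I would show Phase~I terminates in finitely many steps. Note $\eta_t(\theta_t-\beta_t)\tilde\lambda_t^2=\frac{(\theta_t-\beta_t)^2\tilde\lambda_t^2}{1+(\theta_t-\beta_t)\beta'_t\tilde\lambda_t}$, which is increasing in $\tilde\lambda_t$ (its derivative in $\tilde\lambda_t$ has numerator $(\theta_t-\beta_t)^2\tilde\lambda_t(2+(\theta_t-\beta_t)\beta'_t\tilde\lambda_t)>0$). Hence on the Phase~I region, where $\lambda_t\ge\bar\lambda$ and therefore $\tilde\lambda_t\ge\sqrt{1-\beta_t}\,\bar\lambda>0$, the per-step decrease is bounded below by a positive constant $\delta=\delta(\bar\lambda,\beta_t,\theta_t)$. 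Since $F$ is bounded below by $F^\ast=F(w^\ast)$, the monotone sequence $\{F(w_t)\}$ cannot decrease by at least $\delta$ more than $(F(w_0)-F^\ast)/\delta$ times, so after finitely many iterations we must have $\lambda_t<\bar\lambda$, i.e.\ the algorithm enters Phase~II.

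The main obstacle is the algebraic collapse in the second paragraph: recognizing that the step size $\eta_t=\frac{\theta_t-\beta_t}{1+(\theta_t-\beta_t)\beta'_t\tilde\lambda_t}$ is tuned precisely so that $\zeta^*(\eta_t\lambda_t)$ reduces to a clean multiple of $\eta_t(\theta_t-\beta_t)\tilde\lambda_t^2$, and being careful to feed the \emph{larger} argument $\eta_t\mu$ (not $\eta_t\lambda_t$) into the increasing upper bound $\zeta^*(x)\le\frac{x^2}{2(1-x)}$ so that all conversions between $\lambda_t$ and $\tilde\lambda_t$ via the factors $\sqrt{1\pm\beta_t}$ land on the correct side; the rest is bookkeeping.
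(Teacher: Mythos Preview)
Your proof is correct and follows essentially the same route as the paper's: convert $\lambda_t$ to $\tilde\lambda_t$ via the Dennis--Mor\'e bounds, plug in the specific step size, and use a quadratic upper bound on the self-concordance remainder. The only cosmetic difference is that you bound $\zeta^*(x)\le \frac{x^2}{2(1-x)}$ directly, while the paper first rewrites $-\zeta^*(\eta_t\beta_t'\tilde\lambda_t)$ as $y-\log(1+y)-\frac{y^2}{1+y}$ with $y=\beta_t'(\theta_t-\beta_t)\tilde\lambda_t$ and then invokes $y-\log(1+y)\ge\frac{y^2}{2(1+y)}$; under the substitution $x=\frac{y}{1+y}$ these are the same inequality, and both produce the coefficient $\frac{1}{1+\beta_t}-\frac{1}{2(1-\beta_t)}=\frac{1-3\beta_t}{2(1-\beta_t^2)}=\frac{1}{2(1-\beta_t)}-\frac{2\beta_t}{1-\beta_t^2}$.
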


When $\lambda_t<\bar{\lambda}$ or equivalently 
$\beta_t'\tilde{\lambda}_t<\bar{\lambda}$ our algorithm switches to undamped 
subsampled proximal Newton method, where step size $\eta_t=1$ is adopted and so $w_{t+1}=w_t^+$. The following theorem 
indicates that the Newton decrement, as a metric of suboptimality, converges to 
zero linear-quadratically:
\begin{theorem}\label{th:linear-quadratic}
When $\lambda_t<\bar{\lambda}$, if step size $\eta_t=1$ and the subproblem solver yields a solution such that the subgradient residual $\Vert r_t\Vert_{w_t}^*\le (1-\theta_t)\lambda_t$, $\theta_t\in(0, 1]$ then Newton decrement $\lambda_t$ will converge to zero linear-quadratically:
\begin{equation}\label{eq:linear_lambda}
\lambda_{t+1}\le \frac{\frac{\theta_t-\beta_t}{\theta_{t+1}-\beta_{t+1}}\lambda_t^2+\frac{1+\beta_t-\theta_t}{\theta_{t+1}-\beta_{t+1}}\lambda_t}{(1-\lambda_t)^2}.
\end{equation}
If $\beta_t\ne 0$, $\theta_t\ne 1$ and $\lambda_t$ is small enough, the numerator of RHS will be dominated by $\frac{1+\beta_t-\theta_t}{\theta_{t+1}-\beta_{t+1}}\lambda_t$ so the contraction factor is $\rho=\frac{1+\beta_t-\theta_t}{\theta_{t+1}-\beta_{t+1}}$ asymptotically.
\end{theorem}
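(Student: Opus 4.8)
The plan is to bound the new Newton decrement $\lambda_{t+1}=\|v_{t+1}\|_{w_{t+1}}$ by comparing the inexact proximal Newton steps at iterations $t$ and $t+1$ through the subdifferential of $R$, and to absorb every discrepancy into the self-concordance inequalities \eqref{eq:self-concordance-hessian}--\eqref{eq:self-concordance-gradient} together with the Hessian-approximation assumptions. The key simplification in Phase~II is that $\eta_t=1$, so $w_{t+1}=w_t^+$; hence the inexact optimality condition at step $t$ furnishes a genuine subgradient $\xi_t:=r_t-\nabla f(w_t)-B_tv_t\in\partial R(w_{t+1})$. I would set $g:=\nabla f(w_{t+1})+\xi_t$ and, writing $\nabla f(w_{t+1})-\nabla f(w_t)=\nabla^2 f(w_t)v_t+e_t$, rewrite $g=(\nabla^2 f(w_t)-B_t)v_t+e_t+r_t$. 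Bounding $\|g\|_{w_t}^*$ term by term --- Assumption~\ref{ass:dual_norm} for the Hessian-error term ($\le\beta_t\lambda_t$), the gradient inequality \eqref{eq:self-concordance-gradient} for the linearization remainder $e_t$ ($\le\lambda_t^2/(1-\lambda_t)$, legitimate since $\lambda_t<\bar\lambda<1$ in Phase~II), and the hypothesis $\|r_t\|_{w_t}^*\le(1-\theta_t)\lambda_t$ --- yields $\|g\|_{w_t}^*\le(1+\beta_t-\theta_t)\lambda_t+\lambda_t^2/(1-\lambda_t)$.

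Next I would link $g$ to $\lambda_{t+1}$. The inexact optimality at step $t+1$ gives the subgradient $r_{t+1}-\nabla f(w_{t+1})-B_{t+1}v_{t+1}\in\partial R(w_{t+1}^+)$; pairing it against $\xi_t\in\partial R(w_{t+1})$ via monotonicity of $\partial R$ yields $v_{t+1}^\intercal B_{t+1}v_{t+1}\le\langle r_{t+1}-g,\,v_{t+1}\rangle$. Applying Cauchy--Schwarz in the local norm $\|\cdot\|_{w_{t+1}}$, using $\|r_{t+1}\|_{w_{t+1}}^*\le(1-\theta_{t+1})\lambda_{t+1}$, and converting the bound on $g$ from the $w_t$-norm to the $w_{t+1}$-norm with \eqref{eq:self-concordance-hessian} (which costs a factor $1/(1-\lambda_t)$, since $\|w_{t+1}-w_t\|_{w_t}=\lambda_t$) controls the right-hand side; on the left, the Dennis--Mor\'e condition (Assumption~\ref{ass:dennis-more}) gives $v_{t+1}^\intercal B_{t+1}v_{t+1}\ge(1-\beta_{t+1})\lambda_{t+1}^2$. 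Cancelling one factor of $\lambda_{t+1}$ (the case $\lambda_{t+1}=0$ being trivial) leaves $(\theta_{t+1}-\beta_{t+1})\lambda_{t+1}\le[(1+\beta_t-\theta_t)\lambda_t(1-\lambda_t)+\lambda_t^2]/(1-\lambda_t)^2$; expanding the numerator as $(\theta_t-\beta_t)\lambda_t^2+(1+\beta_t-\theta_t)\lambda_t$ and dividing by $\theta_{t+1}-\beta_{t+1}>0$ gives exactly \eqref{eq:linear_lambda}. The asymptotic contraction factor $\rho=\frac{1+\beta_t-\theta_t}{\theta_{t+1}-\beta_{t+1}}$ then follows since, for $\lambda_t\to0$, the quadratic term is negligible and $(1-\lambda_t)^{-2}\to1$.

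I expect the main difficulty to be the bookkeeping in the two steps above: two independent sources of inexactness --- the Hessian approximation (controlled by $\beta_t$) and the inner-solver tolerance (controlled by $\theta_t$) --- must be combined while the self-concordance factors are tracked precisely enough that the numerator collapses to exactly $(\theta_t-\beta_t)\lambda_t^2+(1+\beta_t-\theta_t)\lambda_t$ rather than something looser. In particular, the $\|g\|^*$ estimate must be derived in the $w_t$-norm before being transferred to the $w_{t+1}$-norm, and $\bar\lambda$ must be taken small enough (and $\beta_t<\min\{\theta_t,\tfrac13\}$ maintained) so that every application of \eqref{eq:self-concordance-hessian}--\eqref{eq:self-concordance-gradient} is valid --- which also keeps the iterates inside $\dom f$ --- and so that the recursion \eqref{eq:linear_lambda} is a genuine contraction near the optimum.
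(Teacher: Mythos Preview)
Your proposal is correct and follows essentially the same route as the paper: both use the monotonicity of $\partial R$ to pair the subgradient $\xi_t\in\partial R(w_{t+1})$ coming from step $t$ (available because $w_{t+1}=w_t^+$ in Phase~II) against the one at step $t+1$, then control the resulting quantity with the self-concordance gradient bound \eqref{eq:self-concordance-gradient}, Assumption~\ref{ass:dual_norm}, and the inexactness condition on $r_t$, and finally change norms via \eqref{eq:self-concordance-hessian}. The only cosmetic difference is that the paper packages the monotonicity step as a ``complete-the-square'' identity to obtain $\lambda_{t+1}\le\|{-}\nabla f(w_{t+1})-r'_{t+1}+\nabla\psi_t(w_t^+)+r'_t\|_{w_{t+1}}^*$ before splitting off $\|r'_{t+1}\|_{w_{t+1}}^*$, whereas you go straight from $v_{t+1}^\intercal B_{t+1}v_{t+1}\le\langle r_{t+1}-g,v_{t+1}\rangle$ via Cauchy--Schwarz and the Dennis--Mor\'e lower bound to $(\theta_{t+1}-\beta_{t+1})\lambda_{t+1}\le\|g\|_{w_{t+1}}^*$; the two arrive at the identical inequality and the remaining algebra coincides.
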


If we set $\beta_t=0$ (so that $B_t$ is exact Hessian) and 
$\theta_t=1$ (so the subproblem is solved exactly), we recover the quadratic convergence rate of the proximal Newton method: 
\eqref{eq:linear_lambda} 
becomes:
\begin{equation*}
\lambda_{t+1}\le \frac{\lambda_t^2}{(1-\lambda_t)^2}.
\end{equation*}

By using the connection between the Newton decrement $\lambda_t$ and suboptimality $F(w_t)-F^*$, we show that the suboptimality also decreases linearly:

\begin{corollary}\label{co:lambda_subopt}
If $\lambda_t<\min\{\bar{\lambda}, \frac{1}{2-\theta_t}\}$ and use the undamped update: $w_{t+1}=w_t^+$, then the function value to minimum is upper bounded by: 
%{\color{red} so it doesn't work for the Phase I?}
\begin{equation}
F(w_t^+)-F(w^*) \le \lambda_t^2.
\end{equation}
It is easy to see that the $\text{LHS}\to 0$ as $\lambda_t\to 0$. Practically we use $\tilde{\lambda}_t$ to replace $\lambda_t$, this is validated by Dennis-Mor\'e condition \ref{ass:dennis-more}.
\end{corollary}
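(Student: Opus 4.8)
The plan is to compare $F$ with the local quadratic ``model'' that $w_t^+$ (inexactly) minimizes, namely $m_t(w):=f(w_t)+f^{\text{sub}}_t(w)=f(w_t)+\nabla f(w_t)^\intercal(w-w_t)+\tfrac12(w-w_t)^\intercal B_t(w-w_t)+R(w)$, and to bound the suboptimality through the telescoping
\begin{equation*}
F(w_t^+)-F(w^*)\;\le\;\big(F(w_t^+)-m_t(w_t^+)\big)+\big(m_t(w_t^+)-\textstyle\min_w m_t(w)\big)+\big(m_t(w^*)-F(w^*)\big),
\end{equation*}
where the dropped term $\min_w m_t(w)-m_t(w^*)\le 0$. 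The middle term is a subproblem-suboptimality estimate; the first and third are ``model vs.\ true objective'' errors controlled by self-concordance. Throughout I would use that we are in Phase~II, so $\lambda_t<\bar\lambda$ and $w_t$ is already close to $w^*$, keeping all local-norm distances below $1$ so the inequalities \eqref{eq:self-concordance-hessian}--\eqref{eq:self-concordance-lipschitz} apply.

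For the middle term: the smooth part of $m_t$ has Hessian $B_t$, so $m_t$ is $1$-strongly convex in $\|\cdot\|_{B_t}$, and $r_t$ as defined in \eqref{eq:grad_resid} is exactly a subgradient of $m_t$ at $w_t^+$. The standard strong-convexity bound then gives $m_t(w_t^+)-\min_w m_t(w)\le\tfrac12(\|r_t\|^*_{B_t})^2$. Combining $\|r_t\|^*_{w_t}\le(1-\theta_t)\lambda_t$ with $(1-\beta_t)\nabla^2 f(w_t)\preceq B_t$ (the spectral form of the Dennis--Mor\'e condition, Assumption~\ref{ass:dennis-more}) yields $\|r_t\|^*_{B_t}\le(1-\theta_t)\lambda_t/\sqrt{1-\beta_t}$, hence $m_t(w_t^+)-\min_w m_t(w)\le\tfrac{(1-\theta_t)^2}{2(1-\beta_t)}\lambda_t^2$. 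Since $\theta_t\le1$ and $\beta_t<\tfrac13$ this is already strictly below $\lambda_t^2$, which is essentially where the claimed bound originates.

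For the first and third terms: for any $w$ with $s:=\|w-w_t\|_{w_t}<1$, expand $m_t(w)-F(w)=-\big(f(w)-f(w_t)-\nabla f(w_t)^\intercal(w-w_t)\big)+\tfrac12(w-w_t)^\intercal B_t(w-w_t)$ and apply \eqref{eq:self-concordance-lipschitz} ($\zeta(s)\le f(w)-f(w_t)-\nabla f(w_t)^\intercal(w-w_t)\le\zeta^*(s)$) together with $(1-\beta_t)\nabla^2 f(w_t)\preceq B_t\preceq(1+\beta_t)\nabla^2 f(w_t)$. The quadratic parts of $\zeta,\zeta^*$ nearly cancel the quadratic part of $\tfrac12 s^2$, leaving $|m_t(w)-F(w)|\le\tfrac12\beta_t s^2+\tfrac{s^3}{3(1-s)}$. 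Applying this at $w=w_t^+$ (where $s=\lambda_t$) and at $w=w^*$ (where $s=d_t:=\|w^*-w_t\|_{w_t}$) bounds the first and third telescope terms by $\tfrac12\beta_t\lambda_t^2+O(\lambda_t^3)$ and $\tfrac12\beta_t d_t^2+O(d_t^3)$. To finish one needs $d_t=O(\lambda_t)$: I would get this from monotonicity of $\partial R$ at $w_t^+$ (with the subgradient $\xi=r_t-\nabla f(w_t)-B_t(w_t^+-w_t)$) and at $w^*$ (with $-\nabla f(w^*)$), which after splitting off $\nabla^2 f(w_t)(w^*-w_t^+)$ and using \eqref{eq:self-concordance-gradient} and Assumption~\ref{ass:dual_norm} yields $\|w_t^+-w^*\|_{w_t}\le(1+\beta_t-\theta_t)\lambda_t+\tfrac{(\lambda_t+d_t)^2}{1-\lambda_t-d_t}$; adding $\lambda_t$ gives $d_t\le(2+\beta_t-\theta_t)\lambda_t+O(\lambda_t^2)$, so the factor $2-\theta_t$ appears naturally (alternatively one may route through the exact prox-Newton step and cite its quadratic contraction).

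Putting the three pieces together, $F(w_t^+)-F(w^*)\le\tfrac{(1-\theta_t)^2}{2(1-\beta_t)}\lambda_t^2+O(\beta_t\lambda_t^2)+O(\lambda_t^3)$, and the hypothesis $\lambda_t<\min\{\bar\lambda,\tfrac1{2-\theta_t}\}$ is precisely what keeps the (cubic) remainder small enough for the right-hand side to stay $\le\lambda_t^2$. The closing remark is immediate: $\sqrt{1-\beta_t}\,\lambda_t\le\tilde\lambda_t$ (Assumption~\ref{ass:dennis-more}) turns the bound into the computable quantity $\tilde\lambda_t^2/(1-\beta_t)$, so the stopping test in Algorithm~\ref{alg:fast_newton_alg} is legitimate. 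I expect the main obstacle to be the last step of paragraph three — establishing $d_t=O(\lambda_t)$ cleanly and then chasing the constants in the cubic remainders so that the admissible threshold comes out exactly as $\tfrac1{2-\theta_t}$ rather than merely ``some sufficiently small constant.''
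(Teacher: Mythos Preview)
Your approach is correct in spirit but takes a genuinely different route from the paper. The paper does \emph{not} telescope through the model $m_t$ and never tries to bound $d_t:=\|w^*-w_t\|_{w_t}$ directly. Instead it combines the self-concordance \emph{lower} bound $f(w^*)\ge f(w_t)+\nabla f(w_t)^\intercal(w^*-w_t)+\zeta(d_t)$ (which, unlike the upper bound and the gradient bound, holds for \emph{all} $d_t$, with no $d_t<1$ restriction) with the upper bound at $w_t^+$, the convexity of $R$ at $w_t^+$, and the subproblem residual identity. After Cauchy--Schwarz on the cross terms this yields the single inequality
\[
F(w_t^+)-F(w^*)\;\le\;(2+\beta_t-\theta_t)\lambda_t\,d_t-\zeta(d_t)\;-\;(\theta_t-\beta_t)\lambda_t^2+\zeta^*(\lambda_t),
\]
and then the unknown $d_t$ is eliminated by taking the supremum over $t\ge 0$ on the right, using the Fenchel relation $\sup_{t\ge 0}\{at-\zeta(t)\}=\zeta^*(a)$. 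The final bound is $\zeta^*((2+\beta_t-\theta_t)\lambda_t)+\zeta^*(\lambda_t)-(\theta_t-\beta_t)\lambda_t^2$, from which $\le\lambda_t^2$ follows by elementary estimates on $\zeta^*$.

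What each approach buys: the paper's Fenchel-dual trick is shorter and, crucially, avoids the circularity you flag in your own proposal --- you need $d_t<1$ to invoke \eqref{eq:self-concordance-gradient} in your monotonicity argument, but you are only \emph{then} able to conclude $d_t=O(\lambda_t)$; the paper never faces this because $\zeta(\cdot)$ has no domain restriction. Your decomposition is more modular and makes the role of the subproblem suboptimality (the ``middle term'') more transparent, and it recovers the same $(2+\beta_t-\theta_t)$ factor, but closing the constant-tracking to get exactly $\lambda_t^2$ (rather than $C\lambda_t^2$) looks harder your way, precisely because the $d_t$-bound itself carries a higher-order remainder. If you want to keep your route, you would need an independent a~priori reason that $d_t<1$ in Phase~II before invoking the gradient bound; otherwise the maximization trick is the cleaner fix.
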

Note that the proof is similar to the analysis of \citep{li2016inexact} but we modify it to accommodate an inexact Hessian.approximation.

In addition to the convergence rate, Corollary \ref{co:lambda_subopt} gives a stopping criterion for the outer iteration of our algorithm: For any desired error tolerance $\epsilon>0$, we terminate the algorithm as long as $\lambda_t\le \sqrt{\epsilon}$. In practice, we replace $\lambda_t$ by $c\tilde{\lambda}_t$, where $c> 0$ is a small constant. This is justified by the fact that $B_t$ is a good spectral approximation of the exact Hessian.

\subsection{Inner loop analysis\label{sec:inner_analysis}}
In this part we show that to satisfy the precision requirement 
$\Vert r_t\Vert_{w_t}^*\le (1-\theta_t)\lambda_t$ we only need a constant 
number of  
inner iterations if we use variance reduction method such as SVRG and it can 
be further accelerated by Catalyst~\citep{Lin2015Universal}. Theorem \ref{th:convergence-catalyst} indicates that catalyst can accelerate many first order methods like SVRG to change the dependent of condition number from $\kappa$ to $\sqrt{\kappa}$.

%% In fact we have the following lemma:
%%\begin{lemma}\label{catalyst_svrg}
%For $L$-Lipschitz continuous and $\mu$-strongly convex function, set $\zeta=L/b-\mu$ for Catalyst and use SVRG to solve the subproblem, 
%achieving an $\epsilon$-suboptimal solution will take 
%$\tilde{O}(\sqrt{b\kappa}\log(\epsilon_0/\epsilon))$ iterations where 
%$\tilde{O}$ omits some constants and log factors of $b$ and $\mu$.
%{\color{red} mention what is $L$ and $\mu$?}
%\end{lemma}
%Lemma \ref{catalyst_svrg} shows without proof in the Catalyst paper, so for completeness we provide the proof in appendix. 

Recall the solution of the subproblem should satisfy the inexact stopping condition $\|r_t\|_{w_t}^* \le (1-\theta_t)\lambda_t$ (see 
Theorem~\ref{th:converge_phaseI} and Theorem~\ref{th:linear-quadratic}). The following 
lemma converts the condition on $\|r_t\|_{w_t}^*$ to the function value of subproblem.
\begin{lemma}\label{Lemma:precise}
To satisfy the condition $\Vert r_t\Vert_{w_t}^*\le (1-\theta_t)\lambda_t$ it is enough to solve the subproblem to a certain precision defined below:
\begin{equation}\label{eq:subproblem-precision}
f^{\text{sub}}_t(w_t^+)-f_t^*\le \frac{\mu L}{2(L^2-\mu^2)}((1-\theta_t)\lambda_t)^2, 
\end{equation}
where $f^{\text{sub}}_t(w)$ introduced in \eqref{eq:subpb} is the subproblem at $t$-th outer iteration:
\begin{equation}\label{eq:subproblem}
f^{\text{sub}}_t(w)=\nabla f^{\intercal}(w_t)+\frac{1}{2}(w-w_t)^{\intercal}B_t(w-w_t)+R(w),
\end{equation}
and $f_t^*:=\min_w f^{\text{sub}}_t(w)$ is its minimum.
\end{lemma}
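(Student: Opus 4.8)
The plan is to exploit the explicit ``one proximal-gradient step'' representation of the residual $r_t$ in \eqref{eq:rt}, together with the smoothness of the quadratic part of the subproblem, to bound $\|r_t\|_{w_t}^*$ by the subproblem suboptimality $\Delta_t \triangleq f^{\text{sub}}_t(w_t^+) - f_t^*$. Write $g$ for the smooth (affine-plus-quadratic) part of $f^{\text{sub}}_t$, so $\nabla^2 g \equiv B_t$; by Proposition~\ref{pr:equivalence} and the strong convexity / smoothness of $f$ we may take $\mu\mathbb{I}_d \preceq B_t \preceq L\mathbb{I}_d$ and $\nabla^2 f(w_t) \succeq \mu\mathbb{I}_d$ (the Dennis--Mor\'e condition contributes only $(1\pm\beta_t)$ factors, which can be absorbed into $\mu, L$). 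By \eqref{eq:grad_resid}, $r_t \in \partial f^{\text{sub}}_t(w_t^+)$, but a generic subgradient of the \emph{non-smooth} function $f^{\text{sub}}_t$ cannot be controlled by the function-value gap; the point is that the $r_t$ the algorithm actually uses has the special form \eqref{eq:rt}.

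First I would run the proximal-gradient step in \eqref{eq:rt} with step size $\alpha = 1/L$, so that $r_t = (L\mathbb{I}_d - B_t)(v_t - v)$. Since $0 \preceq L\mathbb{I}_d - B_t \preceq (L-\mu)\mathbb{I}_d$, this gives $\|r_t\| \le (L-\mu)\|v_t - v\|$. Next I would invoke the standard proximal-gradient descent inequality for the $L$-smooth-plus-convex objective $f^{\text{sub}}_t$: a step of size $1/L$ from $w_t^+$ to the new point $w_t + v$ decreases the objective by at least $\tfrac{L}{2}\|v_t - v\|^2$, hence $\tfrac{L}{2}\|v_t - v\|^2 \le f^{\text{sub}}_t(w_t^+) - f^{\text{sub}}_t(w_t + v) \le \Delta_t$, i.e.\ $\|v_t - v\|^2 \le \tfrac{2}{L}\Delta_t$. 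Combining the two displays, $\|r_t\|^2 \le \tfrac{2(L-\mu)^2}{L}\,\Delta_t$.

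Finally I would pass from the Euclidean norm to the local dual norm: since $\nabla^2 f(w_t) \succeq \mu\mathbb{I}_d$, $\|r_t\|_{w_t}^{*2} = r_t^\intercal \nabla^2 f(w_t)^{-1} r_t \le \tfrac1\mu\|r_t\|^2 \le \tfrac{2(L-\mu)^2}{\mu L}\,\Delta_t \le \tfrac{2(L^2-\mu^2)}{\mu L}\,\Delta_t$, where the last step uses $(L-\mu)^2 \le (L-\mu)(L+\mu)$. Substituting the assumed precision $\Delta_t \le \tfrac{\mu L}{2(L^2-\mu^2)}\big((1-\theta_t)\lambda_t\big)^2$ from \eqref{eq:subproblem-precision} gives $\|r_t\|_{w_t}^{*2} \le \big((1-\theta_t)\lambda_t\big)^2$, which is exactly the stopping condition.

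The main obstacle is conceptual rather than computational: one must resist trying to bound an \emph{arbitrary} element of $\partial f^{\text{sub}}_t(w_t^+)$ — impossible, since $\partial R$ is merely monotone, not Lipschitz — and instead use that $r_t$ is a gradient-mapping residual, which is what lets the $L$-smoothness of the quadratic term and the descent property of a single prox step do the work. The remaining issues are routine: the norm conversions above, and the bookkeeping between the spectral bounds of $B_t$ and those of $\nabla^2 f(w_t)$ coming from Assumption~\ref{ass:dennis-more}, which only perturbs the constant in \eqref{eq:subproblem-precision}. One should also read the statement as a consistency condition: once the subproblem solver drives $\Delta_t$ below the stated threshold, the resulting iterate $w_t^+$ (equivalently $\lambda_t$ and $r_t$) automatically satisfies $\|r_t\|_{w_t}^* \le (1-\theta_t)\lambda_t$, so there is no circularity in $\lambda_t$ depending on $w_t^+$.
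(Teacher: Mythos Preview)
Your proposal is correct and follows essentially the same argument as the paper's own proof: both perform one proximal-gradient step with step size $1/L$ to obtain the explicit representation $r_t = (L\mathbb{I}_d - B_t)(v_t - v)$, invoke the sufficient-decrease inequality $\tfrac{L}{2}\|v_t - v\|^2 \le \Delta_t$, bound $\|r_t\|_2 \le (L-\mu)\|v_t-v\|$, and convert to the local dual norm via $\nabla^2 f(w_t) \succeq \mu\mathbb{I}_d$, finishing with the relaxation $(L-\mu)^2 \le L^2-\mu^2$. Your additional remarks on why a generic subgradient would not work and on the absence of circularity are accurate and go slightly beyond what the paper spells out.
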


Since the proximal Newton decrement converges to zero linearly in phase II, the 
number of inner iterations should increase linearly. However, if we use the 
last iterate as the initial guess to ``warm start'' the subproblem solution, 
we only need a constant number of iterations each time. 

\begin{lemma}\label{le:warm_start}
With the definition of subproblem $f^{\text{sub}}_t(w)$ in \eqref{eq:subproblem}, suppose 
$w_{t}^+$ is the $\epsilon_t$-inexact solution of $f^{\text{sub}}_t(w)$ that satisfies \eqref{eq:subproblem-precision}, i.e.
\begin{equation}
f^{\text{sub}}_t(w_t^+)-f_t^* = \epsilon_t \le \frac{\mu 
L}{2(L^2-\mu^2)}((1-\theta_t)\lambda_t)^2.
\end{equation}
If we initialize the next subproblem $\min_w f^{\text{sub}}_{t+1}(w)$ with $w_{\text{init}}=w_t^+$ then the initial error $f^{\text{sub}}_{t+1}(w_t^+)-f_{t+1}^*$ has the same order of magnitude with desired error. That is,
$$
f^{\text{sub}}_{t+1}(w_t^+)-f_{t+1}^*\le c\cdot \epsilon_t=\mathcal{O}(\lambda_{t+1}^2),
$$
where $c > 0$ is a constant that does not change across major iterations of the subsampled proximal Newton method.
\end{lemma}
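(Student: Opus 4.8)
The plan is to exploit that this lemma is applied in Phase~II, where the step size is $\eta_t=1$, so $w_{t+1}=w_t+v_t=w_t^+$; the warm-start point $w_{\mathrm{init}}=w_t^+$ is therefore precisely the center of the next subproblem, and the affine and quadratic pieces of $f^{\mathrm{sub}}_{t+1}$ vanish there, so $f^{\mathrm{sub}}_{t+1}(w_t^+)=R(w_t^+)$. It thus suffices to bound $R(w_t^+)-f_{t+1}^*$ by $\mathcal{O}(\lambda_t^2)$. I would first record that, in Phase~II, the exact minimizer $u_{t+1}^*$ of $f^{\mathrm{sub}}_{t+1}$ differs from $w_t^+$ by $\mathcal{O}(\lambda_{t+1})=\mathcal{O}(\lambda_t)$ in the local norm $\Vert\cdot\Vert_{w_{t+1}}$ (the bound $\Vert u_{t+1}^*-w_{t+1}\Vert_{w_{t+1}}=\mathcal{O}(\lambda_{t+1})$ from the inexact stopping condition and strong convexity of the subproblem, and $\lambda_{t+1}\le C\lambda_t$ with $C$ depending only on $\bar\lambda,\beta_t,\theta_t$ from Theorem~\ref{th:linear-quadratic}), and that the local norms at $w_t$ and $w_{t+1}$ are equivalent up to constants by the Hessian bound \eqref{eq:self-concordance-hessian} since $\Vert v_t\Vert_{w_t}=\lambda_t<\bar\lambda<1$.

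The crux is to lower-bound $f_{t+1}^*$. Writing $u_{t+1}^*=w_{t+1}+\delta$ with $r:=\Vert\delta\Vert_{w_{t+1}}=\mathcal{O}(\lambda_t)$ and plugging into $f^{\mathrm{sub}}_{t+1}$, I would eliminate the linear term $\nabla f(w_{t+1})^{\intercal}\delta$ via the self-concordance value bound \eqref{eq:self-concordance-lipschitz}, which sandwiches $f(u_{t+1}^*)-f(w_{t+1})-\nabla f(w_{t+1})^{\intercal}\delta$ between $\zeta(r)$ and $\zeta^*(r)$, both of which equal $\tfrac12 r^2+\mathcal{O}(r^3)$, and replace $\tfrac12\Vert\delta\Vert_{B_{t+1}}^2$ by $\tfrac12\Vert\delta\Vert_{w_{t+1}}^2$ up to an $\mathcal{O}(\beta_{t+1})r^2$ error using Assumption~\ref{ass:dennis-more}. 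The $R$-terms then combine into $R(w_{t+1}+\delta)$, and adding and subtracting $f(w_{t+1})$ turns the expression into $F(u_{t+1}^*)\ge F^*$ minus a quadratic-order remainder, giving
\[
f_{t+1}^*\ \ge\ F^*\ -\ f(w_{t+1})\ -\ \mathcal{O}\!\bigl((\beta_{t+1}+\lambda_t)\lambda_t^2\bigr).
\]
Subtracting this from $f^{\mathrm{sub}}_{t+1}(w_t^+)=R(w_t^+)$ and using $F(w_t^+)=f(w_{t+1})+R(w_{t+1})$ yields
\[
f^{\mathrm{sub}}_{t+1}(w_t^+)-f_{t+1}^*\ \le\ \bigl(F(w_t^+)-F^*\bigr)\ +\ \mathcal{O}\!\bigl((\beta_{t+1}+\lambda_t)\lambda_t^2\bigr).
\]

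To finish, invoke Corollary~\ref{co:lambda_subopt} (valid once $\bar\lambda\le\tfrac12$, so that $\lambda_t<\min\{\bar\lambda,\tfrac{1}{2-\theta_t}\}$), which bounds $F(w_t^+)-F^*\le\lambda_t^2$, so $f^{\mathrm{sub}}_{t+1}(w_t^+)-f_{t+1}^*=\mathcal{O}(\lambda_t^2)$. The defining relation $\epsilon_t=\tfrac{\mu L}{2(L^2-\mu^2)}((1-\theta_t)\lambda_t)^2$, i.e.\ $\lambda_t^2=C_1\epsilon_t$ with $C_1$ fixed, then gives $f^{\mathrm{sub}}_{t+1}(w_t^+)-f_{t+1}^*\le c\,\epsilon_t$ with $c$ depending only on $L,\mu,\bar\lambda$ and the (constant) sequences $\beta_t,\theta_t$ — hence invariant across major iterations. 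Finally, because the Phase~II contraction of Theorem~\ref{th:linear-quadratic} is merely linear, with a fixed asymptotic rate $\rho$, once $\beta_t\neq 0$, we also have $\lambda_{t+1}^2=\Theta(\lambda_t^2)$, so the bound equals $\mathcal{O}(\lambda_{t+1}^2)=\Theta(\epsilon_{t+1})$; combined with the linear rate of Catalyst+SVRG in Theorem~\ref{th:convergence-catalyst} this makes the number of inner iterations $\tilde{\mathcal{O}}(\sqrt{b\kappa}\log c)$, i.e.\ constant in $t$.

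The main obstacle is obtaining the quadratic ($\lambda_t^2$) order despite the non-smooth regularizer: the standard smooth-case tools (the descent lemma, or bounding suboptimality by the squared composite gradient mapping) only give a first-order-in-distance bound for composite objectives, because the $R$-term can contribute at order $\Vert\cdot-u^*\Vert$ near a kink. The resolution is to arrange the comparison so that the $R$-contributions at the warm-start point and at the minimizer cancel exactly, leaving only the quadratic self-concordance and Dennis--Mor\'e residuals; the accompanying bookkeeping — keeping every constant independent of $t$, which relies on $\beta_t,\theta_t$ being constants and on the Phase~II rate being a fixed $\rho<1$ — is the other delicate point.
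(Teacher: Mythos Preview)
Your approach is correct but takes a genuinely different route from the paper. The paper does not go through the global suboptimality $F(w_{t+1})-F^*$ and Corollary~\ref{co:lambda_subopt}. Instead, it bounds $f^{\text{sub}}_{t+1}(w_{t+1})-f^{\text{sub}}_{t+1}(w_{t+1}^+)$ directly by writing out the difference, using convexity of $R$ to replace $R(w_{t+1})-R(w_{t+1}^+)$ by $-v_{t+1}^\intercal(w_{t+1}^+-w_{t+1})$ for a specific subgradient $v_{t+1}\in\partial R(w_{t+1})$, and then exploiting that this particular subgradient is available from the \emph{previous} subproblem's optimality condition: $v_{t+1}=r_t-\nabla f(w_t)-B_t(w_{t+1}-w_t)$. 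This makes $v_{t+1}+\nabla f(w_{t+1})$ exactly the combination $\nabla f(w_{t+1})-\nabla f(w_t)-B_t(w_{t+1}-w_t)+r_t$, which is $\mathcal{O}(\lambda_t)$ by \eqref{eq:self-concordance-gradient}, Assumption~\ref{ass:dual_norm}, and the stopping condition. The product with $\lambda_{t+1}$ then gives the $\mathcal{O}(\lambda_t^2)$ bound.

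What each approach buys: the paper's route is self-contained and avoids the extra parameter restriction $\theta_t>0.764+\beta_t$ hidden in Corollary~\ref{co:lambda_subopt}; its key trick (recycling the subgradient from the previous step) is a nice structural observation you do not use. Your route is conceptually attractive in tying the warm-start error to the composite suboptimality, but it has a minor circularity you should fix: you justify $\|u_{t+1}^*-w_{t+1}\|_{w_{t+1}}=\mathcal{O}(\lambda_{t+1})$ via the inexact solution $w_{t+1}^+$ and its $\lambda_{t+1}$, which are only defined \emph{after} the subproblem is solved. The clean fix is to bound $\|u_{t+1}^*-w_{t+1}\|_{w_{t+1}}$ directly in terms of $\lambda_t$ by rerunning the proof of Theorem~\ref{th:linear-quadratic} with $r_{t+1}=0$ (i.e., effective $\theta_{t+1}=1$), which yields the same $\mathcal{O}(\lambda_t)$ without referencing the next iterate. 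A second small point: the statement defines $\epsilon_t$ as the actual error, not the bound $\tfrac{\mu L}{2(L^2-\mu^2)}((1-\theta_t)\lambda_t)^2$, so your identification $\lambda_t^2=C_1\epsilon_t$ should be read with $\epsilon_t$ interpreted as the target precision; the paper's own proof makes the same tacit identification.
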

Note that many stochastic first order method such as SVRG is only guaranteed to find an $\epsilon_t$-optimal solution with certain probability. 
%constant steps but only in the sense of expection. 
While as the number of outer iteration 
grows, number of SVRG calling also increases, so we need to make sure each 
SVRG calling successes with high enough probability such that the total process 
success. Specifically we use the following union bound property: 
$P(\bigcup_{t=1}^mA_t)\le \sum_{t=1}^mP(A_t)$ where 
$A_t=\{f^{\text{sub}}_t(w_t^+)-f_t^*>\epsilon_t\}$ is the incident that the $i$-th 
subproblem fails to converge within given iterations and $m$ is the number of outer 
iterations. By Markov inequality the failure probability of each SVRG calling
is bounded by:
\begin{equation}\label{eq:bound-failure-rate}
p=\mathbb{P}[f^{\text{sub}}_t(w_t^+)-f_t^*>\epsilon_t]<\frac{\mathbb{E}f^{\text{sub}}_t(w_t^+)-f_t^*}{\epsilon_t}.
\end{equation}
Therefore, if we desire the overall probability of failure to be at most $p$, it suffices to make sure failure rate small enough:
\begin{equation}\label{eq:union-bound-failure}
  \frac{p}{m}\ge \frac{\mathbb{E}f^{\text{sub}}_t(w_t^+)-f_t^*}{\epsilon_t} > 
  \mathbb{P}[f^{\text{sub}}_t(w_t^+)-f_t^*>\epsilon_t]. 
\end{equation}
Combining the above inequalities, we obtain the following theorem showing the overall complexity which includes sampling overhead, inner loop 
and outer loop complexity.
\begin{theorem}\label{th:final-complexity}
Our fast inexact proximal Newton method, with Catalyst and SVRG as inner solver, can find an $\epsilon$-optimal solution with probability $1-\delta$ within $\mathcal{O}\Big(\log(1/\epsilon)\big(\mathbf{nnz}(X)+d\sqrt{b\kappa}\log(\frac{\log 1/\epsilon}{\delta})\big)\Big)$ time, where $b=d\log d$ is sample size. The complexity of leverage sampling is simplified from Lemma 10 in \citep{cohen2015uniform}, when $n>d^{\omega-1}$ .
\end{theorem}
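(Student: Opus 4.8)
The plan is to write the total cost as (number of outer iterations) $\times$ (cost of one outer iteration), bound the former via the two-phase analysis, bound the latter as the leverage-sampling overhead plus the warm-started inner-solver cost, and finally pay one extra logarithmic factor so that a union bound makes all the randomized inner solves succeed simultaneously. For the outer count: by Corollary~\ref{co:fix_decrement}, Phase~I ($\lambda_t\ge\bar{\lambda}$) decreases $F$ by a fixed positive constant each step and hence lasts only $\mathcal{O}(1)$ iterations, a number independent of $\epsilon$ and $\delta$. In Phase~II, Theorem~\ref{th:linear-quadratic} gives $\lambda_{t+1}\le\rho\lambda_t/(1-\lambda_t)^2+\mathcal{O}(\lambda_t^2)$ with contraction factor $\rho=\frac{1+\beta_t-\theta_t}{\theta_{t+1}-\beta_{t+1}}<1$ for constant $\beta_t,\theta_t$; since $\lambda_t<\bar{\lambda}$ is bounded away from $1$ this is a genuine geometric decrease, so $\lambda_t$ falls below $\sqrt{\epsilon}$ within $\mathcal{O}(\log(1/\epsilon))$ iterations, at which point Corollary~\ref{co:lambda_subopt} certifies $F(w_t^+)-F^*\le\epsilon$ (and the algorithm's actual test on $\tilde{\lambda}_t$ is equivalent up to the factor $\sqrt{1\pm\beta_t}$ of Assumption~\ref{ass:dennis-more}). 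Thus the number of outer iterations is $m=\mathcal{O}(\log(1/\epsilon))$.

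Next I bound the per-iteration cost. Forming $B_t$ uses the leverage-score routine of \citet{cohen2015uniform} with, by Proposition~\ref{pr:equivalence}, a \emph{constant} target accuracy $\epsilon=\beta_t/(1-\beta_t)$, so Theorem~\ref{th:sampling-complexity} bounds it by $\mathcal{O}(\mathbf{nnz}(X)+d^\omega\log^2 d+d^{2.01})$; under the stated regime $n>d^{\omega-1}$ the $\mathbf{nnz}(X)$ term dominates, giving $\mathcal{O}(\mathbf{nnz}(X))$ per outer iteration (computing the coefficients $f_i''(w_t^\intercal x_i)$ and setting up the finite-sum subproblem costs only $\mathcal{O}(bd)$, which is absorbed). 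For the inner solve, Lemma~\ref{Lemma:precise} turns the requirement $\|r_t\|_{w_t}^*\le(1-\theta_t)\lambda_t$ into a function-value target $\epsilon_t=\Theta(\lambda_t^2)$ for the subproblem, and Lemma~\ref{le:warm_start} shows that warm-starting subproblem $t+1$ at $w_t^+$ yields initial error $\mathcal{O}(\lambda_{t+1}^2)=\mathcal{O}(\epsilon_{t+1})$, i.e. a constant ratio $\epsilon_0/\epsilon$. The subproblem has condition number $\Theta(\kappa)$ uniformly in $t$, because $B_t$ is spectrally within a constant of $\nabla^2 f(w_t)$ by Assumption~\ref{ass:dennis-more} and the latter is within a constant of $\nabla^2 f(w^*)$ by self-concordance once we are near the optimum. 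Hence Theorem~\ref{th:convergence-catalyst} says SVRG+Catalyst needs $\tilde{\mathcal{O}}(\sqrt{b\kappa}\log(\epsilon_0/\epsilon))$ gradient steps, each costing $\mathcal{O}(d)$.

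It remains to turn the \emph{in-expectation} guarantee of SVRG+Catalyst into a high-probability one. By the Markov bound \eqref{eq:bound-failure-rate}, running the $t$-th subproblem to expected accuracy $\epsilon_t\delta/m$ makes it fail with probability at most $\delta/m$; by \eqref{eq:union-bound-failure} and the union bound over $t=1,\dots,m$, all solves succeed with probability $\ge1-\delta$. Tightening the per-solve target from $\epsilon_t$ to $\epsilon_t\delta/m$ multiplies the ratio $\epsilon_0/\epsilon$ by $m/\delta=\mathcal{O}(\log(1/\epsilon)/\delta)$, and since the inner iteration count depends on this ratio only through $\log(\cdot)$, each inner solve costs $\mathcal{O}\!\big(d\sqrt{b\kappa}\log(\tfrac{\log(1/\epsilon)}{\delta})\big)$ FLOPs (suppressing $\log b$ and $\log\kappa$ inside $\tilde{\mathcal{O}}$). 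Multiplying the $m=\mathcal{O}(\log(1/\epsilon))$ outer iterations by the per-iteration cost $\mathcal{O}(\mathbf{nnz}(X))+\mathcal{O}(d\sqrt{b\kappa}\log(\tfrac{\log(1/\epsilon)}{\delta}))$ yields exactly the claimed bound with $b=d\log d$.

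\textbf{Main obstacle.} The delicate step is the last one: the warm-start bound of Lemma~\ref{le:warm_start} only holds on the event that the previous subproblem was solved accurately, so the recursion that keeps $\epsilon_0/\epsilon$ constant and the union bound that controls the overall failure probability must be threaded together carefully. The clean route is to condition on the probability-$(\ge1-\delta)$ event that \emph{every} inner solve succeeds and run the deterministic two-phase/warm-start recursion there, but one must check that demanding expected accuracy $\epsilon_t\delta/m$ (rather than just $\epsilon_t$) costs only the single extra factor $\log(\log(1/\epsilon)/\delta)$ and does not couple across iterations in a way that reintroduces a full $\log(1/\epsilon)$ factor inside the inner loop. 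The remaining ingredients — the $\mathcal{O}(1)$ Phase~I bound, the $\mathcal{O}(\log(1/\epsilon))$ Phase~II bound, the reduction of $\|r_t\|_{w_t}^*$ to a function-value target, and the simplification of the Cohen et al. sampling cost under $n>d^{\omega-1}$ — are straightforward combinations of results already established above.
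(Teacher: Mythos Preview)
Your proposal is correct and follows essentially the same approach as the paper's proof: the same two-phase outer-iteration count (Corollary~\ref{co:fix_decrement} for Phase~I, Theorem~\ref{th:linear-quadratic} plus Corollary~\ref{co:lambda_subopt} for Phase~II), the same per-iteration decomposition into the $\mathcal{O}(\mathbf{nnz}(X))$ sampling cost under $n>d^{\omega-1}$ and the warm-started inner-solver cost from Lemmas~\ref{Lemma:precise} and~\ref{le:warm_start}, and the same Markov-plus-union-bound argument \eqref{eq:bound-failure-rate}--\eqref{eq:union-bound-failure} to obtain the $\log(m/\delta)$ factor. Your write-up is in fact more careful than the paper's in two places --- you explicitly justify why the subproblem condition number stays $\Theta(\kappa)$ and you flag the conditioning subtlety between the warm-start recursion and the union bound --- but these are elaborations, not a different route.
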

\begin{proof}
We outline the proof of Theorem \ref{th:final-complexity} here. First of all from Theorem \ref{co:fix_decrement} we know that the iterate will reach $\lambda_t < \bar{\lambda}$ within $\mathcal{O}(\frac{F(w_0)-F(w^*)}{\inf_t Z(\eta_t)})$ where $Z(\eta_t)=(\frac{1}{2(1-\beta_t)}-\frac{2\beta_t}{1-\beta_t^2})\eta_t(\theta_t-\beta_t)\tilde{\lambda}_t^2$ is the lower bound of function decrement introduced in Corollary \ref{co:fix_decrement}. As long as $F(w^*)$ is bounded below, phase-II will be reached within constant iterations. From Theorem \ref{th:linear-quadratic} we know $\lambda_t$ decrease linearly and from Corollary \ref{co:lambda_subopt} the algorithm can exit when $\lambda_t\le \sqrt{\frac{\epsilon}{1+\beta_t}}$ so the number outer iterations in phase-II is:
$$
m=\mathcal{O}(\frac{\log\epsilon}{\log\rho}), 
$$
where $\rho=\frac{1+\beta_t-\theta_t}{\theta_{t+1}-\beta_{t+1}}$ is the linear convergence rate of $\lambda_t$.
\par
For each major iteration, as long as $n>d^{\omega-1}$, the cost of subsampling is $\mathcal{O}(\mathbf{nnz}(X))$ FLOPS according to Theorem \ref{th:sampling-complexity}. Further, by the union bound \eqref{eq:union-bound-failure} and Lemma \ref{le:warm_start}, using Catalyst and SVRG we have an upper bound on number of inner iterations:
$$
\text{\#inner}=\mathcal{O}(\sqrt{b\kappa}\log\frac{m}{\delta}). 
$$
Finally we combine outer loop complexity with inner loop complexity and notice that for each inner iteration it takes $\mathcal{O}(d)$ FLOPS:
\begin{equation}
  \begin{aligned}
   \mathcal{O}(m(\mathbf{nnz}(X)+d\sqrt{b\kappa}\log\frac{m}{\delta}))
  =\mathcal{O}(\log{(1/\epsilon)}(\mathbf{nnz}(X)+d\sqrt{b\kappa}\log(\frac{\log(1/\epsilon)}{\delta}))). 
  \end{aligned}
\end{equation}
\end{proof}

% \bibliography{yuekai,proxnewton}

\section{Experiments}
\label{sec:experiments}

In this section, we compare our proposed algorithm with other 1st/2nd order methods on $\ell_1$-regularized logistic 
regression problem: 
\begin{equation*}
  \bw^*=\argmin_{\bw} \frac{1}{n}\sum_{i=1}^n (\log(1+e^{-y_i \bw^\intercal \bx_i})) + \lambda 
  \|\bw\|_1, 
\end{equation*}
where $\{(\bx_i, y_i)\}_{i=1}^n$ are training data/label pairs and $\lambda$
is the regularization parameter.
Three datasets from LIBSVM website are chosen. 
Because Mnist8M is a multiclass dataset, we extract the 1st and 6th classes to synthesize a two-class dataset. Other basic information about datasets is listed in Table~\ref{tab:datasets}.
These three datasets mainly differ in sparsity which we believe is an important factor when comparing different algorithms. 

\begin{table*}[!htbp]
  \centering
  \caption{Dataset Statistics and Parameters Used in Experiments \label{tab:datasets}}
%\resizebox{8.5cm}{!}{
  \begin{tabular}{crrr}
    \hline 
    Dataset & \#Data & \#Features & \#Non-zeros  \\ 
    \hline
    Realsim & 72,309 & 20,958 & 3,781,392  \\\hline
    Covtype & 581,012 & 54 & 7,521,450 \\\hline
    Mnist8M & 1,603,260 & 784 & 345,075,085  \\\hline
  \end{tabular}
%  }
\end{table*}

We compare the following algorithms with our fast proximal Newton method: 
\begin{itemize}
\item LIBLINEAR (Full proximal Newton): The proximal Newton method with exact Hessian is used as the default solver
for $\ell_1$ logistic regression in LIBLINEAR~\citep{fan2008liblinear}. 
\item SVRG: the variance reduced SGD algorithm proposed 
  in~\citep{johnson2013accelerating}. 
\item SAGA: another variance reduced SGD algorithm proposed 
  in~\citep{Defazio2014}. Our implementation uses $O(1)$ storage per sample by exploiting the 
  structure of the ERM problem. 
\end{itemize}
%We experiment with SVRG, SAGA, LIBLINEAR and our fast proximal Newton method under different choices of inner and outer iterations. 
Since the notion of ``epoch'' is quite different for these algorithms, unlike many other experiments which use data passes or gradient calculation as x-axis, we evaluate performance by comparing running time of different methods. We implement all the algorithms
in C++ by modifying the code base of LIBLINEAR, and try to optimize each of them in order to have a fair comparison. 
In the following, we first test our algorithm with different parameter settings, and then compare it with other competing algorithms. 
%To rule out other factors that might affect running time we implement all algorithms with same language under same framework.

In the first set of experiments we consider how number of inner iterations affects the convergence rate, 
by setting number of inner iteration($\texttt{inner})=1,2,...,6$ we can observe the convergence rates in Figure \ref{Fig:inner}.

\begin{figure*}
\centering
\includegraphics[width=5in]{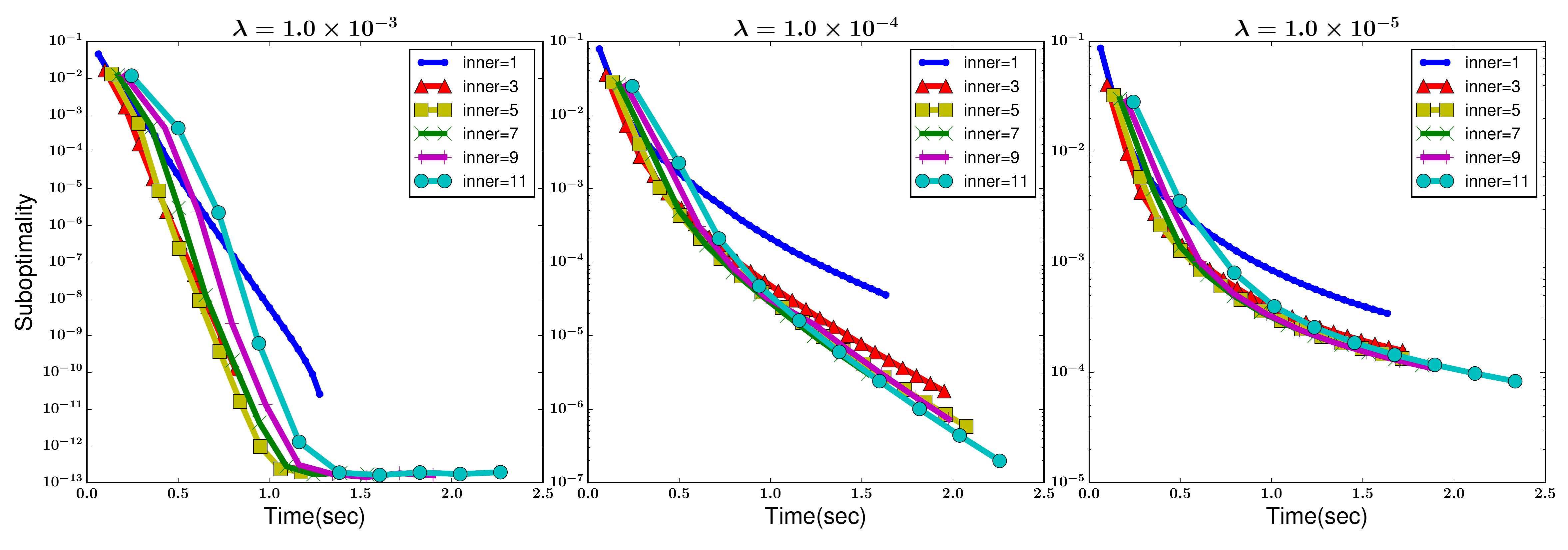}
\caption{Solving $l_1$ logistic regression on Covtype with different inner iteration and different $\lambda$. Unlike the suggestion in (cite SVRG) we fix the very inner iteration in SVRG to be $m=0.01n$ where $n$ is the size of data.\label{Fig:inner}}
\end{figure*} 

The result in Figure~\ref{Fig:inner} shows that our algorithm is quite robust to the choice of number 
of inner iterations. This is a nice advantage in practice when we cannot afford to 
do a grid search for the best hyper-parameters. 
However we also noticed that when $\texttt{inner}=1$ the performance is substantially worse than other choices, this is probably because doing merely one inner iteration cannot solve the subproblem precisely enough to satisfy Lemma \ref{Lemma:precise}.

\par

Next we compare different algorithms on three datasets and choose regularization coefficients from $\lambda\in\{ 1.0\times 10^{-3}, 1.0\times 10^{-4}, 1.0\times 10^{-5} \}$. 
For each combination of $\langle$algorithm, dataset, $\lambda$$\rangle$-triples 
we search the best step size $\eta=10^{-k}, k=0,1,\dots$ (we later found that 
step size is largely determined by $\lambda$ but less depending on data for 
these three datasets, so in fact we are using the 
same step size for all algorithms). 
We choose a fixed number of inner iterations for all $\lambda$ because as the previous experiment shows it won't affect the outcome much. 
The result is shown in Figure \ref{Fig:time}. 
We can see our algorithm outperforms others on Covtype and Mnist with both large and small $\lambda$, furthermore our algorithm is especially good on large regularization where we are expected to see linear or even superlinear convergence. 
On Realsim dataset, our algorithm is slower than LIBLINEAR probably because other algorithms including our fast proximal Newton method as a general purpose algorithm don't exploit the sparse property of data, when $\lambda$ is large our algorithm is comparable to LIBLINEAR. Another key observation is that as the regularization factor $\lambda$ increases, the computational time to convergence decreases (similar conclusion is made in \citep{shi2010fast}). Intuitively a larger regularization $\lambda$ leads to a sparser solution, so if we initialize at $\bw_0=\mathbf{0}$ then it's already close to the optimal solution.
\par
Overall, the experiments show that our algorithm is competitive and slightly better than the state-of-the-art implementation, LIBLINEAR. Therefore, our algorithm not only achieves better theoretical convergence but also has better practical performance than other methods.
\begin{figure*}
\centering
\includegraphics[width=5in]{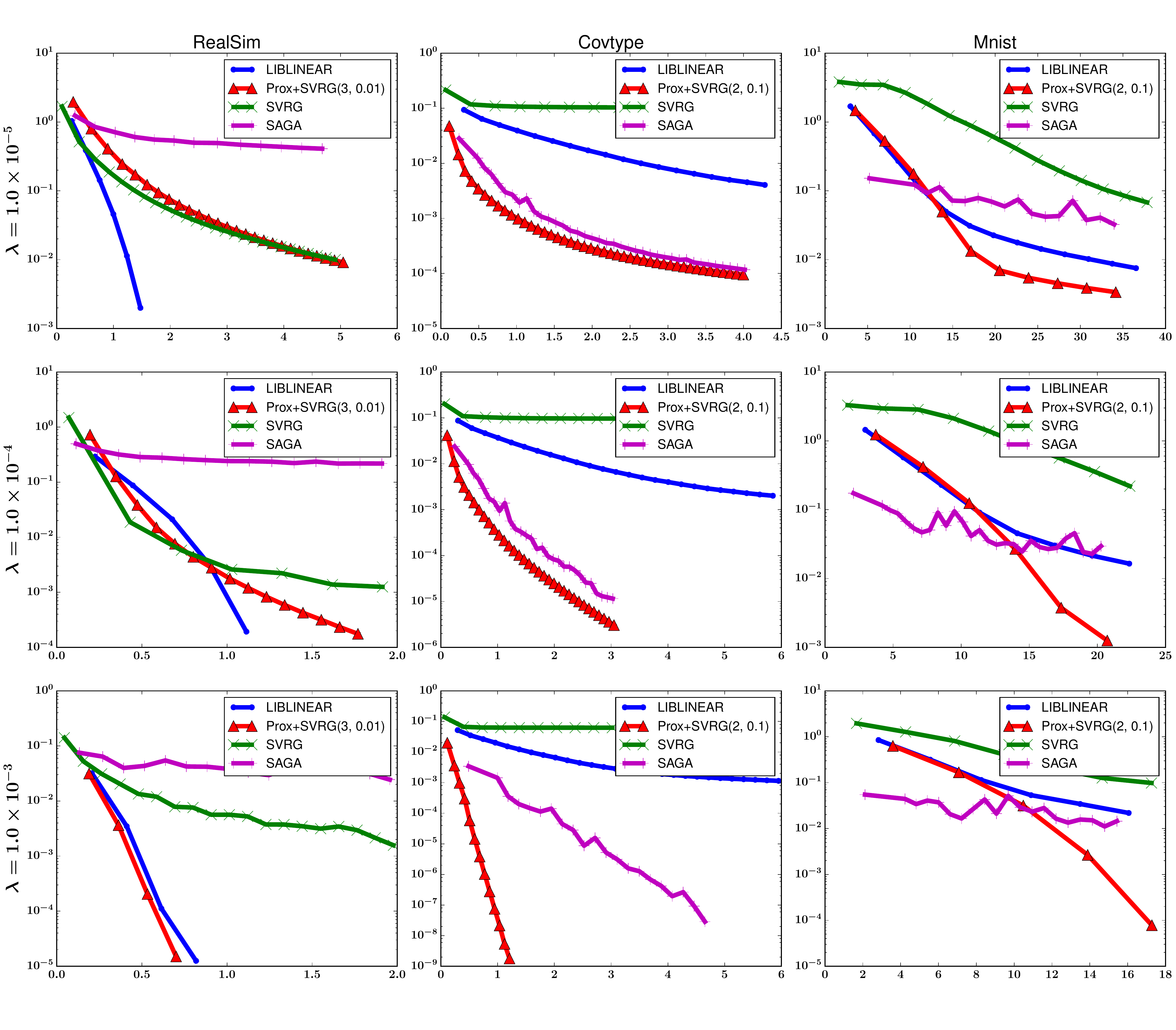}
\caption{Running time comparison of different algorithms on different conditions. Our algorithm is labeled as Prox+SVRG($\texttt{inner}$, $c$) where $\texttt{inner}$ is inner iteration and $c\times n$ is the inner iteration in SVRG.\label{Fig:time}}
\end{figure*}

\section{Summary and discussion}
\label{sec:discussion}

We proposed an inexact subsampled proximal Newton-type method for composite minimization that attains fast rates of convergence. In particular, it matches the computational efficiency of state-of-the-art stochastic first-order methods and LiSSA. At a high-level, the proposed method combines subsampling with accelerated variance reduced first-order methods, and is essentially the composite counterpart to LiSSA. We remark that as long as $n > d$, the proposed method has the best known computational complexity for composite minimization under the stated assumption. The key takeaway is that by leveraging recent advances in stochastic first-order methods, it is possible to design second-order methods that are equally, if not more efficient for large-scale machine learning.

% \bibliography{yuekai}

% Acknowledgements should go at the end, before appendices and references

%\acks{TBD}

\vskip 0.2in
\bibliography{yuekai,proxnewton}

\newpage
\appendix

\section{Proof of Proposition \ref{pr:dennis_more_deviation}}
\begin{proposition}
If Hessian approximation $B_t$ satisfies Dennis-Mor\'e condition, then $\Vert (B_t-\nabla^2f(w_t))v \Vert^*_{w_t}\le \beta_t\Vert v \Vert_{w_t}$ will also hold.
\end{proposition}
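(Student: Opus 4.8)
The plan is to convert the Dennis--Mor\'e condition (Assumption~\ref{ass:dennis-more}) into a semidefinite (Loewner) sandwich on the error matrix $E_t := B_t - \nabla^2 f(w_t)$, and then pass from that quadratic-form bound to the dual-norm bound of Assumption~\ref{ass:dual_norm} using the symmetric square root of the Hessian. Write $H := \nabla^2 f(w_t)\succ 0$; since $B_t$ and $H$ are symmetric, so is $E_t$, hence $E_t^\intercal = E_t$, which is what makes the square-root manipulations legitimate.

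First I would record that the Dennis--Mor\'e condition is equivalent to the sandwich $-\beta_t H \preceq E_t \preceq \beta_t H$: the bound $|v^\intercal E_t v|\le \beta_t\, v^\intercal H v$ for all $v$ says precisely that both $\beta_t H - E_t$ and $\beta_t H + E_t$ are positive semidefinite. The one point requiring a word of care is that Assumption~\ref{ass:dennis-more} is stated for $v\in\mathsf{cone}(\mathbb{B}^d - w_t)$ rather than for every $v\in\mathbb{R}^d$; but that cone is all of $\mathbb{R}^d$ (it contains a neighbourhood of the origin, and the inequality is positively homogeneous of degree two), so the sandwich holds globally. Conjugating by $H^{-1/2}$, the sandwich becomes $-\beta_t I \preceq M \preceq \beta_t I$ for the symmetric matrix $M := H^{-1/2} E_t H^{-1/2}$, i.e.\ $\|M\|_2 \le \beta_t$, and squaring gives $M^2 \preceq \beta_t^2 I$.

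Finally I would note that the target inequality $\|E_t v\|_{w_t}^* \le \beta_t \|v\|_{w_t}$, after squaring and unfolding the definition $\|g\|_{w_t}^* = \sqrt{g^\intercal H^{-1} g}$, is exactly the statement $E_t H^{-1} E_t \preceq \beta_t^2 H$. Since $E_t = H^{1/2} M H^{1/2}$, one computes
\[
E_t H^{-1} E_t \;=\; H^{1/2} M H^{1/2} H^{-1} H^{1/2} M H^{1/2} \;=\; H^{1/2} M^2 H^{1/2} \;\preceq\; \beta_t^2\, H^{1/2} H^{1/2} \;=\; \beta_t^2 H,
\]
which is what we need; taking $v^\intercal(\cdot)v$ and a square root then gives the claimed norm inequality for all $v\in\mathbb{R}^d$.

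\textbf{Main obstacle.} There is no real analytic difficulty here: the argument is a one-line semidefinite-order computation once the setup is right. The only things to be careful about are (i) that every matrix in sight is symmetric, so that $E_t H^{-1}E_t = H^{1/2}M^2H^{1/2}$ is valid, and (ii) the cone-versus-all-of-$\mathbb{R}^d$ point in the hypothesis of Assumption~\ref{ass:dennis-more}, which must be dispatched before the Loewner sandwich can be asserted globally.
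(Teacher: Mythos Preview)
Your proof is correct and is actually cleaner and more general than the paper's. Both arguments ultimately establish $E_t H^{-1} E_t \preceq \beta_t^2 H$ (writing $H=\nabla^2 f(w_t)$, $E_t=B_t-H$), but they get there by different routes. You work intrinsically with the positive-definite square root $H^{1/2}$, reduce to the spectral bound $\|H^{-1/2}E_tH^{-1/2}\|_2\le\beta_t$, and square; this uses nothing about how $B_t$ was built and applies to any symmetric $B_t$ satisfying the Dennis--Mor\'e sandwich. The paper instead exploits the particular factorizations $H=A^\intercal A$ and $E_t=A^\intercal(S^\intercal S - I)A$ with $A=\sqrt{D}X$ coming from the subsampling construction, rewrites the dual norm via the projection $A(A^\intercal A)^\dagger A^\intercal$ onto $\mathrm{col}(A)$, and then argues that the restriction of $S^\intercal S-I$ to that subspace has operator norm at most $\beta_t$. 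What your approach buys is generality and a one-line computation that does not depend on the sampling structure; what the paper's approach buys is a direct connection to the sketching matrix $S$, which is conceptually closer to where $B_t$ comes from in this paper. Your handling of the cone hypothesis is also a nice clarification that the paper leaves implicit.
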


\begin{proof}
Notice that exact Hessian $\nabla^2f(w)=X^\intercal DX$, and $D=\text{diag}\{d_{11}/n,d_{22}/n,\dots, d_{nn}/n\}$. For subsampled Hessian $B_t=\frac{1}{|\mathcal{B}|}\sum_{i\in\mathcal{B}}d_{ii}x_ix_i^\intercal=X^\intercal\sqrt{D}S^\intercal S\sqrt{D}X$, here $S$ is a random diagonal matrix, with each diagonal element $s_{ii}=\frac{1}{p_{ii}}I_i$ and $I_i$ is a i.i.d. random variable:
$$
I_i=
\begin{cases}
1, & p=p_{ii},\\
0, & p=1-p_{ii}.
\end{cases}
$$
i.e. we sample each $(x_i, y_i)$ independently with probability $p_{ii}$. 

Then by expanding the square of the left hand side of deviation condition we have:
\begin{align*}
&w^{\intercal}(B_t - \nabla^2 f(w_t))\nabla^2 f(w_t)^{\dagger}(B_t - \nabla^2 f(w_t))w \\
&\quad= (D^{\frac12}Xw)^{\intercal}(S^{\intercal}S - I_n)D^{\frac12}X(XDX)^{\dagger}D^{\frac12}X(S^{\intercal}S - I_n)(D^{\frac12}Xw) \\
&\quad= (D^{\frac12}Xw)^{\intercal}(S^{\intercal}S - I_n)^2(D^{\frac12}Xw),
\end{align*}
From the first deviation condition:
\begin{equation*}
-\beta_t (D^{\frac{1}{2}}Xw)^\intercal (D^{\frac{1}{2}}Xw)\le w^\intercal (B_t-\nabla^2 f(w_t))w =(D^{\frac12}Xw)^{\intercal}(S^{\intercal}S - I_n)(D^{\frac12}Xw) \le \beta_t (D^{\frac{1}{2}}Xw)^\intercal (D^{\frac{1}{2}}Xw).
\end{equation*}
Because $w$ is arbitrary, suppose $U$ is the space spanned by $\sqrt{D}Xw$: $U=\{x|x=\sqrt{D}Xw\}$, then for any eigenvector $v_i\in U$ of matrix $S^\intercal S-I_n$ the corresponding eigenvalue $\lambda_i$ should lies in $[-\beta_t, \beta_t]$, this ensures:
$$
(D^{\frac12}Xw)^{\intercal}(S^{\intercal}S - I_n)^2(D^{\frac12}Xw) \le \beta_t^2 (D^{\frac{1}{2}}Xw)^\intercal (D^{\frac{1}{2}}Xw),
$$ after rearranging we complete the proof:
$$
\| (B_t-\nabla^2f(w_t))w \|^*_{w_t}\le \beta_t \| w \|_{w_t}
$$

\end{proof}

\section{Proof of Proposition \ref{pr:equivalence}}
\begin{proposition} 
To satisfy Assumption~\ref{ass:dennis-more} it is enough to set $\epsilon=\frac{\beta_t}{1-\beta_t}$ in Theorem \ref{th:sampling-complexity}.
\end{proposition}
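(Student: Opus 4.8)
The plan is to exhibit the leverage-score subsampled Hessian as an instance of the spectral approximation in Theorem~\ref{th:sampling-complexity} and then read off the Dennis--Mor\'e bound from the resulting two-sided operator inequality. Write $\nabla^2 f(w_t)=A^\intercal A$ with $A=D^{1/2}X\in\mathbb{R}^{n\times d}$, $D=\diag\bigl(f_1''(w_t^\intercal x_1),\dots,f_n''(w_t^\intercal x_n)\bigr)\succeq 0$ (nonnegativity from convexity of the $f_i$; $\nabla^2 f(w_t)\succ 0$ from strong convexity of $f$, so the local norm is well defined). The rows of $A$ are $\sqrt{f_i''(w_t^\intercal x_i)}\,x_i^\intercal$, and the leverage-score rule \eqref{eq:leverage_score_def} with the $1/p_i$ reweighting is exactly the row-sampling construction of \citep{cohen2015uniform}; hence the sampled-and-reweighted matrix $\tilde A$ satisfies $\tilde A^\intercal\tilde A=\sum_{i\in\mathcal B}\tfrac{1}{p_i}f_i''(w_t^\intercal x_i)\,x_ix_i^\intercal$, i.e.\ it is (up to the leading constant) the subsampled Hessian $B_t$ of \eqref{eq:subsampled_hessian}.

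I would then apply Theorem~\ref{th:sampling-complexity} to this $A$ with tolerance $\epsilon=\tfrac{\beta_t}{1-\beta_t}$, which gives, for all $x\in\mathbb{R}^d$,
\[
(1-\beta_t)\,x^\intercal\nabla^2 f(w_t)\,x \;=\; \tfrac{1}{1+\epsilon}\,x^\intercal\nabla^2 f(w_t)\,x \;\le\; x^\intercal \tilde A^\intercal\tilde A\, x \;\le\; x^\intercal\nabla^2 f(w_t)\,x,
\]
using $\tfrac{1}{1+\epsilon}=1-\beta_t$ for that choice of $\epsilon$. Subtracting $x^\intercal\nabla^2 f(w_t)x$ everywhere yields $-\beta_t\|x\|_{w_t}^2\le x^\intercal(\tilde A^\intercal\tilde A-\nabla^2 f(w_t))x\le 0$, hence $\bigl|x^\intercal(\tilde A^\intercal\tilde A-\nabla^2 f(w_t))x\bigr|\le\beta_t\|x\|_{w_t}^2$ for every $x$ --- in particular for every $v_t\in\mathsf{cone}(\mathbb{B}^d-w_t)$, so the cone restriction in Assumption~\ref{ass:dennis-more} costs nothing. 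That is precisely Assumption~\ref{ass:dennis-more}, and rearranging the lower inequality $(1-\beta_t)\nabla^2 f(w_t)\preceq B_t$ gives the equivalent form $\|v_t\|_{w_t}\le\tfrac{1}{\sqrt{1-\beta_t}}\|v_t\|_{B_t}$. The sample count is the number of rows promised by Theorem~\ref{th:sampling-complexity}, $\mathcal O(d\log d/\epsilon^2)=\mathcal O\!\bigl(d\log d/(\tfrac{\beta_t}{1-\beta_t})^2\bigr)$, matching Algorithm~\ref{alg:fast_newton_alg}; Proposition~\ref{pr:dennis_more_deviation} then promotes this to Assumption~\ref{ass:dual_norm}.

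The only place that needs care is bookkeeping the leading constant: Theorem~\ref{th:sampling-complexity} controls $\tilde A^\intercal\tilde A$, whereas \eqref{eq:subsampled_hessian} carries an extra $\tfrac{1}{1+\epsilon}$ factor, so one must check that the identification $B_t\leftrightarrow\tilde A^\intercal\tilde A$ (or, if the prefactor is kept, that the small downward bias it introduces) still leaves $B_t$ inside a multiplicative $(1\pm\beta_t)$ band around $\nabla^2 f(w_t)$ for the stated choice $\epsilon=\tfrac{\beta_t}{1-\beta_t}$. Everything else is the routine dictionary between spectral-norm approximation and the induced-norm inequalities of Assumption~\ref{ass:dennis-more}.
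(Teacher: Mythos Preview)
Your proposal is correct and follows essentially the same route as the paper's proof: identify $\nabla^2 f(w_t)=A^\intercal A$ with $A=D^{1/2}X$, invoke Theorem~\ref{th:sampling-complexity} at tolerance $\epsilon=\beta_t/(1-\beta_t)$ to get $(1-\beta_t)\nabla^2 f(w_t)\preceq \tilde A^\intercal\tilde A\preceq\nabla^2 f(w_t)$, and read off the Dennis--Mor\'e bound. You are in fact more careful than the paper in flagging the $\tfrac{1}{1+\epsilon}$ prefactor in \eqref{eq:subsampled_hessian}; the paper's own proof silently treats $B_t$ as the unscaled $\tilde A^\intercal\tilde A$ and arrives at the same inequality $(1-\beta_t)\nabla^2 f(w_t)\preceq B_t\preceq\nabla^2 f(w_t)$.
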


\begin{proof}
First of all we can expand Dennis Mor\'e condition:
\begin{equation}
    -\beta_t\Vert v_t \Vert_{w_t}^2 \le v_t^\intercal (B_t-\nabla^2 f(w_t))v_t\le \beta_t\Vert v_t\Vert_{w_t}^2,
\end{equation}
rearranging:
\begin{equation}\label{eq:rearrange-dennis}
(1-\beta_t)\Vert v_t \Vert_{w_t}^2 \le v_t^\intercal B_tv_t \le (1+\beta_t)\Vert v_t\Vert_{w_t}^2,
\end{equation}
so if we set $\epsilon=\frac{\beta_t}{1-\beta_t}$ in Theorem \ref{th:sampling-complexity} and expand the norm then we get:
\begin{equation}\label{eq:norm-convert}
(1-\beta_t)\nabla^2 f(w_t)\preceq B_t\preceq \nabla^2 f(w_t).
\end{equation}
So \eqref{eq:rearrange-dennis} naturally holds.
\end{proof}

\section{Proof of Section \ref{th:converge_phaseI}}
\begin{theorem}
By the update rule of Algorithm \ref{alg:fast_newton_alg} with step size: 
\begin{equation*}
   \eta_t=\frac{1}{1+\beta'_t\tilde{\lambda}_t}, \quad \beta'_t=\frac{1}{\sqrt{1-\beta_t}}, \quad \tilde{\lambda}_t=\Vert w_t^+-w_t \Vert_{B_t},
\end{equation*}
and solve the inner problem with precision $\Vert r_t\Vert_{w_t}^*\le (1-\theta_t)\lambda_t$, where $r_t$ is the subgradient residual:
\begin{equation*}
    \begin{aligned}
    r_t-\nabla f(w_t)-B_t(w_t^+-w_t)\in \partial R(w_t^+),
    \end{aligned}
\end{equation*}
$\theta_t\in(0, 1]$ is a forcing coefficient. Then the function value will decrease by:
\begin{equation}
F(w_{t+1})\le F(w_t) - \eta_t(\theta_t - \beta_t)\lambda_t^2 + \zeta^*(\eta_t\lambda_t).
\end{equation}
\end{theorem}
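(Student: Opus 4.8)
The plan is to control $F(w_{t+1})-F(w_t)$ by splitting it into its smooth and nonsmooth parts: bound the smooth increment with the self-concordant function-value inequality \eqref{eq:self-concordance-lipschitz}, bound the nonsmooth increment by convexity of $R$ along the segment $[w_t,w_t^+]$, and then use the inexact subproblem optimality condition together with the Dennis--Mor\'e condition (Assumption~\ref{ass:dennis-more}) to collapse the remaining linear term into $-(\theta_t-\beta_t)\lambda_t^2$.

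First I would write $w_{t+1}=w_t+\eta_t v_t=(1-\eta_t)w_t+\eta_t w_t^+$, so that $\|w_{t+1}-w_t\|_{w_t}=\eta_t\lambda_t$. From the Dennis--Mor\'e condition, $\tilde\lambda_t^2=v_t^\intercal B_t v_t\ge(1-\beta_t)\|v_t\|_{w_t}^2=(1-\beta_t)\lambda_t^2$, hence $\beta'_t\tilde\lambda_t=\tilde\lambda_t/\sqrt{1-\beta_t}\ge\lambda_t$ and therefore $\eta_t\lambda_t\le\lambda_t/(1+\lambda_t)<1$; this is exactly what is needed for the right-hand side of \eqref{eq:self-concordance-lipschitz} to be valid, giving $f(w_{t+1})\le f(w_t)+\eta_t\nabla f(w_t)^\intercal v_t+\zeta^*(\eta_t\lambda_t)$. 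For the regularizer, convexity yields $R(w_{t+1})\le(1-\eta_t)R(w_t)+\eta_t R(w_t^+)$. Adding the two bounds gives
\[
F(w_{t+1})-F(w_t)\le \eta_t\big(\nabla f(w_t)^\intercal v_t+R(w_t^+)-R(w_t)\big)+\zeta^*(\eta_t\lambda_t).
\]

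The crux — and the step I expect to be the main obstacle — is bounding the bracket using inexactness. From the residual definition, $\xi_t:=r_t-\nabla f(w_t)-B_t v_t\in\partial R(w_t^+)$, so the subgradient inequality for $R$ at $w_t^+$ evaluated at $w_t$ gives $R(w_t^+)-R(w_t)\le\xi_t^\intercal v_t$. Substituting, the bracket equals $\nabla f(w_t)^\intercal v_t+\xi_t^\intercal v_t=r_t^\intercal v_t-v_t^\intercal B_t v_t=r_t^\intercal v_t-\tilde\lambda_t^2$. By the generalized Cauchy--Schwarz inequality for the dual norm pair $(\|\cdot\|_{w_t},\|\cdot\|_{w_t}^*)$ and the stopping condition, $r_t^\intercal v_t\le\|r_t\|_{w_t}^*\|v_t\|_{w_t}\le(1-\theta_t)\lambda_t\cdot\lambda_t=(1-\theta_t)\lambda_t^2$; combining this with $\tilde\lambda_t^2\ge(1-\beta_t)\lambda_t^2$ from Dennis--Mor\'e, the bracket is at most $(1-\theta_t)\lambda_t^2-(1-\beta_t)\lambda_t^2=-(\theta_t-\beta_t)\lambda_t^2$.

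Plugging this back into the displayed inequality yields $F(w_{t+1})-F(w_t)\le-\eta_t(\theta_t-\beta_t)\lambda_t^2+\zeta^*(\eta_t\lambda_t)$, which is \eqref{eq:improvement-phaseI}. The only role of the specific step-size formula here is to guarantee $\eta_t\lambda_t<1$ (so \eqref{eq:self-concordance-lipschitz} applies) and to produce a clean bound; the subsequent Corollary~\ref{co:fix_decrement} then picks a concrete $\eta_t$ making the right-hand side strictly negative whenever $\lambda_t\ge\bar\lambda$. One technical point I would verify is that $w_t^+$, and hence $w_{t+1}$, lies in $\mathsf{dom}\,f$ so the self-concordance inequalities are applicable; this holds since $\mathsf{dom}\,F$ is convex and, for the problems of interest such as regularized logistic regression, $\mathsf{dom}\,f=\reals^d$.
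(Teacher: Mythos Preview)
Your proposal is correct and follows essentially the same route as the paper: split $F(w_{t+1})-F(w_t)$ via the self-concordant upper bound \eqref{eq:self-concordance-lipschitz} on $f$ and convexity of $R$ along $[w_t,w_t^+]$, then use the residual optimality condition together with Cauchy--Schwarz and the Dennis--Mor\'e bound $\tilde\lambda_t^2\ge(1-\beta_t)\lambda_t^2$ to turn the linear term into $-(\theta_t-\beta_t)\lambda_t^2$, with the step-size choice serving only to certify $\eta_t\lambda_t<1$. The paper's argument is identical in substance and in the order of the bounds.
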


\begin{proof}
Since $w_{t+1}$ is a convex combination of $w_t$ and $w_t^+$, and $w_{t+1} = w_t + \eta_t(w_t^+ - w_t)$, where $w_t^+$ is the solution of the proximal Newton subproblem, by the convex property of $R(\cdot)$:
\[
R(w_{t+1}) \le (1-\eta_t)R(w_t) + \eta_tR(w_t^+).
\]
Rearranging, 
\[
R(w_{t+1}) - R(w_t) \le \eta_t(R(w_t^+) - R(w_t)).
\]
As long as $\|w_{t+1} - w_t\|_{w_t} < 1$, by the self-concordant property of $f(\cdot)$, we have:
\begin{align*}
 F(w_{t+1}) &\le  F(w_t) + \nabla f(w_t)^{\intercal}(w_{t+1} - w_t) + \zeta^*(\|w_{t+1} - w_t\|_{w_t}) \\
&\quad+ R(w_{t+1}) - R(w_t) \\
&\le  F(w_t) + \eta_t\nabla f(w_t)^{\intercal}(w_t^+ - w_t) + \zeta^*(\eta_t\|w_t^+ - w_t\|_{w_t}) \\
&\quad+ \eta_t(R(w_t^+) - R(w_t)).
\end{align*}
The first inequality is a consequence of \eqref{eq:self-concordance-lipschitz}, and the second inequality is a consequence of the convexity of $R(w)$. We know $R(w_t^+) - R(w_t) \le v_t^{\intercal}(w_t^+ - w_t)$ for any $v_t\in\partial R(w_t^+)$. Consequently,
\begin{equation}
 F(w_{t+1}) \le  F(w_t) + \eta_t\nabla f(w_t)^{\intercal}(w_t^+ - w_t) + \zeta^*(\eta_t\|w_t^+ - w_t\|_{w_t}) + \eta_tv_t^{\intercal}(w_t^+ - w_t).
\label{eq:phit-bound-1}
\end{equation}
We define $\|w_t^+ - w_t\|_{w_t}$ to be the \emph{proximal Newton decrement}, which we denote by $\lambda_t$ hereafter. We observe that the condition $\|w_{t+1} - w_t\|_{w_t} < 1$ is, in terms of the proximal Newton decrement, $\eta_t\lambda_t < 1$.

To further bound $ F(w_{t+1})$, we appeal to the fact that $w_t^+$ is the inexact solution of the proximal quasi-Newton subproblem. By the optimality conditions of the subproblem, we have
\begin{equation}\label{eq:optimal_condition}
r_t-\nabla f(w_t)-B_t(w_t^+-w_t) \in \partial R(w_t^+),
\end{equation}
where $r_t$ is the residual from solving the subproblem inexactly. We require
\begin{equation}
\|r_t\|_{w_t}^* \le (1-\theta_t)\|w_t^+ - w_t\|_{w_t} = (1-\theta_t)\lambda_t,
\label{eq:inexactness}
\end{equation}
where $\theta_t\in (0,1]$ is a \emph{forcing sequence}. By reordering \eqref{eq:optimal_condition} and multiply $w_t^+-w_t$ on both sides yields:
\[
\eta_t r_t^{\intercal}(w_t^+ - w_t) - \eta_t\|w_t^+ - w_t\|_{B_t}^2 = \eta_t(v_t + \nabla f(w_t))^{\intercal}(w_t^+ - w_t),
\]
for some $v_t\in\partial R(w_t)$. Combining with \eqref{eq:phit-bound-1}, we have
\begin{align}
 F(w_{t+1}) &\le  F(w_t) + \zeta^*(\eta_t\lambda_t) - \eta_t\|w_t^+ - w_t\|_{B_t}^2 + \eta_tr_t^{\intercal}(w_t^+ - w_t) \nonumber \\
&\le  F(w_t) + \zeta^*(\eta_t\lambda_t) - \eta_t \|w_t^+ - w_t\|_{B_t}^2 + \eta_t\|r_t\|_{w_t}^*\|w_t^+ - w_t\|_{w_t} \nonumber \\
&\le  F(w_t) + \zeta^*(\eta_t\lambda_t) - \eta_t(\theta_t-\beta_t)\|w_t^+ - w_t\|_{w_t}^2,
\label{eq:phit-bound-2}
\end{align}
where the third step is a consequence of \eqref{eq:inexactness}, we also convert $\|\cdot\|_{B_t}$ to $\|\cdot\|_{w_t}$ by \eqref{eq:norm-convert}.

To complete the proof, we pick $\eta_t$ to ensure $\eta_t\lambda_t < 1$. 
\citep{tran2015composite} propose 
\begin{equation}
\eta_t = \frac{\lambda_t^2}{\lambda_t(\lambda_t + \lambda_t^2)},\quad\lambda_t = \|w_t^+ - w_t\|_{w_t}.
\label{eq:tran-dinh-step-length}
\end{equation}
Unfortunately, evaluating \eqref{eq:tran-dinh-step-length} involves evaluating the (exact) proximal Newton decrement, which is impractical.
We propose
\begin{equation}
\eta_t = ({\textstyle 1 + \frac{1}{\sqrt{1 - \beta_t}}\tilde{\lambda}_t})^{-1}.
\label{eq:step-length}
\end{equation}
As long as we have the deviation condition
\begin{equation}
|v_1^{\intercal}(B_t - \nabla^2 f(w_t))v_2| \le \beta_t\|v_1\|_{w_t}\|v_2\|_{w_t},
\label{eq:cn:deviation-condition}
\end{equation}
for any $v_1,v_2\in\textsf{cone}(\mathbb{B}_1^d - w_t)$, it is easy to show $\frac{1}{\sqrt{1 - \beta_t}}\tilde{\lambda}_t > \lambda_t$, which in turn ensures
\[
\eta_t\lambda_t = \lambda_t({\textstyle 1 + \frac{1}{\sqrt{1 - \beta_t}}\tilde{\lambda}_t})^{-1} \le \frac{\lambda_t}{1 + \lambda_t} < 1.
\]

\end{proof}

\section{Proof of Corollary \ref{co:fix_decrement}}
\begin{corollary}
By fixing $\beta_t<\min\{\theta_t, \frac{1}{3}\}$ and step size $\eta_t = \frac{\theta_t - \beta_t}{1+\beta_t'(\theta_t-\beta_t)\tilde{\lambda}_t}<\frac{1}{1+\beta_t'\tilde{\lambda}_t}$ then the decrement of function value in each step is at least $(\frac{1}{2(1-\beta_t)}-\frac{2\beta_t}{1-\beta_t^2})\eta_t(\theta_t-\beta_t)\tilde{\lambda}_t^2$ which is bounded away from zero as long as $\lambda_t\ge\bar{\lambda}$. So within finite steps, the iterates will enter into $\lambda_t< \bar{\lambda}$.
\end{corollary}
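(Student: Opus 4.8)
The plan is to feed the specified step size into the one-step bound \eqref{eq:improvement-phaseI} of Theorem~\ref{th:converge_phaseI} and show that the self-concordant penalty $\zeta^*(\eta_t\lambda_t)$ absorbs at most half of the guaranteed descent $\eta_t(\theta_t-\beta_t)\lambda_t^2$, so a strictly positive remainder survives. First I would check admissibility: since $\theta_t\le 1$ and $\beta_t>0$, a one-line comparison of fractions shows the chosen $\eta_t=\frac{\theta_t-\beta_t}{1+\beta_t'(\theta_t-\beta_t)\tilde{\lambda}_t}$ is strictly smaller than $\frac{1}{1+\beta_t'\tilde{\lambda}_t}$, so Theorem~\ref{th:converge_phaseI} applies and gives $F(w_t)-F(w_{t+1})\ge \eta_t(\theta_t-\beta_t)\lambda_t^2-\zeta^*(\eta_t\lambda_t)$ with $\zeta^*(x)=-x-\log(1-x)$.

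Next I would introduce $y:=\beta_t'(\theta_t-\beta_t)\tilde{\lambda}_t\ge 0$. The Dennis--Mor\'e bound $\lambda_t\le\beta_t'\tilde{\lambda}_t$ (Assumption~\ref{ass:dennis-more}) gives $\eta_t\lambda_t\le y/(1+y)<1$, hence $1-\eta_t\lambda_t\ge 1/(1+y)$; combined with the elementary estimate $\zeta^*(x)\le\frac{x^2}{2(1-x)}$ on $[0,1)$ this yields $\zeta^*(\eta_t\lambda_t)\le\frac{y^2}{2(1+y)}$. For the descent term I would use the companion bound $\lambda_t^2\ge\tilde{\lambda}_t^2/(1+\beta_t)$ (the other side of Assumption~\ref{ass:dennis-more}, equivalently \eqref{eq:norm-convert}) together with the identity $(\theta_t-\beta_t)\tilde{\lambda}_t=y\sqrt{1-\beta_t}$ to obtain $\eta_t(\theta_t-\beta_t)\lambda_t^2\ge\frac{(1-\beta_t)y^2}{(1+\beta_t)(1+y)}$. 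Subtracting,
\[
F(w_t)-F(w_{t+1})\ \ge\ \frac{y^2}{1+y}\Big(\frac{1-\beta_t}{1+\beta_t}-\frac12\Big)\ =\ \frac{y^2}{1+y}\cdot\frac{1-3\beta_t}{2(1+\beta_t)}.
\]

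Since $\eta_t(\theta_t-\beta_t)\tilde{\lambda}_t^2=\frac{(1-\beta_t)y^2}{1+y}$, the right-hand side equals $\frac{1-3\beta_t}{2(1-\beta_t^2)}\,\eta_t(\theta_t-\beta_t)\tilde{\lambda}_t^2$, and the algebraic identity $\frac{1-3\beta_t}{2(1-\beta_t^2)}=\frac{1}{2(1-\beta_t)}-\frac{2\beta_t}{1-\beta_t^2}$ produces exactly the claimed coefficient; the hypothesis $\beta_t<\frac13$ is precisely what makes this coefficient positive. To conclude that the decrease is bounded away from zero and that Phase~II is reached in finitely many steps: the map $u\mapsto\eta_t(\theta_t-\beta_t)u^2=\frac{(\theta_t-\beta_t)^2u^2}{1+\beta_t'(\theta_t-\beta_t)u}$ is increasing on $u\ge0$, so whenever $\lambda_t\ge\bar{\lambda}$ (hence $\tilde{\lambda}_t\ge\sqrt{1-\beta_t}\,\bar{\lambda}$) the per-step decrease is at least a fixed constant $\delta=\delta(\bar{\lambda},\beta_t,\theta_t)>0$. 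Because $f$ is strongly convex, $F$ is bounded below; the displayed inequality makes $\{F(w_t)\}$ nonincreasing, so telescoping $\sum_t\big(F(w_t)-F(w_{t+1})\big)\le F(w_0)-F^{*}<\infty$ forces at most $(F(w_0)-F^{*})/\delta$ iterations with $\lambda_t\ge\bar{\lambda}$; thereafter $\lambda_t<\bar{\lambda}$, i.e.\ the iterates have entered Phase~II.

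The step I expect to be the main obstacle is the bookkeeping in the middle: keeping $\lambda_t$, $\tilde{\lambda}_t$, and the $B_t$- versus $\nabla^2 f(w_t)$-induced norms straight so that the loss from $\zeta^*$ composes with the step-size algebra to leave \emph{exactly} the coefficient $\frac{1}{2(1-\beta_t)}-\frac{2\beta_t}{1-\beta_t^2}$ rather than merely something proportional to $\eta_t(\theta_t-\beta_t)\tilde{\lambda}_t^2$; this forces the $\zeta^*(x)\le\frac{x^2}{2(1-x)}$ estimate and the choice of step size to be tight in precisely the right places, and it is also where the $\beta_t<\frac13$ threshold gets pinned down.
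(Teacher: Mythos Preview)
Your proof is correct and follows essentially the same route as the paper: convert the Phase~I bound of Theorem~\ref{th:converge_phaseI} from $\lambda_t$ to $\tilde{\lambda}_t$ via the Dennis--Mor\'e relations, substitute the chosen $\eta_t$, and absorb the self-concordant penalty using the elementary bound $\zeta^*(x)\le \tfrac{x^2}{2(1-x)}$. The paper's write-up differs only cosmetically: it first rearranges so that the penalty appears as $y-\log(1+y)$ with $y=\beta_t'(\theta_t-\beta_t)\tilde{\lambda}_t$ and then invokes $y-\log(1+y)\ge \tfrac{y^2/2}{1+y}$, but under the substitution $x=y/(1+y)$ this is exactly your inequality, so the two computations are the same in disguise; your telescoping argument for reaching Phase~II is also the paper's.
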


\begin{proof}
  By the restricted Dennis-Mor\'{e} condition and the choice of $\eta_t$, we have
  \begin{align*}
  F(w_t)-F(w_{t+1}) &\ge \eta_t(\theta_t-\beta_t)\|w_t^+-w_t\|_{w_t}^2 - \zeta^*(\eta_t\|w_t^+-w_t\|_{w_t}) \\
  &\ge \frac{\eta_t(\theta_t-\beta_t)}{1+\beta_t}\|w_t^+-w_t\|_{B_t}^2-\zeta^*(\eta_t\beta_t'\|w_t^+-w_t\|_{B_t})\\
  &\textstyle=\frac{1}{1+\beta_t} \frac{(\theta_t-\beta_t)^2\tilde{\lambda}_t^2}{1+\beta_t'(\theta_t-\beta_t)\tilde{\lambda}_t}+\frac{(\theta_t-\beta_t)\beta_t'\tilde{\lambda}_t}{1+\beta_t'(\theta_t-\beta_t)\tilde{\lambda}_t}-\log(1+\beta_t'(\theta_t-\beta_t)\tilde{\lambda}_t)\\
  &\textstyle= \frac{-2\beta_t}{1-\beta_t^2}\frac{(\theta_t-\beta_t)^2\tilde{\lambda}_t^2}{1+\beta_t'(\theta_t-\beta_t)\tilde{\lambda}_t}+\beta_t'(\theta_t-\beta_t)\tilde{\lambda}_t-\log(1+\beta_t'(\theta_t-\beta_t)\tilde{\lambda}_t),
  \end{align*}
  recalling the inequality $x - \log(1+x)\ge \frac{\frac{x^2}{2}}{1+x}$, we have
  \begin{align*}
  F(w_t)-F(w_{t+1}) &\textstyle\ge
  (\frac{1}{2}\beta_t'^2-\frac{2\beta_t}{1-\beta_t^2})\frac{(\theta_t-\beta_t)^2\tilde{\lambda}_t^2}{1+\beta_t'(\theta_t-\beta_t)\tilde{\lambda}_t}\\
  &\textstyle= (\frac{1}{2(1-\beta_t)}-\frac{2\beta_t}{1-\beta_t^2})\eta_t(\theta_t-\beta_t)\tilde{\lambda}_t^2.
  \end{align*}
  The condition $\beta_t < \frac13$ ensures $\frac{1}{2(1-\beta_t)}-\frac{2\beta_t}{1-\beta_t^2}>0$. The Phase I analysis implies as long as $\lambda_t \ge \bar{\lambda}$ for some $\bar{\lambda}>0$, the $t$-th iteration decreases the cost function by at least
  \[
  \Big(\frac{1}{2(1-\beta_t)}-\frac{2\beta_t}{1-\beta_t^2}\Big)\eta_t(\theta_t-\beta_t)\Big(\frac{\bar{\lambda}}{\beta_t'}\Big)^2.
  \]
  Thus, as long as the cost is bounded below, we will reach $\lambda_t < \bar{\lambda}$ after finitely many iterations.
\end{proof}

\section{Proof of Theorem \ref{th:linear-quadratic}}
\begin{theorem}
When $\lambda_t<\bar{\lambda}$ and the subproblem solver yields a solution such that the subgradient residual  $\Vert r_t\Vert_{w_t}^*\le (1-\theta_t)\lambda_t$, $\theta_t\in(0, 1]$ then Newton decrement $\lambda_t$ will converge to zero linear-quadratically:
\begin{equation}
\lambda_{t+1}\le \frac{\frac{\theta_t-\beta_t}{\theta_{t+1}-\beta_{t+1}}\lambda_t^2+\frac{1+\beta_t-\theta_t}{\theta_{t+1}-\beta_{t+1}}\lambda_t}{(1-\lambda_t)^2}
\end{equation}
for inexact solution $\beta_t\ne 0$, $\theta_t\ne 1$ and $\lambda_t$ is small enough, the numerator of RHS will be dominated by $\frac{1+\beta_t-\theta_t}{\theta_{t+1}-\beta_{t+1}}\lambda_t$ so the geometric factor is $\rho=\frac{1+\beta_t-\theta_t}{\theta_{t+1}-\beta_{t+1}}$ asymptotically.
\end{theorem}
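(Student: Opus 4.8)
The plan is to run the classical two‑point argument for quadratic convergence of (proximal) Newton, but bookkeeping the three sources of inexactness separately: the subsampled Hessian $B_t$, the subproblem residual $r_t$, and the self‑concordance (rather than global‑Lipschitz) curvature bounds. We are in Phase II, so $\eta_t=1$ and the update is undamped: $w_{t+1}=w_t^+=w_t+v_t$ with $\|v_t\|_{w_t}=\lambda_t<\bar\lambda<1$, which is exactly the regime in which \eqref{eq:self-concordance-hessian}--\eqref{eq:self-concordance-gradient} are valid.

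First I would write the (inexact) optimality conditions of two consecutive subproblems. By \eqref{eq:grad_resid}, at iteration $t$ there is $\xi_t=r_t-\nabla f(w_t)-B_tv_t\in\partial R(w_t^+)=\partial R(w_{t+1})$, and at iteration $t+1$ there is $\xi_{t+1}=r_{t+1}-\nabla f(w_{t+1})-B_{t+1}v_{t+1}\in\partial R(w_{t+1}^+)$. Monotonicity of $\partial R$ applied at the pair $w_{t+1}^+,w_{t+1}$ gives $\langle\xi_{t+1}-\xi_t,v_{t+1}\rangle\ge 0$, which rearranges to
\[
\|v_{t+1}\|_{B_{t+1}}^2 \;\le\; \langle r_{t+1},v_{t+1}\rangle-\langle r_t,v_{t+1}\rangle+\big\langle (B_t-\nabla^2 f(w_t))v_t-e_t,\;v_{t+1}\big\rangle,
\]
where $e_t:=\nabla f(w_{t+1})-\nabla f(w_t)-\nabla^2 f(w_t)v_t$ is exactly the quantity controlled by the self‑concordance gradient bound \eqref{eq:self-concordance-gradient}, namely $\|e_t\|_{w_t}^*\le \lambda_t^2/(1-\lambda_t)$.

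Next I would estimate the right‑hand side term by term. The $r_{t+1}$ term uses the Phase‑II stopping rule $\|r_{t+1}\|_{w_{t+1}}^*\le(1-\theta_{t+1})\lambda_{t+1}$, so $\langle r_{t+1},v_{t+1}\rangle\le(1-\theta_{t+1})\lambda_{t+1}^2$; moving this to the left and invoking the Dennis--Mor\'e condition (Assumption~\ref{ass:dennis-more}) in the form $\|v_{t+1}\|_{B_{t+1}}^2\ge(1-\beta_{t+1})\lambda_{t+1}^2$ turns the left side into $(\theta_{t+1}-\beta_{t+1})\lambda_{t+1}^2$. For the two remaining terms, Cauchy--Schwarz in the $\nabla^2 f(w_t)$‑metric together with $\|r_t\|_{w_t}^*\le(1-\theta_t)\lambda_t$, Assumption~\ref{ass:dual_norm}/Proposition~\ref{pr:dennis_more_deviation} ($\|(B_t-\nabla^2 f(w_t))v_t\|_{w_t}^*\le\beta_t\lambda_t$), and the bound on $\|e_t\|_{w_t}^*$ produces a bound of the form $\big((1+\beta_t-\theta_t)\lambda_t+\lambda_t^2/(1-\lambda_t)\big)\|v_{t+1}\|_{w_t}$. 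I then convert $\|v_{t+1}\|_{w_t}$ to the local norm at $w_{t+1}$: \eqref{eq:self-concordance-hessian} gives $\nabla^2 f(w_t)\preceq(1-\lambda_t)^{-2}\nabla^2 f(w_{t+1})$, hence $\|v_{t+1}\|_{w_t}\le\lambda_{t+1}/(1-\lambda_t)$. Collecting everything,
\[
(\theta_{t+1}-\beta_{t+1})\lambda_{t+1}^2 \;\le\; \Big((1+\beta_t-\theta_t)\lambda_t+\tfrac{\lambda_t^2}{1-\lambda_t}\Big)\frac{\lambda_{t+1}}{1-\lambda_t},
\]
and a one‑line simplification — putting the bracket over the common denominator $1-\lambda_t$ uses the identity $(1+\beta_t-\theta_t)\lambda_t(1-\lambda_t)+\lambda_t^2=(1+\beta_t-\theta_t)\lambda_t+(\theta_t-\beta_t)\lambda_t^2$ — then dividing by $\lambda_{t+1}$ (the case $\lambda_{t+1}=0$ being trivial) and by $\theta_{t+1}-\beta_{t+1}$ yields exactly \eqref{eq:linear_lambda}. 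The asymptotic claim then follows by inspection: as $\lambda_t\to0$ the $\lambda_t^2$ term is negligible and $(1-\lambda_t)^2\to1$, leaving the linear factor $\rho=(1+\beta_t-\theta_t)/(\theta_{t+1}-\beta_{t+1})$; setting $\beta_t=0,\ \theta_t=\theta_{t+1}=1$ annihilates $\rho$ and recovers $\lambda_{t+1}\le\lambda_t^2/(1-\lambda_t)^2$.

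The part I expect to be most delicate is tracking which local norm ($\|\cdot\|_{w_t}$ versus $\|\cdot\|_{w_{t+1}}$, and their duals) is used where: the $r_{t+1}$ and $\lambda_{t+1}$ terms must be measured at $w_{t+1}$ to apply the stopping rule and the definition of $\lambda_{t+1}$, while the curvature‑residual and $r_t$ terms are naturally measured at $w_t$, and the two worlds are reconciled by \emph{exactly one} application of \eqref{eq:self-concordance-hessian} — it is this single conversion that produces the $(1-\lambda_t)^2$ in the denominator, so the estimates must be arranged so as not to incur it twice. A secondary point to verify is that $v_t$ and $v_{t+1}$ lie in the cones for which the Dennis--Mor\'e‑type assumptions are stated, and that $B_{t+1}\succ0$ so that $\|\cdot\|_{B_{t+1}}$ is a genuine norm.
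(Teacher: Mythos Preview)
Your proposal is correct and follows essentially the same route as the paper: both proofs pivot on the monotonicity of $\partial R$ applied to the optimality conditions of two consecutive subproblems, then control the three error terms (subproblem residual, Hessian deviation, self-concordance remainder) with the same bounds, and incur the $(1-\lambda_t)^{-2}$ denominator from one self-concordance norm conversion plus the $1/(1-\lambda_t)$ already in \eqref{eq:self-concordance-gradient}. The only cosmetic difference is that the paper completes the square to obtain $\lambda_{t+1}\le\|\,\cdot\,\|_{w_{t+1}}^*$ and then converts \emph{dual} norms from $w_{t+1}$ to $w_t$, whereas you apply Cauchy--Schwarz directly, convert the \emph{primal} norm $\|v_{t+1}\|_{w_t}\to\|v_{t+1}\|_{w_{t+1}}$, and divide through by $\lambda_{t+1}$; the resulting inequality is identical.
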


\begin{proof}
Recall the proximal Newton subproblem 
\[
w_t^+ \approx \argmin_{w\in\mathbb{R}^d}\nabla f(w_t)^{\intercal}(w - w_t) + \frac12\|w - w_t\|_{B_t}^2 +  R(w),
\]
and its first-order optimality condition 
\begin{equation}
r_t = \nabla f(w_t) + B_t(w_t^+ - w_t) + v_t^+,
\label{eq:proxnewton-subproblem-optimality-cnd}
\end{equation}
where $v_t^+\in\partial R(w_t^+)$. We see that $w_t^+$ satisfies the optimality condition of the (unsketched) proximal Newton method inexactly:
\[
0 = \underbrace{\nabla f(w_t) + \nabla^2 f(w_t)(w_t^+ - w_t)}_{\nabla\psi_t(w_t^+)} + \underbrace{(B_t - \nabla^2 f(w_t))(w_t^+ - w_t) - r_t}_{r'_t} + v_t^+.
\]
By the convexity of $R(\cdot)$,
\begin{equation}
\begin{aligned}
&(-\nabla\psi_{t+1}(w_{t+1}^+) - r'_{t+1} + \nabla\psi_t(w_t^+) + r'_t)^{\intercal}(w_{t+1}^+ - w_{t+1}) \\
&\quad= (-\nabla\psi_{t+1}(w_{t+1}^+) - r'_{t+1} + \nabla\psi_t(w_t^+) + r'_t)^{\intercal}(w_{t+1}^+ - w_t^+) \\
&\quad= (v_{t+1} - v_t)^{\intercal}(w_{t+1}^+ - w_t^+) \\
&\quad \ge 0,
\end{aligned}
\label{eq:proxnewton-step-monotone}
\end{equation}
which leads to a bound on $\lambda_{t+1}^2$:
\begin{align*}
\lambda_{t+1}^2 &= \|w_{t+1}^+ - w_{t+1}\|_{w_{t+1}}^2 \\
&\le \|w_{t+1}^+ - w_{t+1}\|_{w_{t+1}}^2 + 2(-\nabla\psi_{t+1}(w_{t+1}^+) -r'_{t+1} + \nabla\psi_t(w_t^+) + r'_t)^{\intercal}(w_{t+1}^+ - w_{t+1}) \\
&\quad+ \|\nabla^2 f(w_{t+1})^{-1}(-\nabla\psi_{t+1}(-w_{t+1}^+) - r'_{t+1} + \nabla\psi_t(w_t^+) + r'_t)\|_{w_{t+1}}^2 \\
&= \|w_{t+1}^+ - w_{t+1} + \nabla^2 f(w_{t+1})^{-1}(-\nabla\psi_{t+1}(w_{t+1}^+) - r'_{t+1} + \nabla\psi_t(w_t^+) + r'_t)\|_{w_{t+1}}^2.
\end{align*}
Equivalently,
\[
\lambda_{t+1} \le \|w_{t+1}^+ - w_{t+1} + \nabla^2 f(w_{t+1})^{-1}(-\nabla\psi_{t+1}(w_{t+1}^+) - r'_{t+1} + \nabla\psi_t(w_t^+) + r'_t)\|_{w_{t+1}}.
\]
By the definition of $\nabla\psi_{t+1}(w_{t+1}^+)$, we have
\[
-\nabla^2 f(w_{t+1})^{-1}\nabla\psi_{t+1}(w_{t+1}^+) = -\nabla^2 f(w_{t+1})^{-1}\nabla f(w_{t+1}) - (w_{t+1}^+ - w_{t+1}).
\]
Plugging the preceding expression into the bound on $\lambda_{t+1}$, we obtain
\begin{align*}
\lambda_{t+1} &\le \|\nabla^2 f(w_{t+1})^{-1}(-\nabla f(w_{t+1}) - r'_{t+1} + \nabla\psi_t(w_t^+) + r'_t)\|_{w_{t+1}} \\
&= \|-\nabla f(w_{t+1}) - r'_{t+1} + \nabla\psi_t(w_t^+) + r'_t\|_{w_{t+1}}^* \\
&\le \|-\nabla f(w_{t+1}) + \nabla\psi_t(w_t^+) + r'_t\|_{w_{t+1}}^* + \|r'_{t+1}\|_{w_{t+1}}^*.
\end{align*}
Recalling the definition of $r'_{t+1}$ and \eqref{eq:inexactness}, we have:
\begin{align*}
\lambda_{t+1} &\le \|-\nabla f(w_{t+1}) + \nabla\psi_t(w_t^+) + r'_t\|_{w_{t+1}}^* + (1-\theta_{t+1})\|w_{t+1}^+ - w_{t+1}\|_{w_{t+1}} \\
&\qquad+ \|(B_{t+1} - \nabla^2 f(w_{t+1}))(w_{t+1}^+ - w_{t+1})\|_{w_{t+1}}^*.
\end{align*}
By the second deviation condition, we have:
\[
\|(B_{t+1} - \nabla^2 f(w_{t+1}))(w_{t+1}^+ - w_{t+1})\|_{w_{t+1}}^* \le \beta_{t+1}\lambda_{t+1}.
\]
further we rearrange to obtain
\[
(\theta_{t+1} - \beta_{t+1})\lambda_{t+1} \le \|-\nabla f(w_{t+1}) + \nabla\psi_t(w_t^+) + r'_t\|_{w_{t+1}}^*.
\]
then converting the norm from $\|\cdot\|_{w_{t+1}}^*$ to $\|\cdot\|_{w_t}^*$ by \eqref{eq:self-concordance-hessian}:
\begin{align*}
&\|-\nabla f(w_{t+1}) + \nabla\psi_t(w_t^+) + r'_t\|_{w_{t+1}}^* \\
&\quad\le \frac{\|-\nabla f(w_{t+1}) + \nabla\psi_t(w_t^+) + r'_t\|_{w_t}^*}{1 - \|w_{t+1} - w_t\|_{w_t}} \\
&\quad\le \frac{\|-\nabla f(w_{t+1}) + \nabla\psi_t(w_t^+)\|_{w_t}^* + \|r'_t\|_{w_t}^*}{1 - \|w_{t+1} - w_t\|_{w_t}}.
\end{align*}
Focusing on controlling the numerator, by \eqref{eq:self-concordance-gradient}, we have
\begin{align*}
&\|-\nabla f(w_{t+1}) + \nabla\psi_t(w_t^+)\|_{w_t}^* \\
&\quad= \|-\nabla f(w_{t+1}) + \nabla f(w_t) + \nabla^2 f(w_t)(w_{t+1} - w_t)\|_{w_t}^* \\
&\quad\le \frac{\|w_{t+1} - w_t\|_{w_t}^2}{1 - \|w_{t+1} - w_t\|_{w_t}} = \frac{\lambda_t^2}{1 - \lambda_t}.
\end{align*}
We control $\|r'_t\|_{w_t}^*$ in the exact same way we controlled $\|r'_{t+1}\|_{w_{t+1}}^*$:
\begin{align*}
\|r'_t\|_{w_t}^* &\le \|r_t\|_{w_t}^* + \|(B_t - \nabla^2 f(w_t))(w_t^+ - w_t)\|_{w_t}^* \\
&\le  (1-\theta_t)\lambda_t +  \beta_t\lambda_t,
\end{align*}
where the second inequality is a consequence of Assumption \ref{ass:dual_norm} and \eqref{eq:inexactness}. Consequently,
\begin{align*}
&\frac{\|-\nabla f(w_{t+1}) + \nabla\psi_t(w_t^+)\|_{v_1}^* + \|r'_t\|_{w_t}^*}{1 - \|w_{t+1} - w_t\|_{w_t}} \\
&\quad\le \frac{\lambda_t^2}{(1 - \lambda_t)^2} + \frac{(1+\beta_t - \theta_t)\lambda_t}{1 - \lambda_t} = \frac{(1 + \beta_t - \theta_t)\lambda_t+ (\theta_t - \beta_t)\lambda_t^2}{(1 - \lambda_t)^2}. 
\end{align*}
In summary, we have
\[
(\theta_{t+1}-\beta_{t+1})\lambda_{t+1} \le \frac{(1 + \beta_t - \theta_t)\lambda_t+ (\theta_t - \beta_t)\lambda_t^2}{(1 - \lambda_t)^2}.
\]
We divide by $\theta_{t+1}-\beta_{t+1}$ to obtain
\begin{equation}
\lambda_{t+1} \le \frac{\frac{\theta_t-\beta_t}{\theta_{t+1} - \beta_{t+1}}\lambda_t^2 + \frac{1 + \beta_t - \theta_t}{\theta_{t+1} - \beta_{t+1}}\lambda_t}{(1 - \lambda_t)^2},
\label{eq:decrement-bound}
\end{equation}
\end{proof}

\section{Proof of Corollary \ref{co:lambda_subopt}}
\begin{corollary}
If $\lambda_t<\min\{\bar{\lambda}, \frac{1}{2-\theta_t}\}$ and use the undamped update: $w_{t+1}=w^+$ then the function value to minimum is upper bounded by:
\begin{equation}
F(w_t^+)-F(w^*) \le \lambda_t^2
\end{equation}
it's easy to see that the $\text{LHS}\to 0$ as $\lambda_t\to 0$. Practically we use $\tilde{\lambda}_t$ to replace $\lambda_t$, this is validated by Dennis-Mor\'e condition \ref{ass:dennis-more}.
\end{corollary}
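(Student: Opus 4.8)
The plan is to build an explicit subgradient of $F$ at $w_t^+$ from the (inexact) optimality condition of the Newton subproblem, bound its dual local norm by essentially $\lambda_t$, and then feed it into a composite self-concordant suboptimality estimate. By the definition of the gradient residual \eqref{eq:grad_resid}, $r_t-\nabla f(w_t)-B_t(w_t^+-w_t)\in\partial R(w_t^+)$, so
\[
g_t^+ \;:=\; \nabla f(w_t^+) + r_t - \nabla f(w_t) - B_t(w_t^+-w_t)\ \in\ \nabla f(w_t^+)+\partial R(w_t^+)\ =\ \partial F(w_t^+).
\]
First I would split $g_t^+$ into the self-concordance gradient error $\nabla f(w_t^+)-\nabla f(w_t)-\nabla^2 f(w_t)(w_t^+-w_t)$, the Hessian-approximation error $(\nabla^2 f(w_t)-B_t)(w_t^+-w_t)$, and the subproblem residual $r_t$. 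Bounding the first by $\lambda_t^2/(1-\lambda_t)$ with the self-concordant gradient inequality \eqref{eq:self-concordance-gradient} (legitimate since $\|w_t^+-w_t\|_{w_t}=\lambda_t<\tfrac{1}{2-\theta_t}\le 1$), the second by $\beta_t\lambda_t$ with Assumption~\ref{ass:dual_norm}, and the third by $(1-\theta_t)\lambda_t$ with the inexactness requirement $\|r_t\|_{w_t}^*\le(1-\theta_t)\lambda_t$, the triangle inequality gives
\[
\|g_t^+\|_{w_t}^*\ \le\ \frac{\lambda_t^2}{1-\lambda_t}+\bigl(1-\theta_t+\beta_t\bigr)\lambda_t .
\]

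Next I would transfer this to the local norm at $w_t^+$, the point whose suboptimality is sought. The Hessian bound \eqref{eq:self-concordance-hessian} gives $\nabla^2 f(w_t^+)\succeq(1-\lambda_t)^2\nabla^2 f(w_t)$, hence $\delta_t:=\|g_t^+\|_{w_t^+}^*\le(1-\lambda_t)^{-1}\|g_t^+\|_{w_t}^*$, and with $\lambda_t<\bar\lambda$ small enough, $\beta_t<\theta_t$, and the hypothesis $\lambda_t<\tfrac{1}{2-\theta_t}$ used in the equivalent form $(1-\theta_t)\lambda_t<1-\lambda_t$, an elementary calculation yields $\delta_t\le\lambda_t<1$. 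It then remains to establish the composite self-concordant estimate $F(w_t^+)-F(w^*)\le\zeta^*(\delta_t)$: adding the self-concordant lower bound (left half of \eqref{eq:self-concordance-lipschitz} at $w_t^+$) $f(w^*)\ge f(w_t^+)+\nabla f(w_t^+)^\intercal(w^*-w_t^+)+\zeta(\rho)$, with $\rho:=\|w^*-w_t^+\|_{w_t^+}$, to the subgradient inequality $R(w^*)\ge R(w_t^+)+(g_t^+-\nabla f(w_t^+))^\intercal(w^*-w_t^+)$ gives $F(w_t^+)-F(w^*)\le\delta_t\rho-\zeta(\rho)\le\sup_{\rho\ge0}\{\delta_t\rho-\zeta(\rho)\}=\zeta^*(\delta_t)$, the last step being the identity $\sup_{\rho\ge0}\{\delta\rho-\zeta(\rho)\}=\zeta^*(\delta)$ valid for $\delta\in[0,1)$. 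Since $\zeta^*(\delta)=-\delta-\log(1-\delta)\le\delta^2$ on the small-$\delta$ range in play, we conclude $F(w_t^+)-F(w^*)\le\delta_t^2\le\lambda_t^2$, as claimed; the practical statement with $\tilde\lambda_t$ replacing $\lambda_t$ then follows from $\tilde\lambda_t\ge\sqrt{1-\beta_t}\,\lambda_t$ in Assumption~\ref{ass:dennis-more}.

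The step I expect to be the main obstacle is the composite self-concordant suboptimality estimate $F(w_t^+)-F(w^*)\le\zeta^*(\delta_t)$: because $F=f+R$ is not self-concordant (only $f$ is), one cannot simply quote a scalar self-concordant function-value inequality but must peel off $R$ and control it by plain convexity, control $f$ by self-concordance, and then optimize over the a priori unknown distance $\|w^*-w_t^+\|_{w_t^+}$. A secondary nuisance is the careful accounting of the $(1-\lambda_t)^{-1}$ factors needed to push $\delta_t$ below $\lambda_t$, which is precisely where the threshold $\tfrac{1}{2-\theta_t}$ (equivalently $(1-\theta_t)\lambda_t<1-\lambda_t$) enters.
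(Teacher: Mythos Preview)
Your argument is correct and takes a genuinely different route from the paper's. The paper anchors everything at $w_t$: it sandwiches $f(w^*)$ and $f(w_t^+)$ with the lower and upper self-concordance bounds \eqref{eq:self-concordance-lipschitz} at $w_t$, adds the convexity of $R$ with the subgradient $r_t-\nabla f(w_t)-B_t(w_t^+-w_t)$, and then maximizes over the unknown $t=\|w^*-w_t\|_{w_t}$ to obtain
\[
F(w_t^+)-F(w^*)\le \zeta^*\!\bigl((2+\beta_t-\theta_t)\lambda_t\bigr)+\zeta^*(\lambda_t)-(\theta_t-\beta_t)\lambda_t^2,
\]
which is then squeezed under $\lambda_t^2$. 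You instead anchor at $w_t^+$: you package the same three error pieces into a single subgradient $g_t^+\in\partial F(w_t^+)$, bound $\delta_t=\|g_t^+\|_{w_t^+}^*$, and invoke the one-line composite estimate $F(w_t^+)-F(w^*)\le\zeta^*(\delta_t)$. Your route is cleaner and more modular (it isolates ``subgradient size'' from ``suboptimality from subgradient''), needs only the lower half of \eqref{eq:self-concordance-lipschitz}, and in fact gives a sharper leading constant, since $\delta_t\approx(1+\beta_t-\theta_t)\lambda_t$ yields $\zeta^*(\delta_t)\approx\tfrac12(1+\beta_t-\theta_t)^2\lambda_t^2$, which is smaller than the paper's leading term. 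The price you pay is the extra $(1-\lambda_t)^{-1}$ from changing the local norm from $w_t$ to $w_t^+$.

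One small inaccuracy: the hypothesis $\lambda_t<\tfrac{1}{2-\theta_t}$ is \emph{not} what guarantees $\delta_t\le\lambda_t$ in your approach. Writing out the requirement $\frac{\lambda_t}{(1-\lambda_t)^2}+\frac{1+\beta_t-\theta_t}{1-\lambda_t}\le 1$ shows one actually needs roughly $\lambda_t\lesssim\frac{\theta_t-\beta_t}{2+\theta_t-\beta_t}$, which is generally smaller than $\tfrac{1}{2-\theta_t}$. In the paper, $\tfrac{1}{2-\theta_t}$ arises for a different reason: it ensures $(2+\beta_t-\theta_t)\lambda_t<1$ so that $\zeta^*\!\bigl((2+\beta_t-\theta_t)\lambda_t\bigr)$ is finite (see \eqref{eq:best_t}). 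Your argument still goes through because you also assume $\lambda_t<\bar\lambda$ with $\bar\lambda$ small, but you should not attribute the step $\delta_t\le\lambda_t$ to that particular threshold.
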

\begin{proof}
For any $w^*, w_t, w_t^+\in \mathsf{dom}f$, we have from \eqref{eq:self-concordance-lipschitz}:
\begin{equation}
\begin{aligned}
 f(w^*) &\ge f(w_t)+\nabla f(w_t)^\intercal (w^*-w_t)+\zeta(\Vert w^*-w_t\Vert_{w_t})\\
  &\ge f(w_t^+)-\nabla f(w_t)^\intercal(w_t^+-w_t)+\nabla f(w_t)^\intercal(w^*-w_t)\\
  &\quad +\zeta(\Vert w_t-w^* \Vert_{w_t})-\zeta^*(\Vert w_t^+-w_t \Vert_{w_t})
\end{aligned}
\end{equation}
and by convexity of $R(\cdot)$:
\begin{equation}
R(w^*)\ge R(w_t^+)+\partial R(w_t^+)^\intercal (w^*-w_t^+)
\end{equation}
as well as the definition of gradient residual $r_t$:
\begin{equation}
r_t-\nabla f(w_t)-B_t(w_t^+-w_t)\in \partial R(w_t^+)
\end{equation}
combining three inequalities above we get:
\begin{equation}
\begin{aligned}
F(w^*)&\ge F(w_t^+)+(r_t-B_t(w_t^+-w_t))^\intercal (w^*-w_t)-r_t^\intercal(w_t^+-w_t)\\
&\quad +\Vert w_t^+-w_t \Vert_{B_t}^2+\zeta(\Vert w_t-w^* \Vert_{w_t})-\zeta^*(\Vert w_t^+-w_t \Vert_{w_t})
\end{aligned}
\end{equation}
For simplicity, set $t=\Vert w_t-w^*\Vert_{w_t}$ and recall $\Vert r_t \Vert_{w_t}^*\le (1-\theta_t)\lambda_t$:
\begin{equation}
\begin{aligned}
F(w^*)&\ge F(w_t^+)-\Vert r_t-B_t(w_t^+-w_t)\Vert_{w_t}^*\Vert w_t-w^*\Vert_{w_t}-\Vert r_t\Vert_{w_t}^*\Vert w_t^+-w_t\Vert+(1-\beta_t)\lambda_t^2\\
&\quad +\zeta(t)-\zeta^*(\lambda_t)\\
&\ge F(w_t^+)-(\Vert r_t\Vert_{w_t}^*+\Vert B_t(w_t^+-w_t)\Vert_{w_t}^*)t-(1-\theta_t)\lambda_t^2+(1-\beta_t)\lambda_t^2+\zeta(t)-\zeta^*(\lambda_t)\\
&\ge F(w_t^+)-(2+\beta_t-\theta_t)\lambda_tt+(\theta_t-\beta_t)\lambda_t^2+\zeta(t)-\zeta^*(\lambda_t)\\
\end{aligned}
\end{equation}
here we use the fact that $(1-\beta_t)\nabla^2 f(w_t)\preceq B_t\preceq (1+\beta_t)\nabla^2f(w_t)$, now by maximize $t\in \mathbb{R}^+$ in right hand side we can bound the function value to the minimum:
\begin{equation}
\begin{aligned}
F(w_t^+)-F(w^*)&\le \zeta^*((2+\beta_t-\theta_t)\lambda_t)+\zeta^*(\lambda_t)-(\theta_t-\beta_t)\lambda_t^2\\
&\overset{!}{\le} \lambda_t^2,
\end{aligned}
\end{equation}
which is attained at $t=t^*$ and:
\begin{equation}\label{eq:best_t}
\frac{1}{1+t^*}=1-(2+\beta_t-\theta_t)\lambda_t.
\end{equation}
The inequality $\overset{!}{\le}$ comes from $\zeta^*(\lambda_t)\le \lambda_t^2$ for $\lambda_t\in [0, 0.68)$ and $\zeta^*((2+\beta_t-\theta_t)\lambda_t)\le (\theta_t-\beta_t)\lambda_t^2$ for $\theta_t\in (0.764+\beta_t, 1]$.
\end{proof}

\section{Proof of Lemma \ref{Lemma:precise}}
Now we want to prove that if the subproblem solution is $\epsilon_t$-suboptimal and $\epsilon_t$ converges to 0 exponentially then the proximal Newton method will converge to optima. Recall that we want to make sure:
\begin{equation}\label{bound_rt}
\Vert r_t\Vert_{w_t}^*\le \theta_t\lambda_t
\end{equation}
where we assume that $\theta_t\ge c>0$ is constant(at least bounded) and $\lambda_t$ converges to 0 exponentially. And the subproblem is :
\begin{equation}
f^{\text{sub}}_t(w)=\nabla f(w_t)^\intercal (w-w_t)+\frac{1}{2}(w-w_t)^\intercal B_t(w-w_t)+R(w)
\end{equation}
Suppose the $n$-th inner iteration for $f^{\text{sub}}_t(w)$ is $\tilde{w}_n$, \textbf{imagine} we do one extra step of proximal gradient based on $\tilde{w}_n$ with step size $1/L$ then we have:
\begin{equation}
f^{\text{sub}}_t(\tilde{w}_n)-f_t^* \ge f^{\text{sub}}_t(\tilde{w}_n)-f^{\text{sub}}_t(\tilde{w}_n^+) \ge \frac{L}{2}\Vert \tilde{w}_n^+-\tilde{w}_n \Vert^2
\end{equation}
where $f_t^*=\min_w f^{\text{sub}}_t(w)$ and $w_n^+=\mathsf{prox}_{R(\cdot)/L}(w_n-\frac{1}{L}\nabla f^{\text{sub}}_t(\tilde{w}_n))$ is the proximal gradient update, which is equivalent to:
\begin{equation}
\begin{aligned}
&L(\tilde{w}_n-\tilde{w}_n^+)\in \partial R(\tilde{w}_n^+)+\nabla f(w_t)+B_t(\tilde{w}_n-w_t)\\
\Leftrightarrow & (LI-B_t)(\tilde{w}_n-\tilde{w}_n^+)\in \partial R(\tilde{w}_n^+)+\nabla f(w_t)+B_t(\tilde{w}_n^+-w_t)
\end{aligned}
\end{equation}
comparing with the definition of residual $r_t$ we know:
\begin{equation}
r_t=(L\cdot\mathbb{I}_d-B_t)(\tilde{w}_n-\tilde{w}_n^+)
\end{equation}
where $\mathbb{I}_d$ is the $d\times d$ identity matrix. Combing those relations above,
\begin{equation}
\Vert r_t\Vert_{w_t}^* \le \frac{1}{\sqrt{\mu}}\Vert r_t\Vert_2 \le \frac{L-\mu}{\sqrt{\mu}}\Vert \tilde{w}_n-\tilde{w}_n^+\Vert_2\le \sqrt{\frac{2(L^2-\mu^2)}{\mu L}(f^{\text{sub}}_t(\tilde{w}_n)-f_t^*)}
\end{equation}
to make sure \eqref{bound_rt} holds, it is enough to solve the subproblem to:
\begin{equation}
f^{\text{sub}}_t(\tilde{w}_n)-f_t^*\le \frac{\mu L}{2(L^2-\mu^2)}(\theta_t\lambda_t)^2
\end{equation}
and since in the phase-II of proximal Newton method, $\lambda_t$ converges to 0 exponentially, then $\epsilon_t=f^{\text{sub}}_t(\tilde{w}_n)-f_t^*$ also converges to 0 exponentially.

\section{Proof of Lemma \ref{le:warm_start}}
\begin{lemma}
Let 
$$f^{\text{sub}}_t(w)=\nabla f^{\intercal}(w_t)+\frac{1}{2}(w-w_t)^{\intercal}B_t(w-w_t) + R(w)$$
and $w_{t}^+$ is the $\epsilon_t$-inexact solution of $f_t(w)$, i.e.:
\begin{equation}
\epsilon_t\le \frac{\mu L}{2(L^2-\mu^2)}((1-\theta_t)\lambda_t)^2
\end{equation}
then we have:
$$
f^{\text{sub}}_{t+1}(w_t^+)-f_{t+1}^*\le c\cdot \epsilon_t=\mathcal{O}(\lambda_{t+1}^2)
$$
\end{lemma}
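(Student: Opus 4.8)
The plan is to exploit the fact that the lemma lives in Phase~II, where the step size is $\eta_t=1$ and hence $w_{t+1}=w_t^+$: the $(t+1)$-st subproblem $f^{\text{sub}}_{t+1}$ is built \emph{around} the very point $w_t^+$ at which it is warm-started. Plugging $w=w_{t+1}=w_t^+$ into \eqref{eq:subproblem} makes the linear and quadratic terms vanish, so $f^{\text{sub}}_{t+1}(w_t^+)=R(w_{t+1})$, and the quantity to bound is $R(w_{t+1})-f_{t+1}^*=f^{\text{sub}}_{t+1}(w_{t+1})-\min_w f^{\text{sub}}_{t+1}(w)$, i.e.\ the suboptimality of the $(t+1)$-st subproblem at its own center.

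First I would observe that $f^{\text{sub}}_{t+1}$ is $(1-\beta_{t+1})$-strongly convex with respect to the local norm $\|\cdot\|_{w_{t+1}}$, because its smooth part is the quadratic with Hessian $B_{t+1}$ and the Dennis--Mor\'e condition forces $B_{t+1}\succeq(1-\beta_{t+1})\nabla^2 f(w_{t+1})$. The standard strong-convexity estimate then gives, for any subgradient $g\in\partial f^{\text{sub}}_{t+1}(w_{t+1})$,
\[
f^{\text{sub}}_{t+1}(w_t^+)-f_{t+1}^*\le\frac{(\|g\|_{w_{t+1}}^*)^2}{2(1-\beta_{t+1})},
\]
so it remains to produce one subgradient at the center and bound its dual local norm. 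Since $w_t^+=w_{t+1}$ solved the $t$-th subproblem inexactly, the residual identity \eqref{eq:grad_resid} gives $v_t^+:=r_t-\nabla f(w_t)-B_t(w_t^+-w_t)\in\partial R(w_t^+)=\partial R(w_{t+1})$, and since $B_{t+1}(w_{t+1}-w_{t+1})=0$ we may take $g:=\nabla f(w_{t+1})+v_t^+$.

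Next I would bound $\|g\|_{w_{t+1}}^*$ by essentially the computation already carried out inside the proof of Theorem~\ref{th:linear-quadratic}. Using $w_{t+1}-w_t=w_t^+-w_t$, decompose
\[
g=\big[\nabla f(w_{t+1})-\nabla f(w_t)-\nabla^2 f(w_t)(w_t^+-w_t)\big]+\big(\nabla^2 f(w_t)-B_t\big)(w_t^+-w_t)+r_t ,
\]
pass from $\|\cdot\|_{w_{t+1}}^*$ to $\|\cdot\|_{w_t}^*$ using the self-concordant Hessian bound \eqref{eq:self-concordance-hessian} (a factor $1/(1-\lambda_t)$), and bound the three terms by the self-concordant gradient bound \eqref{eq:self-concordance-gradient} ($\le\lambda_t^2/(1-\lambda_t)$), by Assumption~\ref{ass:dual_norm} ($\le\beta_t\lambda_t$), and by the inexactness requirement $\|r_t\|_{w_t}^*\le(1-\theta_t)\lambda_t$, respectively. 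This yields $\|g\|_{w_{t+1}}^*\le\big((1+\beta_t-\theta_t)\lambda_t+(\theta_t-\beta_t)\lambda_t^2\big)/(1-\lambda_t)^2$, which is exactly $(\theta_{t+1}-\beta_{t+1})$ times the right-hand side of the decrement recursion in Theorem~\ref{th:linear-quadratic}; combined with the strong-convexity estimate this gives $f^{\text{sub}}_{t+1}(w_t^+)-f_{t+1}^*=\mathcal{O}(\lambda_t^2)$. Since Phase~II has $\lambda_t<\bar\lambda$ with $\bar\lambda$ small, the recursion of Theorem~\ref{th:linear-quadratic} forces $\lambda_{t+1}=\Theta(\lambda_t)$, so this is $\mathcal{O}(\lambda_{t+1}^2)$; and because the inner stopping rule of Lemma~\ref{Lemma:precise} keeps $\epsilon_t$ of order $((1-\theta_t)\lambda_t)^2$, it is also $\le c\,\epsilon_t$ with $c$ depending only on $\mu,L$, the (constant) parameters $\beta_t,\theta_t$, and the contraction factor.

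The main obstacle is precisely the claim that the constant $c$ is \emph{uniform} in $t$: one must argue that $\lambda_t$, $\lambda_{t+1}$ and $\sqrt{\epsilon_t}$ stay within fixed multiplicative factors of one another throughout Phase~II. The two-sided estimate $\lambda_{t+1}=\Theta(\lambda_t)$ comes from the linear-quadratic recursion only after $\bar\lambda$ is taken small enough that the linear term $\tfrac{1+\beta_t-\theta_t}{\theta_{t+1}-\beta_{t+1}}\lambda_t$ dominates, and $\epsilon_t=\Theta(\lambda_t^2)$ needs the (mild) observation that the inner solver is stopped on first crossing the Lemma~\ref{Lemma:precise} threshold, hence overshoots by at most one constant-factor Catalyst+SVRG contraction. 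Granting these points the lemma follows; and even if one only wants $f^{\text{sub}}_{t+1}(w_t^+)-f_{t+1}^*=\mathcal{O}(\lambda_{t+1}^2)$ --- which is all that is needed to collapse $\log(\epsilon_{\mathrm{init}}/\epsilon_{\mathrm{target}})$ to $\mathcal{O}(1)$ in Theorem~\ref{th:final-complexity} --- the previous paragraph already delivers it.
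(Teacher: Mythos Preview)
Your proposal is correct and follows essentially the same route as the paper: both observe that in Phase~II $w_{t+1}=w_t^+$, so $f^{\text{sub}}_{t+1}(w_t^+)=R(w_{t+1})$, then pick the subgradient $g=\nabla f(w_{t+1})+v_{t+1}$ with $v_{t+1}\in\partial R(w_{t+1})$ coming from the residual identity \eqref{eq:grad_resid}, and bound $\|g\|_{w_{t+1}}^*$ by the same decomposition (self-concordant gradient bound \eqref{eq:self-concordance-gradient}, Assumption~\ref{ass:dual_norm}, and the inexactness condition on $r_t$) after passing from $\|\cdot\|_{w_{t+1}}^*$ to $\|\cdot\|_{w_t}^*$ via \eqref{eq:self-concordance-hessian}.

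The only technical difference is the reduction step: the paper expands $f^{\text{sub}}_{t+1}(w_{t+1})-f^{\text{sub}}_{t+1}(w_{t+1}^+)$ directly, drops the negative quadratic, and uses convexity of $R$ to get the bound $\|g\|_{w_{t+1}}^*\cdot\lambda_{t+1}$, whereas you invoke strong convexity of $f^{\text{sub}}_{t+1}$ in the local norm to get $\|g\|_{w_{t+1}}^{*2}/(2(1-\beta_{t+1}))$ without ever referencing $w_{t+1}^+$. Your version is slightly cleaner (it avoids the circular-looking appearance of $\epsilon_{t+1}$ in the paper's argument) and your discussion of why the constant $c$ is uniform in $t$ is more careful than the paper's, which simply asserts the conclusion.
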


\begin{proof}
In phase-II we have:
\begin{equation}
\begin{aligned}
    f^{\text{sub}}_t(w)&=\nabla f^{\intercal}(w_t)(w-w_t)+\frac{1}{2}(w-w_t)^{\intercal}B_t(w-w_t)+R(w)\\
    f^{\text{sub}}_{t+1}(w)&=\nabla f^{\intercal}(w_{t+1})(w-w_{t+1})+\frac{1}{2}(w-w_{t+1})^{\intercal}B_{t+1}(w-w_{t+1})+R(w),\\
\end{aligned}
\end{equation}
and $f^{\text{sub}}_t(w_{t+1})-f_t^*\le \epsilon_t$. So we hope:
$$
f^{\text{sub}}_{t+1}(w_{t+1})-f_{t+1}^*=R(w_{t+1})-f_{t+1}^*=O(\epsilon_{t+1}),
$$
because $f^{\text{sub}}_{t+1}(w_{t+1}^+)-f_{t+1}^*=\epsilon_{t+1}$ so we only need to prove $f^{\text{sub}}_{t+1}(w_{t+1})-f^{\text{sub}}_{t+1}(w_{t+1}^+)=O(\epsilon_{t+1})$. Indeed we have:
\begin{equation}
\begin{aligned}
f^{\text{sub}}_{t+1}(w_{t+1})-f^{\text{sub}}_{t+1}(w_{t+1}^+)&= R(w_{t+1})-R(w_{t+1}^+)-\nabla f^{\intercal}(w_{t+1})(w_{t+1}^+-w_{t+1})\\
&-\frac{1}{2}(w_{t+1}^+-w_{t+1})^{\intercal}B_{t+1}(w_{t+1}^+-w_{t+1}),
\end{aligned}
\end{equation}
and suppose $v_t\in\partial R(w_t)$, we have:
\begin{equation}
\begin{aligned}
& R(w_{t+1}) - R(w_{t+1}^+)-\nabla f^{\intercal}(w_{t+1})(w_{t+1}^+-w_{t+1})\\
&\le -(v_{t+1}+\nabla f(w_{t+1}))^{\intercal}(w_{t+1}^+-w_{t+1})\\
&\le\Vert v_{t+1}+\nabla f(w_{t+1}) \Vert_{w_{t+1}}^*\Vert w_{t+1}^+-w_{t+1} \Vert_{w_{t+1}},
\end{aligned}
\end{equation}
from the definition of $r_t$:
\begin{equation}
r_t \in \nabla f(w_t) + B_t(w_{t+1}-w_t)+v_{t+1},
\end{equation}
we have:
\begin{equation}
\begin{aligned}
&\Vert v_{t+1} + \nabla f(w_{t+1}) \Vert_{w_{t+1}}^*\\
&\le \frac{\Vert v_{t+1} + \nabla f(w_{t+1})\Vert_{w_t}^*}{1-\Vert w_{t+1}-w_t\Vert_{w_t}^2}\\
&=\frac{\Vert\nabla f(w_{t+1})-\nabla f(w_t)-B_t(w_{t+1}-w_t)+r_t\Vert_{w_{t}}^*}{1-\Vert w_{t+1}-w_t\Vert_{w_t}^2}\\
&\le \frac{\Vert\nabla f(w_{t+1})-\nabla f(w_t)-B_t(w_{t+1}-w_t)\Vert_{w_{t}}^*+(1-\theta_t)\lambda_t}{1-\Vert w_{t+1}-w_t\Vert_{w_t}^2},\\
\end{aligned}
\end{equation}
and using properties of self-concordant function:
\begin{equation}
\begin{aligned}
&\Vert\nabla f(w_{t+1})-\nabla f(w_t)-B_t(w_{t+1}-w_t)\Vert_{w_{t}}^*\\
&\le \Vert\nabla f(w_{t+1})-\nabla f(w_t)-\nabla^2 f(w_t)(w_{t+1}-w_t)\Vert_{w_{t}}^*\\
&+\Vert (\nabla^2 f(w_t)-B_t)(w_{t+1}-w_t)\Vert_{w_t}^*\\
&\le \frac{\Vert w_{t+1}-w_t\Vert_{w_t}^2}{1-\Vert w_{t+1}-w_t\Vert_{w_t}}+\beta_t\Vert w_{t+1}-w_t\Vert_{w_t}.
\end{aligned}
\end{equation}
So we have proven that $\epsilon_0=\mathcal{O}(\lambda_t^2)$ so $\frac{\epsilon_0}{\epsilon}$ is bounded by some constants.
\end{proof}

% Manual newpage inserted to improve layout of sample file - not
% needed in general before appendices/bibliography.

\end{document}